\documentclass[twoside]{article}

\usepackage[preprint]{aistats2027}

\usepackage[round]{natbib}

\usepackage[utf8]{inputenc} 
\usepackage[T1]{fontenc}    
\usepackage{hyperref}       
\usepackage{url}            
\usepackage{booktabs}       
\usepackage{amsfonts}       
\usepackage{nicefrac}       
\usepackage{microtype}      
\usepackage{xcolor}         

\usepackage{titletoc}

\usepackage{amsmath}
\usepackage{amssymb}
\usepackage{mathtools}
\usepackage{amsthm}
\usepackage{bm}
\usepackage{enumitem}

\usepackage{tikz}
\usetikzlibrary{positioning, arrows.meta, fit, backgrounds, calc, shadows, decorations.pathreplacing}

\usepackage{blkarray}
\usepackage{caption}
\usepackage{multirow}

\usepackage{float} 

\theoremstyle{plain}
\newtheorem{theorem}{Theorem}[section]

\newtheorem{lemma}[theorem]{Lemma}

\theoremstyle{definition}
\newtheorem{definition}[theorem]{Definition}
\newtheorem{assumption}[theorem]{Assumption}
\theoremstyle{remark}
\newtheorem{remark}[theorem]{Remark}

\definecolor{InternalLink}{HTML}{001473}
\definecolor{CitationLink}{HTML}{007A87}
\definecolor{ExternalLink}{HTML}{8A3B72}

\hypersetup{
    colorlinks=true,
    linkcolor=InternalLink,
    citecolor=CitationLink,
    urlcolor=ExternalLink
}

\usepackage[most]{tcolorbox}

\definecolor{acadblue}{RGB}{238,244,255}

\tcbset{
    blue_style/.style={
        enhanced,
        colback=acadblue,
        colframe=black,
        boxrule=0.5pt,
        arc=2pt,
        auto outer arc,
        boxsep=0pt,
        left=10pt,
        right=10pt,
        top=2pt,
        bottom=2pt,
        before skip=10pt,
        after skip=10pt
    }
}

\begin{document}

\runningtitle{Minimax Additive Regression under Unknown Dependent Designs}

\runningauthor{Ferrere, Gamboa, Loubes}

\twocolumn[

\aistatstitle{Minimax Additive Regression under Unknown Dependent Designs}

\aistatsauthor{
Baptiste Ferrere\,$^\ast$
\And
Fabrice Gamboa\,$^\dagger$
\And
Jean-Michel Loubes\,$^\dagger$
}

\aistatsaddress{
Université de Toulouse \\ EDF R\&D
\And
Université de Toulouse \\ ANITI
\And
Université de Toulouse \\ ANITI
} ]

\begingroup
\renewcommand{\thefootnote}{\fnsymbol{footnote}}

\footnotetext[1]{%
Corresponding author:
\href{mailto:baptiste.ferrere@edf.fr}
{\texttt{baptiste.ferrere@edf.fr}}%
}
\footnotetext[2]{PhD Advisors.}
\endgroup

\begin{abstract}
We study additive regression under a potentially non-product random design on $[0,1]^d$, allowing the dimension $d$ to grow with the sample size $n$. We introduce coupled smoothness classes that separately control the regularity of the marginal densities and the density-weighted additive components. To handle dependence, we adapt a Riesz-basis construction for functional ANOVA models and establish compatibility bounds with constants independent of the dimension under uniform bounds on the joint density. We construct thresholded least-squares estimators and establish matching minimax upper and lower bounds for prediction with known or unknown marginal densities, under suitable dimension-growth conditions. When the marginal densities are at least as smooth as the weighted components, the unknown-density problem attains the known-density minimax rate. When the densities are less smooth, their regularity determines the minimax rate over the coupled class. Finally, we show that the centered additive components can be recovered at the same aggregate upper rate, without an additional order of error.
\end{abstract}
\section{Introduction}

We consider the random-design nonparametric regression model
\begin{equation}
Y_i \coloneqq f^{\star}(\mathbf X_i)+\xi_i, \qquad i=1,\ldots,n,
\label{eq:regression-model}
\end{equation}
where $\mathbf X_1,\ldots,\mathbf X_n$ are independent copies of a random vector $\mathbf X\coloneqq(X_1,\ldots,X_d)$ supported on $[0,1]^d$ and distributed according to the probability measure $P$. We assume that $P$ admits a continuous density $p$ with respect to the Lebesgue measure. The noise variables $\xi_1,\ldots,\xi_n$ are independent of the design and independently distributed according to $\mathcal N(0,\sigma^2)$. Given the sample ${(\mathbf X_i,Y_i)}_{1 \leq i \leq n}$, our objective is to estimate $f^{\star}$ under the squared $L^2(P)$ loss. This is a classical setting in statistical learning theory and nonparametric regression \citep{tsybakov2009introduction}. In this work, we focus on additive models of the form
\begin{equation}
    f^{\star}(\mathbf x) = f_0^{\star}+\sum_{j=1}^d f^{\star}_j(x_j), \quad \mathbf x \in [0,1]^d,
    \label{eq:additive-model}
\end{equation}
under the standard identifiability condition
\begin{equation}
    \mathbb E_P[f^{\star}_j(X_j)]=0,
    \qquad j=1,\ldots,d.
\end{equation}
Under this normalization, $f^{\star}_0=\mathbb E_P[f^{\star}(\mathbf X)]$. Obviously, the intercept can be estimated at the parametric rate. We focus throughout on the centered case $f^{\star}_0=0$. Distribution-dependent centering has long been used in random-design additive regression and functional ANOVA models. Now, under a product design, it makes the univariate component spaces mutually orthogonal and is commonly imposed in the analysis of sparse additive models \citep{raskutti2012minimax}. More generally, it corresponds to the first-order hierarchical orthogonality constraint in functional ANOVA decompositions under dependent designs \citep{stone1994use,huang1998projection,hooker_2007}. Under a general distribution $P$, the components remain orthogonal to the constants but are no longer mutually orthogonal. Consequently, the stability of the additive decomposition depends on the non-orthogonal geometry induced by the distribution.

Starting from this general distribution-dependent framework, we study the spectral and statistical structure of the resulting non-orthogonal geometry. To this end, we build on the recent Riesz representation of \citet{ferrere2026generalized}. This representation provides stable coordinates for hierarchically orthogonal component spaces under non-product designs. To the best of our knowledge, the statistical estimation of functions in this representation has not been studied when the design distribution is unknown. In particular, existing analyses do not separate the smoothness of the regression function relative to a fixed design from the regularity required to learn the design-adapted representation. We summarize our key contributions below.

\paragraph{Contributions.}
Our main contribution is a rigorous theoretical analysis of additive regression under dependent random designs, allowing the dimension $d$ to increase with the sample size $n$. We separate our analysis into two cases: known and unknown marginal densities. In both settings, we establish minimax optimal rates under suitable dimension-growth conditions.

\begin{itemize}

\item We derive explicit Riesz bounds for the additive representation of \citet{ferrere2026generalized}, with stability constants independent of $d$ under uniform joint-density bounds. These bounds control the geometry induced by dependence among the covariates.

\item When the marginal densities are known, we show that thresholded least squares attains the classical additive minimax rate, with linear dependence on $d$. We establish a matching lower bound for each admissible fixed design distribution.

\item When the marginal densities are unknown, we introduce an estimator based on sample splitting and thresholded least squares in an estimated dictionary. We control its prediction error and recover the centered additive components at the same aggregate upper rate.

\item We develop a lower-bound construction for the \emph{rough-density} regime that varies the design distribution while keeping the density-weighted components fixed. Together with the upper bounds, this establishes when unknown marginals preserve the known-density minimax rate and when their regularity determines the minimax rate.

\end{itemize}

\paragraph{Organization.}
The rest of the paper is organized as follows. Section \ref{sec:preliminaries} introduces all the mathematical objects necessary for the construction of our coupled smoothness class and the corresponding estimators. Section \ref{sec:fixed_density} studies the known-density setting, establishing the minimax optimal rate. Section \ref{sec:unknown_density} addresses unknown marginal densities. Section \ref{subsec:unkown_upper} presents the estimator, the prediction upper bound, and the component-recovery result. Section \ref{subsec:unkown_lower} establishes matching lower bounds in the \emph{smooth} and \emph{rough}-density regimes. Section \ref{sec:discussion} discusses the main results, limitations, and future directions. All proofs are deferred to Appendix.

\paragraph{Related Work.}
Additive regression models were studied early on as a way to retain the
flexibility of nonparametric regression while avoiding its full
$d$-dimensional complexity \citep{stone1985additive},
and were subsequently formalized within the generalized additive model
framework by \citet{hastie1986generalized}. By reducing the estimation of a
multivariate function to that of univariate components, additivity mitigates
the curse of dimensionality: when every component has smoothness $\beta$, the
classical rate $d \ n^{-2\beta/(2\beta+1)}$ can be attained, rather than the $d$-dimensional rate $n^{-2\beta/(2\beta+d)}$.

Most closely related to our setting are backfitting and smooth-backfitting
methods for additive regression under random design \citep{mammen1999existence,horowitz2006optimal}. Under suitable regularity
conditions on the joint design distribution, these methods recover individual
additive components at their univariate oracle rates without requiring
independent covariates. In this literature, assumptions on the marginal and
pairwise design densities ensure the stability of the empirical projection
operators, while component smoothness is defined---apart from the
distribution-dependent centering constraints---in fixed Sobolev, spline, or
kernel spaces. Our framework takes a different starting point: for each design distribution $p$, the associated Riesz representation defines a
design-indexed function class $\mathcal F_\beta(p)$, in which $\beta$ controls the spectral decay of the Riesz coefficients, equivalently the Sobolev-type regularity of $p_jf_j$. We then allow $p$ to vary over a class $\mathcal P_\gamma$, where $\gamma$ controls the regularity of the marginal densities and hence the accuracy with which the corresponding dictionary can be estimated. Thus, the design determines not only the centering constraints and projection geometry, but also the functional coordinates in which signal smoothness is measured.

In high-dimensional settings, additivity is often combined with the
assumption that only a small number of components are active. This leads to
sparse additive models literature and to estimators combining componentwise smoothness with variable selection \citep{meier2009high,ravikumar2009sparse}. Matching minimax bounds have been obtained over Sobolev and kernel classes \citep{raskutti2012minimax,yuan2016minimax,tan2019doubly,
haris2022generalized}, as well as for more general low-order interaction
models \citep{bhattacharya2024deep}.

Additive models have also received renewed attention in machine learning
because their componentwise structure provides a direct and visual form of
interpretability \citep{rudin2019stop}. This principle underlies Neural
Additive Models \citep{agarwal2021neural}, Neural Basis Models
\citep{radenovic2022neural}, NODE-GAM \citep{chang2021node}, and interpretable tree ensembles based on main effects and low-order interactions
\citep{nori2019interpretml,lengerich2020purifying,benard2025tree}. These
developments further motivate a precise understanding of the functional
spaces to which additive components belong (defined later in equation \eqref{eq:first-order-spaces}), particularly when the covariates
are nonuniform and statistically dependent.

\section{Preliminaries}\label{sec:preliminaries}

\paragraph{Notation.}
Let $\lambda$ denote the Lebesgue measure on $[0,1]$. Recall that for any $j \in \{1,\dots,d\}$, we denote by $P_j$ the marginal distribution of $X_j$ and by $p_j$ the corresponding density, such that $ \frac{d P_j}{d \lambda} = p_j $. We denote by $L^2(P)$ the Hilbert space of square-integrable and $P-$measurable functions. For simplicity, we denote by $\| \cdot \|$ the $L^2(P)-$norm and $\| \cdot \|_2$ the Euclidean norm. Throughout the paper, for two nonnegative quantities $a_{n,d}$ and $b_{n,d}$, we write $a_{n,d}\lesssim b_{n,d}$ if there exists a constant $C>0$, independent of $n$ and $d$, such that $a_{n,d}\leq Cb_{n,d}$. We define $\gtrsim$ analogously and write $a_{n,d}\asymp b_{n,d}$ when both inequalities hold.

\subsection{Identifiability condition}

Without further condition, the additive representation in \eqref{eq:additive-model} is not unique, since constants can be transferred between the intercept and the component functions. Two structural assumptions are commonly made. A first possibility is the \emph{Lebesgue centering} $\int_0^1 f_j(x)\,dx=0,$ for all $j \in \{1,\dots,d\}$. It is natural in Gaussian white-noise formulations and when smoothness is defined in a fixed $L^2(\lambda)$ geometry \citep{tsybakov2009introduction,bhattacharya2024deep,liu2025minimaxoptimal}. In this setting, it is natural to assume some Sobolev-type regularity where each component $f_j$ can be expanded in an orthonormal basis of $L^2(\lambda)$ or a kernel. Random-design additive regression instead commonly assumes the following identifiability constraint:
\begin{assumption}[Population centering]\label{assu:population-centering}
    \begin{equation}
\mathbb E_{P}[f_j(X_j)] = 0, \quad j=1,\ldots,d,
\end{equation}
\end{assumption}
see for example \citet{mammen1999existence,ravikumar2009sparse,
raskutti2012minimax,horowitz2006optimal}. This condition is more natural since it is formulated in the Hilbert space equipped with the distribution of the true data. Of course it recovers the Lebesgue centering when each feature is uniform on $[0,1]$. Under Assumption \ref{assu:population-centering}, the functions $f_j$ act as the main effects of a generalized functional ANOVA decomposition \citep{stone1994use,huang1998projection,hooker_2007}.
More fundamentally, let $\mathcal H_0$ be the Hilbert space of constant functions of $L^2(P)$ and, for every $j\in[d]$, define the first-order generalized ANOVA space
\begin{equation}
\mathcal H_j \coloneqq
\left\{ h\in L^2(P_j) \mid \mathbb E_P[h(X_j)]=0 \right\}.
\label{eq:first-order-spaces}
\end{equation}
Each $\mathcal H_j$ is a closed Hilbert subspace of $L^2(P)$. Under some conditions \citep{stone1994use,huang1998projection,hooker_2007,chastaing_generalized_2012} ensuring existence and uniqueness of the generalized functional ANOVA decomposition---satisfied in particular under Assumption \ref{assu:boundness} defined later---the first-order component spaces form a direct sum:
\begin{equation}
\mathcal H \coloneqq \mathcal H_1 \oplus \dots \oplus \mathcal H_d
\label{eq:additive-direct-sum}
\end{equation}
Here, the symbol $\oplus$ emphasizes that the decomposition is unique but
not necessarily orthogonal. Consequently, studying centered additive functions satisfying Assumption~\ref{assu:population-centering} is equivalent to studying elements of $\mathcal H$.

\begin{remark}
If $P$ is a product distribution, the component spaces
$\mathcal H_1,\ldots,\mathcal H_d$ are mutually orthogonal, and
\eqref{eq:additive-direct-sum} becomes an orthogonal direct sum. Under a
dependent design, each $\mathcal H_j$ remains orthogonal to
$\mathcal H_0$ by population centering, but distinct first-order
spaces are generally not mutually orthogonal.
\end{remark}
Except in the uniform setting, where standard orthonormal bases provide an immediate representation, the joint geometry of these component spaces lacked an explicit stable coordinate system. To the best of our knowledge, \citet{ferrere2026generalized} are the first to provide such a characterization through a distribution-adapted Riesz basis based on what they call \emph{inverse marginal-density weighting}. We build on its first-order blocks to define the regularity classes used throughout this paper.

\subsection{Representation of additive components}

In order to adapt the construction of \citet{ferrere2026generalized}, we introduce the following boundness condition on the joint density $p$.
\begin{assumption}\label{assu:boundness}
    Let $p$ be a density probability on $[0,1]^d$. We denote by $p_{\min} \coloneqq \inf\limits_{\mathbf x} p(\mathbf x)$ and $ p_{\max} \coloneqq \sup\limits_{\mathbf x} p(\mathbf x) $. We say that $p$ satisfies this assumption if 
    \begin{equation}
        0 <p_{\min} \leq p \leq p_{\max} < + \infty .
    \end{equation}
\end{assumption}
This is a standard assumption in the literature of nonparametric regression, which will allow to achieve the standard minimax optimal rates without introducing additional dependencies in the dimension $d$.

We then denote by $(\phi_m)_{m \in  \mathbb N}$ the trigonometric basis on $[0,1]$ with $\phi_0 = 1$ (see for example \citet{tsybakov2009introduction}). We can now define the object that will be central to our class of regularity.
\begin{definition}
    Let $j \in \{1,\dots,d\}$ and $m$ be a positive integer, we define the function $\psi_m^{(j)}$ as
    \begin{equation}
        \forall x \in [0,1], \quad \psi_m^{(j)}(x) \coloneqq \frac{ \phi_m(x) }{ p_j(x) }
    \end{equation}
    We introduce the notation $ \Psi \coloneqq \left( (\psi_m^{(j)})_{m \geq 1} \right)_{ j=1,\dots,d } $.
\end{definition}
When working on $[0,1]$, it is more comfortable to use the trigonometric basis. We have the first technical result induced by $\Psi$.

\begin{theorem}\label{thm:riesz}
    Fix a density $p$ that satifies Assumption \ref{assu:boundness}.
    For any finite sequence of real numbers $\bm a \coloneqq \left( (a_m^{(j)})_{m \geq 1} \right)_{j=1,\dots,d}$, we define the function $f_{\bm a}$ as
    \begin{equation}
        f_{\bm a} \coloneqq \sum\limits_{j=1}^d \sum\limits_{m \geq 1} a_m^{(j)}  \psi_m^{(j)}.
    \end{equation}
    The following inequality holds:
    \begin{equation}\label{eq:riesz_ineq}
    C_{\min}^p \| \bm a \|_2^2 \leq \| f_{\bm a} \|^2 \leq C_{\max}^p \| \bm a \|_2^2,
    \end{equation}
    where $C_{\min}^p \coloneqq p_{\min}/p_{\max}^2$ and $ C_{\max}^p \coloneqq p_{\max}/p_{\min}^2 $. We say that $\Psi$ is a Riesz sequence \citep{brezis2011functional}.
\end{theorem}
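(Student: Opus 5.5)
The plan is to reduce both inequalities in \eqref{eq:riesz_ineq} to computations under the Lebesgue measure on $[0,1]^d$. Under that product measure, centered univariate functions of distinct coordinates are orthogonal. Comparing $p$ with the constant density via Assumption~\ref{assu:boundness} then costs only the factors $p_{\min}$ and $p_{\max}$, with no dependence on $d$.

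\textbf{Setup.} For each $j$, set $g_j \coloneqq \sum_{m\ge1} a_m^{(j)}\phi_m$ and $h_j \coloneqq g_j/p_j$, so that $f_{\bm a}(\mathbf x)=\sum_j h_j(x_j)$.
\begin{itemize}
\item Since $(\phi_m)_{m\geq 0}$ is orthonormal in $L^2(\lambda)$ with $\phi_0=1$, we get $\int_0^1 g_j\,d\lambda=0$ and $\int_0^1 g_j^2\,d\lambda=\|\bm a^{(j)}\|_2^2$.
\item Consequently $\mathbb E_P[h_j(X_j)]=\int_0^1 g_j\,d\lambda=0$. Hence $h_j\in\mathcal H_j$ and $\mathbb E_P[f_{\bm a}(\mathbf X)]=0$, so $\|f_{\bm a}\|^2=\operatorname{Var}_P(f_{\bm a})$.
\item Integrating $p$ over the other $d-1$ coordinates of the unit cube gives $p_{\min}\le p_j\le p_{\max}$. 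It follows that
\[
\|\bm a^{(j)}\|_2^2/p_{\max}\;\le\;\int_0^1 g_j^2/p_j\,d\lambda=\|h_j\|_{L^2(P_j)}^2\;\le\;\|\bm a^{(j)}\|_2^2/p_{\min},
\]
and also $\int_0^1 h_j^2\,d\lambda\le \|\bm a^{(j)}\|_2^2/p_{\min}^2$.
\end{itemize}

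\textbf{Lower bound.}
\begin{itemize}
\item From $p\ge p_{\min}$ we get $\|f_{\bm a}\|^2\ge p_{\min}\int_{[0,1]^d}\bigl(\sum_j h_j(x_j)\bigr)^2d\mathbf x$.
\item Let $c_j\coloneqq\int_0^1 h_j\,d\lambda$. Expanding the square under the Lebesgue product measure, the cross terms of the centered parts $h_j-c_j$ vanish. This gives
\[
\int_{[0,1]^d}\Bigl(\sum_j h_j\Bigr)^2 d\mathbf x=\sum_j\operatorname{Var}_\lambda(h_j)+\Bigl(\sum_j c_j\Bigr)^2\;\ge\;\sum_j\operatorname{Var}_\lambda(h_j).
\]
\item Using $p_j\le p_{\max}$, we bound $\operatorname{Var}_\lambda(h_j)=\inf_c\int(h_j-c)^2d\lambda\ge p_{\max}^{-1}\inf_c\int(h_j-c)^2p_j\,d\lambda$.
\item The last infimum equals $\|h_j\|^2_{L^2(P_j)}$, because $h_j$ is $P_j$-centered. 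It is therefore at least $\|\bm a^{(j)}\|_2^2/p_{\max}$.
\item Summing over $j$ yields the lower constant $p_{\min}/p_{\max}^2$.
\end{itemize}

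\textbf{Upper bound.} This is the step I expect to be the main obstacle. A naive bound $\|f_{\bm a}\|^2\le p_{\max}\int(\sum_j h_j)^2d\mathbf x$ leaves the term $(\sum_j c_j)^2$. The $c_j$ need not vanish, since $h_j$ is centered under $P_j$ rather than under $\lambda$, so this term could grow with $d$. The fix is to exploit that $f_{\bm a}$ is $P$-centered.
\begin{itemize}
\item Since $\mathbb E_P[f_{\bm a}]=0$, we have $\|f_{\bm a}\|^2=\operatorname{Var}_P(f_{\bm a})\le\mathbb E_P\bigl[(f_{\bm a}-c)^2\bigr]$ for any constant $c$. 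Take $c=\sum_j c_j$.
\item Then $p\le p_{\max}$ and the same Lebesgue orthogonality as above give
\[
\|f_{\bm a}\|^2\le p_{\max}\sum_j\operatorname{Var}_\lambda(h_j)\le p_{\max}\sum_j\int_0^1 h_j^2\,d\lambda\le \frac{p_{\max}}{p_{\min}^2}\|\bm a\|_2^2,
\]
using the bound $\int_0^1 h_j^2\,d\lambda\le\|\bm a^{(j)}\|_2^2/p_{\min}^2$ from the setup.
\item This is exactly $C_{\max}^p$, and it completes the proof.
\end{itemize}
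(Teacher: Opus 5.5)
Your proof is correct and follows essentially the same route as the paper. Both arguments use $\|f_{\bm a}\|^2=\operatorname{Var}_P(f_{\bm a})$, compare it with $\sum_j\operatorname{Var}_\lambda(h_j)$ via $p_{\min}\le p\le p_{\max}$ and the $\lambda_d$-orthogonality of centered univariate terms, and then compare each component using $p_{\min}\le p_j\le p_{\max}$. The only cosmetic difference is in the upper bound: you bound $\operatorname{Var}_\lambda(h_j)\le\int_0^1 h_j^2\,d\lambda\le\|\bm a^{(j)}\|_2^2/p_{\min}^2$ directly instead of passing through $\|h_j\|_{L^2(P_j)}^2$, and this gives the same constant $C_{\max}^p$.
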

This theorem will be useful to establish the upper bound inequalities. Indeed, it is a compatibility condition between the $L^2(P)$ norm and the Euclidean norm. Furthermore, it is a key property to obtain the next result, which charaterizes the Hilbert spaces $\mathcal H_j$.

\begin{theorem}[Representation theorem]\label{thm:representation}
    For any $f \in \mathcal H$, there exists a unique sequence of real coefficients denoted $\bm \theta \coloneqq ( ( \theta_m^{(j)} )_{m \geq 1} )_{j=1,\dots,d}$ such that each additive component is uniquely represented as
\begin{equation}\label{eq:representation}
    \forall j \in \{1,\dots,d\}, \quad f_j = \sum\limits_{m=1}^{+ \infty}\theta_m^{(j)} \psi_m^{(j)}.
\end{equation}
\end{theorem}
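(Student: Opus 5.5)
The argument works one component at a time. Fix $j$ and $f_j\in\mathcal H_j$. The key observation is that the map $h\mapsto p_j h$ is a bounded bijection from $L^2(P_j)$ onto $L^2(\lambda)$, with bounded inverse, and that it sends $\mathcal H_j$ onto the closed subspace of $L^2(\lambda)$ orthogonal to $\phi_0=1$.

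First, $p_j$ is bounded above and below. Integrating $p$ over the other coordinates gives $p_{\min}\le p_j\le p_{\max}$, using the fact that $[0,1]^{d-1}$ has unit Lebesgue measure. Consequently
\[
\int (p_jh)^2\,d\lambda=\int p_j h^2\, dP_j\le p_{\max}\|h\|^2_{L^2(P_j)},
\]
and the reverse inequality holds with $p_{\min}$. Second, the centering condition transfers directly:
\[
\mathbb E_P[h(X_j)]=\int h\,p_j\,d\lambda=\langle p_jh,\phi_0\rangle_{L^2(\lambda)} .
\]
So $h\in\mathcal H_j$ if and only if $g_j\coloneqq p_jf_j$ lies in $\overline{\mathrm{span}}\{\phi_m:m\ge1\}$.

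For existence, expand $g_j=\sum_{m\ge1}\theta_m^{(j)}\phi_m$ in $L^2(\lambda)$ with $\theta_m^{(j)}=\langle p_jf_j,\phi_m\rangle_{L^2(\lambda)}$. This gives $\sum_m(\theta_m^{(j)})^2<\infty$. Dividing by $p_j$ is bounded from $L^2(\lambda)$ to $L^2(P_j)$, with norm squared at most $p_{\max}/p_{\min}^2$, so the partial sums $\sum_{m\le M}\theta_m^{(j)}\psi_m^{(j)}$ converge to $f_j$ in $L^2(P_j)$. Since $f_j$ depends on $x_j$ only, this is the same as convergence in $L^2(P)$, which yields \eqref{eq:representation}.

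Alternatively, the same convergence follows from Theorem~\ref{thm:riesz} applied to tails of the sequence: it bounds $\|\sum_{M<m\le M'}\theta_m^{(j)}\psi_m^{(j)}\|^2\le C_{\max}^p\sum_{M<m\le M'}(\theta_m^{(j)})^2$, so the partial sums are Cauchy. The summed version over $j$, together with the uniqueness of the decomposition \eqref{eq:additive-direct-sum}, then gives the representation of the whole $f=\sum_j f_j$.

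Uniqueness follows from the lower Riesz bound. Suppose two square-summable sequences represent the same $f_j$. Their difference $\bm b$ satisfies $\sum_m b_m\psi_m^{(j)}=0$. Applying the lower inequality of Theorem~\ref{thm:riesz} to finite truncations and passing to the limit, which is legitimate because the upper bound gives continuity of $\bm a\mapsto f_{\bm a}$ on $\ell^2$, we get $C_{\min}^p\|\bm b\|_2^2\le 0$. More directly, multiplying by $p_j$ gives $\sum_m b_m\phi_m=0$ in $L^2(\lambda)$, and orthonormality of $(\phi_m)$ forces $\bm b=0$. The component $f_j$ is itself uniquely determined by $f$ by the direct-sum property, so the full sequence $\bm\theta$ is unique.

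The only delicate point is extending Theorem~\ref{thm:riesz} from finite sequences to $\ell^2$ sequences and identifying the limits in $L^2(P)$ rather than merely in $L^2(P_j)$. This is routine by density and the norm equivalence. It is also where one should state that uniqueness is claimed within square-summable coefficient sequences, the natural class given the Riesz property.
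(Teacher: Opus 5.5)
Your proof is correct and follows the paper's argument: set $g_j=p_jf_j$, note that it is Lebesgue-centered and in $L^2(\lambda)$, expand it in $(\phi_m)_{m\ge1}$, and show that the partial sums divided by $p_j$ converge to $f_j$, since $\|u/p_j\|_{L^2(P_j)}^2\le p_{\min}^{-1}\|u\|_{L^2(\lambda)}^2$ (this bound is slightly sharper than your $p_{\max}/p_{\min}^2$). Your explicit uniqueness step---multiplying a representation of zero by $p_j$ and using orthonormality---supplies what the paper leaves implicit in the Riesz-sequence property, and it also shows that every convergent representation automatically has square-summable coefficients, so your closing caveat about restricting to $\ell^2$ sequences is unnecessary.
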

Note that the result of the Theorem \ref{thm:riesz} is not stated or proved in the paper of \citet{ferrere2026generalized}, however the result of Theorem \ref{thm:representation} is just an adaptation by replacing Legendre polynomials on $[-1,1]$ by the trigonometric basis on $[0,1]$.

\subsection{Regularity classes}

We distinguish signal regularity at a fixed and known design distribution from the regularity of the marginal densities used to construct the representation.

\begin{definition}[Sobolev ball]
\label{def:coefficient-ellipsoid}
For $\beta>0$ and $R>0$, define
\begin{equation}
\Theta(\beta,R)
:=
\left\{
(\theta_m)_{m\geq1} \mid
\sum_{m\geq 1}
m^{2\beta}\theta_m^2
\leq R^2
\right\}.
\label{eq:coefficient-ellipsoid}
\end{equation}
\end{definition}

Let $f\in\mathcal H$. By Theorem~\ref{thm:representation},
each component $f_j$ has unique Riesz coefficients $\bm \theta$ satisfying the decomposition
\begin{equation}\label{eq:weighted-component}
    g_j \coloneqq p_jf_j = \sum_{m=1}^{\infty}\theta_m^{(j)}\phi_m.
\end{equation}
Observe that $\theta_m^{(j)} = \langle g_j,\phi_m\rangle_{L^2(\lambda)}$ is the Fourier coefficient of frequency $m$ of $g_j$ in the trigonometric basis. Consequently, requiring $(\theta_m^{(j)})_{m\geq1}\in\Theta(\beta,R)$ is exactly a periodic Sobolev ellipsoid constraint on the function $g_j$. The corresponding weighted coefficient norm is equivalent to the usual $H^\beta_{\mathrm{per}}([0,1])$ norm on its mean-zero subspace \citep{tsybakov2009introduction}.

\begin{tcolorbox}[blue_style]
\begin{remark}
Weighted ellipsoid constraints on expansion coefficients are a standard way to encode regularity in nonparametric estimation. The same principle underlies kernel classes, where kernel eigenvalues determine the weights of the coefficient ellipsoid.

Here, we impose such a constraint in our Riesz coordinates. Since $\theta_m^{(j)}$ are exactly the Fourier coefficients of $g_j$, this places $g_j$ in a periodic Sobolev ellipsoid. For a fixed marginal density $p_j$, the resulting class of components $f_j$ is therefore the image of this ellipsoid under the map $g\mapsto g/p_j$. It can be viewed as a Sobolev ellipsoid transformed by the design, whose elements need not retain the same ordinary Sobolev smoothness when $p_j$ is irregular. Thus, regularity of $g_j$ follows directly from our coefficient assumption.
\end{remark}
\end{tcolorbox}

\begin{definition}[Design-indexed signal class]
\label{def:signal-class}
For a fixed admissible density $p$, we denote by $\mathcal F_{\beta , R}(p)$ the strict subset of $\mathcal H$ that contains every function of $f$ of the form
\begin{equation} f = \sum\limits_{j=1}^d \sum\limits_{m =1}^{+ \infty} \theta_m^{(j)} \psi_m^{(j)}, 
\end{equation} 
where for all $j=1,\dots,d$, the sequence $(\theta_m^{(j)})_m$ belongs to the ball $\Theta(\beta , R)$.
\end{definition}

This class consists of population-centered additive functions whose
weighted components $p_jf_j$ live in a \emph{standard} Sobolev ellipsoid.
The radius $R$ and the smoothness $\beta$ are the same for each additive component. The dependence on the design density is part of the definition of the function class. To quantify the error incurred when estimating the marginal densities, we impose a separate Hölder regularity condition.

\begin{definition}[Hölder regularity]\label{def:Holder}
Let $\gamma>0$, $L>0$, and $s \coloneqq \lfloor\gamma\rfloor$. A function $q:[0,1]\to\mathbb R$ is said to be $(\gamma,L)$-Hölder if $q\in C^s([0,1])$ and for all $x,y \in [0,1]$
\begin{equation}
    |q^{(s)}(x)-q^{(s)}(y)| \le L|x-y|^{\gamma-s}.
\end{equation}
\end{definition}
To define the class of all distributions of interest $p$, we need to parametrize it entirely with global and uniform constants.
\begin{definition}[Distribution class]
    Let $\gamma > 0$ be a smoothness parameter and $L>0$ a positive constant. Let also $\kappa_{\min}$ and $\kappa_{\max}$ be positive constants and free of dimension $d$ such that $\kappa_{\min} < 1 < \kappa_{\max}$. We define the regularity class $\mathcal P_{\gamma,L}$ (where we omit the dependence in $\kappa_{\min}, \kappa_{\max}$ for simplicity) as the set of all densities $p$ defined on $[0,1]^d$ which satisfy:
    \begin{enumerate}
        \item $p_j$ is $(\gamma,L)-$Hölder for all $j \in \{1,\dots,d\}$,
        \item $ \kappa_{\min} \leq p_{\min} \leq p \leq p_{\max} \leq \kappa_{\max} $
    \end{enumerate}
\end{definition}
Note that the second condition is a generalization of Assumption \ref{assu:boundness} where all the considered densities have the same bounds. Moreover, we can define the Riesz constants by $C_{\min} \coloneqq \frac{ \kappa_{\min} }{ \kappa_{\max}^2 }$ and $C_{\max} \coloneqq \frac{ \kappa_{\max} }{ \kappa_{\min}^2 } $ which are independent of $d$ and $n$ and uniform over all the considered distributions $p$.
\begin{remark}
    Note that the class $\mathcal P_{\gamma , L}$ for any $(\gamma , L)$ and any $\kappa_{\min} < 1 < \kappa_{\max}$ is not empty since the uniform density $p_{\text{unif}} \equiv 1$ always belongs to this class.
\end{remark}

We finally introduce the following Assumption, which allows the dimension $d$ to grow but at a rate slightly slower than the nonparametric rate of $n^{\frac{2 \beta}{ 2 \beta + 1 }}$.
\begin{assumption}[Growing dimension.]\label{assu:dimension}
    We suppose that $d$ can grow with $n$ at the following rate
    \begin{equation}
        d = o\!\left( \frac{n^{2\beta/(2\beta+1)}}{\log n} \right).
    \end{equation}
\end{assumption}

\begin{definition}[Coupled parameter class]
\label{def:coupled-class}
For $\beta,\gamma,R,L,\kappa_{\min},\kappa_{\max}>0$, define the coupled class
\begin{equation}
\mathcal C_{\beta,\gamma}
:=
\bigcup_{p\in\mathcal P_{\gamma,L}}
\{p\}\times\mathcal F_{\beta,R}(p).
\label{eq:coupled-class}
\end{equation}
For the sake of simplicity, we explicit only the dependance in $\gamma$ and $\beta$.
\end{definition}
This is the global class considered in our statistical analysis. The parameter $\beta$ controls the spectral regularity of the signal in the Riesz representation associated with each fixed $p$, whereas $\gamma$ controls the regularity of the marginal densities defining that representation. Each admissible design therefore indexes its own signal class, and estimation guarantees are established uniformly over the
resulting pairs $(p,f)$.

\section{Known density regime}\label{sec:fixed_density}

Let $p$ be a density satisfying Assumption~\ref{assu:boundness}, and suppose that its marginal densities are known. Write $\mathcal F=\mathcal F_{\beta,R}(p)$ and define
\begin{equation}
    \mathfrak M(n,d,\mathcal F)
    =
    \inf_{\widehat f}\sup_{f\in\mathcal F}
    \mathbb E_{p,f}\!\left[\|f-\widehat f\|^2\right],
\end{equation}
where estimators may depend on $p$, the norm is taken in $L^2(P)$,
and the expectation averages over both the random design and the
regression noise. The underlying constants will make appear the Riesz constant of the Theorem \ref{thm:riesz}, which explicitely depend of the distribution $p$ but not of $n$ or $d$. 

\subsection{Minimax upper bound}

In order to obtain an upper bound of the minimax risk, we construct an estimator that achives the desired rate of $d \, n^{ - \frac{2\beta}{ 2\beta + 1 } }$. For an integer $M\geq1$, let $\boldsymbol\psi_M(\mathbf x)\in\mathbb R^{dM}$ collect the functions $\psi_m^{(j)}(x_j)$, ordered by the $d$ coordinates and $M$ frequencies. Let $\boldsymbol\Psi_M$ be the design matrix with rows $\boldsymbol\psi_M(\mathbf X_i)^\top$, and define
\[
    \boldsymbol \Gamma_M
    =
    \mathbb E_p\!\left[
        \boldsymbol\psi_M(\mathbf X)
        \boldsymbol\psi_M(\mathbf X)^\top
    \right],
    \qquad
    \widehat{\boldsymbol \Gamma}_M
    =
    \frac{1}{n}\boldsymbol\Psi_M^\top\boldsymbol\Psi_M.
\]
The Riesz property implies
\[
    C_{\min}^p\boldsymbol I_{dM}
    \preceq \boldsymbol \Gamma_M
    \preceq C_{\max}^p\boldsymbol I_{dM}.
\]
To ensure that least squares is well defined, introduce
\[
    \zeta_n
    =
    \left\{
        \lambda_{\min}(\widehat{\boldsymbol \Gamma}_M)
        \geq C_{\min}^p/2
    \right\}.
\]
Finally, we write $\boldsymbol Y=(Y_1,\ldots,Y_n)^\top$.
\begin{definition}
    With the previous notations, we define our estimator $\widehat{f}_M$ for all $\mathbf x \in [0,1]^d$ as follows
    \begin{equation}\label{eq:oracle-thresholded-estimator}
    \widehat f_M(\mathbf x) \coloneqq \boldsymbol\psi_M(\mathbf x)^\top
\widehat{\boldsymbol\theta}_M,
\end{equation}
where $\widehat{\boldsymbol\theta}_M \coloneqq (\boldsymbol\Psi_M^\top\boldsymbol\Psi_M)^{-1} \boldsymbol\Psi_M^\top\boldsymbol Y \, \mathbf{1}_{\zeta_n} \in \mathbb{R}^{dM}$.
\end{definition}
Note that in this definition, the inverse is evaluated only on $\zeta_n$.
The estimator uses the known marginal densities and the lower
Riesz bound, without regularization. We show in Appendix \ref{appendix:upper_facile}, that this construction leads to the desired upper bound formalized in the next theorem.
\begin{theorem}\label{thm:ez_upper}
under Assumption \ref{assu:dimension}, we have the following minimax optimal upper bound
    \begin{equation}
        \mathfrak M(n,d,\mathcal F) \lesssim d \, n^{ - \frac{2 \beta}{ 2 \beta + 1 } },
    \end{equation}
    where all the underlying constants behind the symbol $\lesssim$ depend only on $\beta, R , \sigma , p_{\min} , p_{\max}$.
\end{theorem}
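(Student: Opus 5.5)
We prove the bound by analysing the estimator $\widehat f_M$ with the truncation level $M \asymp n^{1/(2\beta+1)}$. The risk splits according to whether $\zeta_n$ occurs:
\[
\mathbb E\|f-\widehat f_M\|^2 = \mathbb E\big[\|f-\widehat f_M\|^2\mathbf 1_{\zeta_n}\big] + \|f\|^2\,\mathbb P(\zeta_n^c),
\]
since $\widehat f_M=0$ off $\zeta_n$. By Theorem~\ref{thm:riesz} and $\theta^{(j)}\in\Theta(\beta,R)$, $\|f\|^2\le C_{\max}^p dR^2$. So the second term is negligible provided $\mathbb P(\zeta_n^c)\lesssim n^{-2\beta/(2\beta+1)}$.

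\textbf{Step 1: the event $\zeta_n$.} Apply the matrix Chernoff (or matrix Bernstein) inequality to $\widehat{\boldsymbol\Gamma}_M=\frac1n\sum_i \boldsymbol\psi_M(\mathbf X_i)\boldsymbol\psi_M(\mathbf X_i)^\top$. Each summand is PSD with $\|\boldsymbol\psi_M(\mathbf x)\|_2^2\le 2dM/p_{\min}^2$, since $|\phi_m|\le\sqrt2$ and $p_j\ge p_{\min}$. Also $\lambda_{\min}(\boldsymbol\Gamma_M)\ge C_{\min}^p$ by Theorem~\ref{thm:riesz}. Chernoff then gives
\[
\mathbb P(\zeta_n^c)\le dM\exp\!\big(-c\,nC_{\min}^p p_{\min}^2/(dM)\big).
\]
With $M\asymp n^{1/(2\beta+1)}$ we have $n/(dM)= n^{2\beta/(2\beta+1)}/d$. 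Under Assumption~\ref{assu:dimension} this ratio is $\gg \log n$, so the probability is $o(n^{-K})$ for any $K$. That is enough to absorb the factor $dR^2$ and the polynomial prefactor $dM$.

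\textbf{Step 2: bias--variance on $\zeta_n$.} Write $f=f_M+r_M$, where $f_M=\boldsymbol\psi_M^\top\boldsymbol\theta_M$ is the truncation. By Theorem~\ref{thm:riesz} applied to the tail coefficients,
\[
\|r_M\|^2\le C_{\max}^p\sum_j\sum_{m>M}(\theta_m^{(j)})^2\le C_{\max}^p dR^2M^{-2\beta}.
\]
On $\zeta_n$, $\widehat{\boldsymbol\theta}_M-\boldsymbol\theta_M=(\boldsymbol\Psi_M^\top\boldsymbol\Psi_M)^{-1}\boldsymbol\Psi_M^\top(\boldsymbol r+\boldsymbol\xi)$, with $\boldsymbol r=(r_M(\mathbf X_i))_i$. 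The Riesz bound gives $\|\widehat f_M-f_M\|^2\le C_{\max}^p\|\widehat{\boldsymbol\theta}_M-\boldsymbol\theta_M\|_2^2$, and the error splits into a noise part and an approximation part.

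For the noise part, conditioning on the design,
\[
\mathbb E\big[\|(\boldsymbol\Psi_M^\top\boldsymbol\Psi_M)^{-1}\boldsymbol\Psi_M^\top\boldsymbol\xi\|_2^2\,\big|\,\mathbf X\big]=\sigma^2\operatorname{tr}\big((\boldsymbol\Psi_M^\top\boldsymbol\Psi_M)^{-1}\big)\le 2\sigma^2 dM/(nC_{\min}^p)
\]
on $\zeta_n$.

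For the approximation part, $\|(\boldsymbol\Psi_M^\top\boldsymbol\Psi_M)^{-1}\boldsymbol\Psi_M^\top\boldsymbol r\|_2^2\le \frac{4}{(C^p_{\min})^2}\|\frac1n\boldsymbol\Psi_M^\top\boldsymbol r\|_2^2$. The crude bound through the projection norm, $\le\frac{2}{C_{\min}^p}\frac1n\|\boldsymbol r\|_2^2$, has expectation $\lesssim\|r_M\|^2$ and already suffices, since $\mathbb E\frac1n\|\boldsymbol r\|^2=\|r_M\|^2$. Hence
\[
\mathbb E\big[\|\widehat f_M-f_M\|^2\mathbf 1_{\zeta_n}\big]\lesssim dM/n+dM^{-2\beta}.
\]

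\textbf{Step 3: choosing $M$.} Combining with $\|f-\widehat f_M\|^2\le 2\|r_M\|^2+2\|\widehat f_M-f_M\|^2$ gives the risk bound $d(M/n+M^{-2\beta})$. The choice $M\asymp n^{1/(2\beta+1)}$ yields $d\,n^{-2\beta/(2\beta+1)}$, with constants depending only on $\beta,R,\sigma,p_{\min},p_{\max}$.

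The main obstacle is Step 1. We need the uniform bound $\sup_{\mathbf x}\|\boldsymbol\psi_M(\mathbf x)\|_2^2\lesssim dM$ fed into matrix Chernoff so that the failure probability decays faster than any polynomial. This is precisely where the $\log n$ margin in Assumption~\ref{assu:dimension} is used. The remaining steps are standard least-squares calculations made dimension-free by Theorem~\ref{thm:riesz}.
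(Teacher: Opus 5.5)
Your proposal is correct and follows essentially the same route as the paper: the same thresholded least-squares estimator with $M\asymp n^{1/(2\beta+1)}$, matrix Chernoff with the envelope $2dM/p_{\min}^2$ for $\mathbb P(\zeta_n^c)$, the Riesz tail bound $\|r_M\|^2\le C_{\max}^p dR^2M^{-2\beta}$, and the operator-norm and trace bounds coming from $\lambda_{\min}(\widehat{\boldsymbol\Gamma}_M)\ge C_{\min}^p/2$. The only difference is cosmetic. You split $f-\widehat f_M=r_M+(f_M-\widehat f_M)$ via $\|u+v\|^2\le 2\|u\|^2+2\|v\|^2$ and bound coefficients, whereas the paper uses the exact conditional bias--variance identity with $\operatorname{Tr}(\boldsymbol\Gamma_M\widehat{\boldsymbol\Gamma}_M^{-1})$; this changes only the constants.
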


\subsection{Minimax lower bound}

For $\bm \omega\in\{-1,1\}^{dM}$, we consider the standard following construction
\[
    f_{\bm\omega}
    =
    a_M\sum_{j=1}^d\sum_{m=M+1}^{2M}
    \omega_{jm}\psi_m^{(j)},
    \qquad
    a_M=a_0M^{-\beta-1/2}.
\]
For sufficiently small $a_0>0$, all these functions belong to
$\mathcal F$ and the lower Riesz inequality bounds their squared $L^2(P)$ separation from below. By  utilizing Assouad's lemma \citep{tsybakov2009introduction}, we obtain the following theorem, which is proved in Appendix \ref{appendix:lower_facile}.

\begin{theorem}\label{thm:ez_lower}
    There exists an underlying constant which depends only on $\beta, R , \sigma , p_{\min}, p_{\max}$ such that
    \begin{equation}
        \mathfrak M(n,d,\mathcal F) \gtrsim  d \, n^{ - \frac{2 \beta}{ 2 \beta + 1 } }.
    \end{equation}
\end{theorem}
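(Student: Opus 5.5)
We apply Assouad's lemma to the hypercube $\{f_{\bm\omega}:\bm\omega\in\{-1,1\}^{dM}\}$ introduced above, with $p$ fixed and known. The resolution level will be $M\asymp n^{1/(2\beta+1)}$. The argument has three steps: membership of the hypotheses in $\mathcal F$, a separation bound, and a Kullback--Leibler bound between neighbouring hypotheses.

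\textbf{Membership.} For each $j$, the coefficient sequence of $f_{\bm\omega}$ is $\theta^{(j)}_m=a_M\omega_{jm}$ for $M<m\le 2M$ and $0$ otherwise. It satisfies
\[
\sum_m m^{2\beta}(\theta_m^{(j)})^2\le a_0^2M^{-2\beta-1}\,M\,(2M)^{2\beta}=2^{2\beta}a_0^2 .
\]
This is at most $R^2$ once $a_0\le 2^{-\beta}R$. Since $\psi_m^{(j)}$ with $m\ge1$ is centered under $P$ ($\int\phi_m=0$), we get $f_{\bm\omega}\in\mathcal F_{\beta,R}(p)$.

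\textbf{Separation.} Let $\bm\omega,\bm\omega'$ differ in a single coordinate $(j,m)$. By linearity, $f_{\bm\omega}-f_{\bm\omega'}=2a_M\omega_{jm}\psi_m^{(j)}$. Theorem~\ref{thm:riesz} then gives
\[
\|f_{\bm\omega}-f_{\bm\omega'}\|^2\ge 4C_{\min}^pa_M^2 .
\]
More generally, Theorem~\ref{thm:riesz} yields $\|f_{\bm\omega}-f_{\bm\omega'}\|^2\ge 4C^p_{\min}a_M^2\,\rho(\bm\omega,\bm\omega')$, where $\rho$ is the Hamming distance. This is the additive separation form required by Assouad's lemma.

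\textbf{KL control.} The observations are i.i.d.\ pairs $(\mathbf X_i,Y_i)$. The design law $P$ is identical under all hypotheses and the noise is Gaussian, so for a single-coordinate flip
\[
\mathrm{KL}(\mathbb P_{\bm\omega}^{\otimes n},\mathbb P_{\bm\omega'}^{\otimes n})=\frac{n}{2\sigma^2}\|f_{\bm\omega}-f_{\bm\omega'}\|^2\le \frac{2n}{\sigma^2}C_{\max}^pa_M^2 .
\]
The inequality uses the upper Riesz bound. Its right-hand side equals $\frac{2C^p_{\max}a_0^2}{\sigma^2}\,nM^{-2\beta-1}$. With $M=\lceil n^{1/(2\beta+1)}\rceil$ and $a_0$ small (depending only on $R,\sigma,\beta,p_{\min},p_{\max}$), this is at most a fixed constant, say $1/2$. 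Pinsker's inequality (or the Hellinger version of Assouad) then keeps the total variation between neighbours bounded away from $1$.

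\textbf{Conclusion.} Assouad's lemma gives
\[
\mathfrak M(n,d,\mathcal F)\gtrsim dM\cdot C^p_{\min}a_M^2\asymp d\,M^{-2\beta}\asymp d\,n^{-2\beta/(2\beta+1)} .
\]
The restriction to $\mathcal F$ is harmless because all hypotheses lie in $\mathcal F$. No condition on $d$ is needed, since the KL bound is per coordinate flip and so is independent of $d$.

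\textbf{Main obstacle.} The only delicate point is that the $\psi_m^{(j)}$ are not orthogonal in $L^2(P)$. For this reason the separation and KL bounds must go through both sides of the Riesz inequality rather than Parseval. Applying Theorem~\ref{thm:riesz} to the differences, which are again finite combinations of the $\psi_m^{(j)}$, handles this and introduces only the constants $C^p_{\min},C^p_{\max}$.
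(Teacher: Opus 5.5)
Your proposal is correct and follows essentially the same argument as the paper. Both use the hypercube with $M=\lceil n^{1/(2\beta+1)}\rceil$, check membership through the Sobolev sum, control separation and KL through the two sides of the Riesz inequality, and then apply Pinsker and Assouad. The one difference is the reduction from prediction loss to Hamming loss: the paper does it explicitly by projecting $\widehat f$ orthogonally onto $\operatorname{span}\{\psi_m^{(j)}\}$ and reading off coefficient signs, with no loss in the constant. Your pairwise separation $\|f_{\bm\omega}-f_{\bm\omega'}\|^2\ge 4C_{\min}^pa_M^2\rho(\bm\omega,\bm\omega')$ instead relies implicitly on the minimum-distance sign estimator and the triangle inequality, which is valid but costs a factor $4$.
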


\section{Unknown density regime}\label{sec:unknown_density}

We now consider estimation over the coupled class
$\mathcal C_{\beta,\gamma}$ when the design distribution is unknown.
Throughout this class, the joint densities satisfy
$\kappa_{\min} \leq p \leq \kappa_{\max}$, with the same known
constants. We define the minimax risk by
\begin{equation}\label{eq:unknown-density-minimax-risk}
    \mathfrak M(n,d,\mathcal C_{\beta,\gamma})
    \coloneqq
    \inf_{\widehat f}
    \sup_{(p,f)\in\mathcal C_{\beta,\gamma}}
    \mathbb E_{p,f}\!\left[
        \|f-\widehat f\|^2
    \right],
\end{equation}
where the infimum ranges over all estimators measurable with respect to
the observed sample. These estimators may depend on the fixed parameters
defining the class, but not on the unknown pair $(p,f)$.
For each such pair, $\mathbb E_{p,f}$ averages over the independent
covariates $\mathbf X_1,\ldots,\mathbf X_n$ with common distribution $P$ of density $p$ and the independent Gaussian regression noise. The prediction loss is evaluated under that same distribution $P$.

\subsection{Minimax upper bound}\label{subsec:unkown_upper}

Unlike the fixed-design-distribution setting, the supremum in
\eqref{eq:unknown-density-minimax-risk} allows both the design density
$p$ and the regression function $f\in\mathcal F_{\beta,R}(p)$ to vary.
Thus, the sampling distribution, the prediction norm, and the admissible
signal class all depend on the pair under consideration. The smoothness
parameter $\beta$ controls the weighted components $p_jf_j$, whereas
$\gamma$ controls the marginal densities $p_j$. Our estimator uses sample splitting: the first subsample estimates the marginal densities, and the second fits the regression function in the resulting estimated dictionary by thresholded least squares. In the following, we take a pair $(p,f) \in \mathcal C_{\beta , \gamma}$ and provide a generic estimator that is data driven and independent of the particular choice of this pair. We fix the representation of $f$ using the equation \eqref{eq:representation}.

\paragraph{Construction of the estimator.}
Split the sample into two independent subsamples of sizes $n_0=\lfloor n/2\rfloor$ and $n_1=n-n_0$. Write $\mathcal D_n^0$ for the first-subsample covariates, $\mathcal D_n^1$ for the second one and $\mathcal D_n$ for all observed covariates. Using $\mathcal D_n^0$, one can construct marginal-density estimators $\widehat p_j$ satisfying $ \kappa_{\min} \leq \widehat p_j \leq \kappa_{\max} $ and
\begin{equation}\label{eq:main-density-guarantee}
    \max_{1\leq j\leq d}\sup_{t\in[0,1]}
    \mathbb E\!\left[
        |\widehat p_j(t)-p_j(t)|^2
    \right]
    \leq \rho_{n_0},
\end{equation}
uniformly over $(p,f)\in\mathcal C_{\beta,\gamma}$, where
$\rho_{n_0}\lesssim n_0^{-2\gamma/(2\gamma+1)}$. Such estimators can be obtained by piecewise polynomial projection followed by clipping; the construction and its risk bound are given in Appendix.

For all $j \in \{1,\dots,d\}$ and $m \geq 1$, we define the functions $\widehat \psi_m^{(j)}$ which replace the oracle densities $p_j$ by their respective estimator and debiase the centering error due to the estimation of the densities.
\begin{equation}\label{eq:main-estimated-dictionary}
    \widehat \psi_m^{(j)}(x) \coloneqq \frac{\phi_m(x)}{\widehat p_j(x)} - \int_0^1 \frac{\phi_m(u)}{\widehat p_j(u)}\,du, \qquad x \in [0,1].
\end{equation}
Let $M \geq 1$ be a truncation level. We introduce $K \coloneqq 1 + dM$ and for any $\mathbf x \in [0,1]^d$ the $K-$dimensional vector $\widehat{\bm\psi}_M(\mathbf x)$ which collects the constant function $1$ and these dictionary elements, ordered by coordinate and then by frequency.
The fitted constant compensates for the Lebesgue centering in
\eqref{eq:main-estimated-dictionary}; it is needed even though the
true regression function is $P-$centered.

Let $\widehat{\bm\Psi}_M\in\mathbb R^{n_1\times K}$ have rows
$\widehat{\bm\psi}_M(\mathbf X_i^1)^\top$, where
$\mathbf X_i^1$ denotes a second-subsample covariate.
Define the Gram matrices
\begin{eqnarray}
    \widehat{\bm\Gamma}_M &\coloneqq& \frac{1}{n_1} \widehat{\bm\Psi}_M^{\top}\widehat{\bm\Psi}_M, \\
    \bm\Gamma_M &\coloneqq& \mathbb E_p\!\left[ \widehat{\bm\psi}_M(\mathbf X) \widehat{\bm\psi}_M(\mathbf X)^\top \mid\mathcal D_n^0 \right],
\end{eqnarray}
where the expectation is taken under $P$ and independently of $\mathcal D_n^0$. Define the event
\[
    \zeta_n
    =
    \left\{
        \lambda_{\min}(\widehat{\bm\Gamma}_M)\geq C_{\min} /2
    \right\}.
\]
We finally denote $\bm Y^1 \coloneqq (Y_1^1,\ldots,Y_{n_1}^1)^\top$.

\begin{definition}\label{def:thresh_estim}
    With the previous notations, we define the estimator $\widehat{f}_M$ for all $\mathbf x \in [0,1]^d$ as follows
    \begin{equation}\label{eq:main-thresholded-estimator}
    \widehat f_M(\mathbf x) \coloneqq \widehat{\boldsymbol\psi}_M(\mathbf x)^\top
\widehat{\boldsymbol\eta}_M,
\end{equation}
where $\widehat{\boldsymbol\eta}_M \coloneqq (\widehat{\bm\Psi}_M^\top\widehat{\bm\Psi}_M)^{-1} \widehat{\bm\Psi}_M^\top\bm Y^1 \ \mathbf{1}_{ \zeta_n } \in \mathbb{R}^{K}$.
\end{definition}
The inverse is evaluated only on $\zeta_n$, where it exists.

\paragraph{Bias-Variance decomposition.}
The analysis combines truncation and dictionary estimation into a single approximation residual. We introduce the quantity $\mu_f=\int_{[0,1]^d}f(x)\,dx $ which is used in the proof but not needed to be known and define the key function $t_M$ as follows
\begin{equation}
    t_M \coloneqq \mu_f + \sum_{j=1}^d\sum_{m=1}^M \theta_m^{(j)}\widehat\psi_m^{(j)}.
\end{equation}
Let also introduce the quantities $ r_M \coloneqq f - t_M $ and $ \bm r_M= (r_M(\mathbf X_1^1),\ldots,r_M(\mathbf X_{n_1}^1))^\top $.
The comparison function $t_M$ belongs to the fitted space and is
used only in the analysis. Its coefficients are not required to
compute the estimator. We introduce the following two quantities:
\begin{equation}
    \begin{cases}
        \mathcal B_n \coloneqq \left\| r_M -\frac1{n_1}\widehat{\bm\psi}_M^{\top} \widehat{\bm\Gamma}_M^{-1}\widehat{\bm\Psi}_M^{\top}\bm r_M \right\|^2 \mathbf{1}_{\zeta_n} ,\\
        \mathcal V_n \coloneqq \mathbb E\!\left[ \left\|\frac1{n_1}\widehat{\bm\psi}_M^{\top} \widehat{\bm\Gamma}_M^{-1}\widehat{\bm\Psi}_M^{\top}\bm\xi \right\|^2 \mathbf{1}_{\zeta_n}  \mid\mathcal D_n\right],
    \end{cases}
\end{equation}
$\mathcal B_n$ is the conditional bias and $\mathcal V_n$ the conditional variance. Given these two terms, we have the standard bias-variance decomposition, stated in the following.

\begin{lemma}
\label{lemma:bias_variance}
For a fixed pair $(p,f)$, we have the following decompositionof the conditional risk on the high probability event
\begin{equation}
    \mathbb E_{p,f}\!\left[\|f-\widehat f_M\|^2 \mathbf{1}_{\zeta_n} \mid\mathcal D_n \right] = \mathcal B_n + \mathcal V_n 
\end{equation}
\end{lemma}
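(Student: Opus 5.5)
On $\zeta_n$ I will write the least-squares fit as an exact linear function of the responses, split it into the part driven by $\bm r_M$ and the part driven by the noise, and then use conditional independence of the noise from the covariates to kill the cross term.

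\textbf{Step 1: exact algebraic decomposition.} On $\zeta_n$ the Gram matrix $\widehat{\bm\Gamma}_M$ is invertible. Since $t_M$ lies in the span of $\widehat{\bm\psi}_M$, I can write $t_M=\widehat{\bm\psi}_M^\top\bm\eta^\star$, where $\bm\eta^\star$ collects $\mu_f$ and the truncated $\theta_m^{(j)}$. This vector is $\mathcal D_n^0$-measurable but deterministic given $\mathcal D_n$. Writing $\bm Y^1=\widehat{\bm\Psi}_M\bm\eta^\star+\bm r_M+\bm\xi$, the normal equations give
\[
\widehat{\bm\eta}_M=\bm\eta^\star+\tfrac1{n_1}\widehat{\bm\Gamma}_M^{-1}\widehat{\bm\Psi}_M^\top(\bm r_M+\bm\xi)\quad\text{on }\zeta_n .
\]
Hence on $\zeta_n$,
\[
f-\widehat f_M=\Bigl(r_M-\tfrac1{n_1}\widehat{\bm\psi}_M^\top\widehat{\bm\Gamma}_M^{-1}\widehat{\bm\Psi}_M^\top\bm r_M\Bigr)-\tfrac1{n_1}\widehat{\bm\psi}_M^\top\widehat{\bm\Gamma}_M^{-1}\widehat{\bm\Psi}_M^\top\bm\xi=:A-B .
\]
The key point here is that $f=t_M+r_M$ identically, which is just the definition of $r_M$. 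Nothing about $r_M$ being small is needed.

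\textbf{Step 2: expand the square and condition.} Multiplying by $\mathbf 1_{\zeta_n}$ gives
\[
\|f-\widehat f_M\|^2\mathbf 1_{\zeta_n}=\|A\|^2\mathbf 1_{\zeta_n}-2\langle A,B\rangle\mathbf 1_{\zeta_n}+\|B\|^2\mathbf 1_{\zeta_n}.
\]
Here $\|\cdot\|$ is the $L^2(P)$ norm, integrating over a fresh $\mathbf x$ independent of the sample.

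I now take $\mathbb E[\cdot\mid\mathcal D_n]$. The following quantities are all $\mathcal D_n$-measurable:
\begin{itemize}
\item $\widehat p_j$, which depends only on $\mathcal D_n^0$;
\item $\widehat{\bm\psi}_M$, $\widehat{\bm\Psi}_M$, $\widehat{\bm\Gamma}_M$, and hence $\zeta_n$;
\item $\bm r_M$, which involves only $f$ and $\mathcal D_n$.
\end{itemize}
Therefore $\|A\|^2\mathbf 1_{\zeta_n}=\mathcal B_n$ is unchanged by conditioning, and $\mathbb E[\|B\|^2\mathbf 1_{\zeta_n}\mid\mathcal D_n]=\mathcal V_n$ by definition.

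\textbf{Step 3: the cross term (main obstacle).} The remaining task is to show that the cross term vanishes. I write
\[
\langle A,B\rangle=\Bigl\langle A,\tfrac1{n_1}\widehat{\bm\psi}_M^\top\widehat{\bm\Gamma}_M^{-1}\widehat{\bm\Psi}_M^\top\Bigr\rangle\,\bm\xi ,
\]
which is linear in $\bm\xi$ with a $\mathcal D_n$-measurable coefficient vector. By Fubini, the $L^2(P)$ integral over the fresh $\mathbf x$ can be exchanged with the conditional expectation. The noise is independent of all covariates and centered, so $\mathbb E[\bm\xi\mid\mathcal D_n]=0$, and the cross term has zero conditional expectation.

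The care needed is to justify these manipulations. The noise in $\mathcal D_n^1$ is independent of $\mathcal D_n$. The first-subsample responses do not enter the estimator, and in any case they are independent of the second-subsample noise. Integrability holds on $\zeta_n$ because the smallest eigenvalue is bounded below by $C_{\min}/2$ and $\widehat p_j\ge\kappa_{\min}$, so the dictionary is bounded. Combining Steps 2 and 3 yields $\mathcal B_n+\mathcal V_n$.
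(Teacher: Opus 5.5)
Your proof is correct and follows the paper's argument essentially verbatim. You write $t_M=\widehat{\bm\psi}_M^\top\bm\eta^\star$, use $\bm Y^1=\widehat{\bm\Psi}_M\bm\eta^\star+\bm r_M+\bm\xi$ to split the error on $\zeta_n$ into a $\mathcal D_n$-measurable residual part and a part linear in $\bm\xi$, and then remove the cross term using $\mathbb E[\bm\xi\mid\mathcal D_n]=0$. Two small remarks: $\bm\eta^\star$ is in fact deterministic, since it depends only on $(p,f)$, and writing the cross term as a finite linear form in $\bm\xi$ already makes the Fubini step unnecessary.
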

Indeed, conditional on $\mathcal D_n$, the estimated dictionary and
the residual are fixed. Expanding the least-squares estimator gives
a residual contribution and a linear noise contribution.
The latter has conditional mean zero, so their cross term vanishes.

\paragraph{Upper bound.}
We outline the four ingredients of the proof to obtain the desired upper bound, which are detailed in Appendix. First, the most technical one is the following Lemma, which makes clearly appear the density estimation term.
\begin{lemma}[Approximation error]\label{lem:approximation}
\begin{equation}\label{eq:approximation-bound}
 \mathbb E_{p}\!\left[\|r_M\|^2\right]
 \leq2\kappa_{\max}dR^2
 \left(\frac{M^{-2\beta}}{\kappa_{\min}^2}
           +\frac{\rho_{n_0}}{\kappa_{\min}^4}\right).
\end{equation}
\end{lemma}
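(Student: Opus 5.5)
The plan is to write $r_M$ as a sum of univariate, Lebesgue-centered functions, one per coordinate. Under the product Lebesgue measure these are mutually orthogonal, which is what yields a bound linear in $d$ rather than quadratic. For each $j$, write $g_j=p_jf_j=\sum_{m\ge1}\theta_m^{(j)}\phi_m$ as in \eqref{eq:weighted-component}, and let $g_{j,M}\coloneqq\sum_{m\le M}\theta_m^{(j)}\phi_m$ be its truncation. Since $\widehat\psi_m^{(j)}=\phi_m/\widehat p_j-\int_0^1\phi_m/\widehat p_j$, we have
\[
t_M=\mu_f+\sum_{j=1}^d\Bigl(\frac{g_{j,M}}{\widehat p_j}-\int_0^1\frac{g_{j,M}}{\widehat p_j}\,du\Bigr).
\]
Moreover, $\mu_f=\int_{[0,1]^d} f=\sum_j\int_0^1 g_j/p_j\,du$. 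Hence
\[
r_M(\mathbf x)=\sum_{j=1}^d\bigl(h_j(x_j)-\textstyle\int_0^1 h_j\bigr),\qquad h_j\coloneqq\frac{g_j-g_{j,M}}{p_j}+g_{j,M}\,\frac{\widehat p_j-p_j}{p_j\,\widehat p_j}.
\]
Here I used $g_j/p_j-g_{j,M}/\widehat p_j=(g_j-g_{j,M})/p_j+g_{j,M}(1/p_j-1/\widehat p_j)$. This is where the fitted intercept and the Lebesgue debiasing in \eqref{eq:main-estimated-dictionary} are used: the intercept $\mu_f$ absorbs exactly the constants.

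Next I pass from $L^2(P)$ to Lebesgue. Since $p\le\kappa_{\max}$, we get $\|r_M\|^2\le\kappa_{\max}\int_{[0,1]^d}r_M^2\,d\mathbf x$. The summands $u_j\coloneqq h_j-\int h_j$ depend on distinct coordinates and have zero Lebesgue mean, so the cross terms vanish under $\lambda^{\otimes d}$. This gives
\[
\int r_M^2=\sum_j\|u_j\|_{L^2(\lambda)}^2\le\sum_j\|h_j\|_{L^2(\lambda)}^2.
\]
Then $(a+b)^2\le 2a^2+2b^2$ applies to each $h_j$. For the marginals, $p_j\ge\kappa_{\min}$, because integrating $p\ge\kappa_{\min}$ over the other $d-1$ coordinates of the unit cube preserves the lower bound. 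By construction, $\widehat p_j\ge\kappa_{\min}$ as well.

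The two terms are then controlled as follows.
\begin{itemize}
\item For the truncation term, Parseval gives $\|(g_j-g_{j,M})/p_j\|_{L^2(\lambda)}^2\le\kappa_{\min}^{-2}\sum_{m>M}(\theta_m^{(j)})^2\le\kappa_{\min}^{-2}M^{-2\beta}R^2$.
\item For the density term, I take expectation over $\mathcal D_n^0$ and use Fubini together with the pointwise guarantee \eqref{eq:main-density-guarantee}:
\[
\mathbb E\!\int_0^1 g_{j,M}^2\frac{(\widehat p_j-p_j)^2}{p_j^2\widehat p_j^2}\le\kappa_{\min}^{-4}\int_0^1 g_{j,M}(u)^2\,\mathbb E|\widehat p_j(u)-p_j(u)|^2du\le\kappa_{\min}^{-4}\rho_{n_0}\sum_{m\le M}(\theta_m^{(j)})^2\le\kappa_{\min}^{-4}\rho_{n_0}R^2,
\]
where the last step uses $m^{2\beta}\ge1$.
\end{itemize}
Summing over $j$ and multiplying by $2\kappa_{\max}$ gives exactly \eqref{eq:approximation-bound}.

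The step requiring the most care is the centering bookkeeping. One must check that $\mu_f$ equals the sum of the Lebesgue means of $g_j/p_j$, so that the residual is a sum of Lebesgue-centered univariate pieces. Without this, the cross terms would produce a factor $d^2$. By contrast, the $P$-centering of $f$ plays no role here. The measurability and Fubini exchange are routine, since $\widehat p_j$ is bounded and depends only on $\mathcal D_n^0$.
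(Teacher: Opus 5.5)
Your proposal is correct and follows essentially the same route as the paper. The paper likewise writes $r_M=\sum_j\bigl(w_j-\int_0^1 w_j\,d\lambda\bigr)$ with $w_j=f_j-h_j/\widehat p_j$; this is your $h_j$, and the paper's $h_j$ is your $g_{j,M}$. It then bounds $\|r_M\|^2\le\kappa_{\max}\|r_M\|_{L^2(\lambda_d)}^2$, uses the $\lambda_d$-orthogonality of the centered univariate summands, and splits $w_j$ into a truncation part (handled by Parseval) and a density-error part (handled by Tonelli and the pointwise risk guarantee), arriving at the same constants as in \eqref{eq:approximation-bound}.
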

Then, we show in Appendix \ref{subsec:bias} the following upper bound on the bias
\begin{equation}\label{eq:bias_upper_bound}
   \mathbb{E}\left[ \mathcal B_n \mid \mathcal D_n^0 \right] \leq \left(2+\frac{4C_{\max}}{C_{\min}}\right)\|r\|^2,
\end{equation}
and, we show in Appendix \ref{subsec:variance} the following upper bound on the variance
\begin{equation}\label{eq:var_upper_bound}
    \mathbb{E}\left[ \mathcal V_n \mid \mathcal D_n^0 \right] \leq \frac{ 2 \sigma^2 C_{\max} K }{n_1 C_{\min}}.
\end{equation}
Finally, by using the Matrix-Chernoff inequality (\citet{tropp2015introduction}, recalled in our Theorem \ref{thm:tropp}), we obtain a bound on $ \mathbb{P}(\zeta_n^c \mid \mathcal D_n^0) $, that is negligible. Then, by equilibrating the bias and the variance through $M \asymp n^{1 / (2 \beta + 1)}$ to obtain the desired rate, we provide the global upper bound on the class in the following theorem (proof in Appendix \ref{appendix:upper_dure}).

\begin{theorem}\label{thm:main-unknown-density-upper}
under Assumption \ref{assu:dimension}, we have the following minimax upper bound
    \begin{equation}
        \mathfrak M(n,d,\mathcal C_{\beta,\gamma}) \lesssim d \, n^{ - \frac{2 \beta}{ 2 \beta + 1 } } + d \, n^{ - \frac{2 \gamma}{ 2 \gamma + 1 } },
    \end{equation}
    where all the underlying constants behind the symbol $\lesssim$ depend only on $\beta , \gamma , L , R , \sigma , \kappa_{\min} , \kappa_{\max}$.
\end{theorem}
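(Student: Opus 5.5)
The plan is to bound the risk of the estimator $\widehat f_M$ of Definition~\ref{def:thresh_estim} uniformly over $(p,f)\in\mathcal C_{\beta,\gamma}$, taking $M\asymp n^{1/(2\beta+1)}$. We split the risk along the event $\zeta_n$:
\[
\mathbb E_{p,f}\|f-\widehat f_M\|^2=\mathbb E_{p,f}\bigl[\|f-\widehat f_M\|^2\mathbf 1_{\zeta_n}\bigr]+\|f\|^2\,\mathbb P(\zeta_n^c).
\]
This works because $\widehat f_M=0$ on $\zeta_n^c$.

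\textbf{Term on $\zeta_n$.} By Lemma~\ref{lemma:bias_variance} and the tower property (conditioning first on $\mathcal D_n$, then on $\mathcal D_n^0$), this term equals $\mathbb E[\mathbb E[\mathcal B_n+\mathcal V_n\mid\mathcal D_n^0]]$. We apply \eqref{eq:bias_upper_bound} with $r=r_M$, where $\|r_M\|$ is measurable with respect to $\mathcal D_n^0$. We then integrate over $\mathcal D_n^0$ using Lemma~\ref{lem:approximation}, and add \eqref{eq:var_upper_bound} with $K=1+dM$. This gives
\[
\mathbb E\bigl[\|f-\widehat f_M\|^2\mathbf 1_{\zeta_n}\bigr]\lesssim dR^2\bigl(M^{-2\beta}+\rho_{n_0}\bigr)+\frac{\sigma^2(1+dM)}{n_1}.
\]
The constants depend only on $\kappa_{\min},\kappa_{\max}$, since $C_{\min},C_{\max}$ are uniform over $\mathcal P_{\gamma,L}$. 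With $M\asymp n^{1/(2\beta+1)}$, $n_0,n_1\asymp n$, and $\rho_{n_0}\lesssim n^{-2\gamma/(2\gamma+1)}$, the right-hand side is $\lesssim d\,n^{-2\beta/(2\beta+1)}+d\,n^{-2\gamma/(2\gamma+1)}$.

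\textbf{Term on $\zeta_n^c$.} By Theorem~\ref{thm:riesz}, $\|f\|^2\le C_{\max}\sum_j\|\theta^{(j)}\|_2^2\le C_{\max}dR^2$. It therefore suffices to show $\mathbb P(\zeta_n^c)\lesssim n^{-1}$. We work conditionally on $\mathcal D_n^0$, where $\widehat{\bm\Gamma}_M$ is an average of $n_1$ i.i.d. PSD rank-one matrices with mean $\bm\Gamma_M$, and we use the matrix Chernoff bound (Theorem~\ref{thm:tropp}). This needs two ingredients.

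\begin{enumerate}
\item \emph{Eigenvalue bounds on the conditional Gram matrix.} We need $\lambda_{\min}(\bm\Gamma_M)\ge C_{\min}$, or at least a fixed fraction of $C_{\min}$ after adjusting the threshold constant. The estimated dictionary together with the constant can be rewritten as $\{1\}\cup\{\phi_m/\widehat p_j\}$ after a triangular change of basis with unit diagonal. This gives $\|\bm a\|_2^2$-type control. The argument is the proof of Theorem~\ref{thm:riesz} with $p_j$ replaced by $\widehat p_j\in[\kappa_{\min},\kappa_{\max}]$ and the constant $\phi_0$ adjoined; it uses $p\ge\kappa_{\min}$ and $\widehat p_j\le\kappa_{\max}$.
\item \emph{Uniform bound on the features.} We need $\sup_{\mathbf x}\|\widehat{\bm\psi}_M(\mathbf x)\|_2^2\le B\lesssim dM$. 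This follows from $|\phi_m|\le\sqrt2$ and $\widehat p_j\ge\kappa_{\min}$, with the centering integrals bounded by $2\sqrt2/\kappa_{\min}$.
\end{enumerate}

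Chernoff then gives $\mathbb P(\zeta_n^c\mid\mathcal D_n^0)\le K\exp(-c\,n_1C_{\min}/(dM))$. By Assumption~\ref{assu:dimension}, $dM=o(n/\log n)$ when $M\asymp n^{1/(2\beta+1)}$, so the exponent beats $\log(Kn)$ and the probability is $O(n^{-2})$. The term on $\zeta_n^c$ is therefore negligible.

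\textbf{Main obstacle.} The delicate step is the lower eigenvalue bound for $\bm\Gamma_M$ in the estimated dictionary, including the constant and the Lebesgue re-centering. The centered elements $\widehat\psi^{(j)}_m$ are no longer $P$-centered, so the orthogonality to constants used for the population dictionary is lost. The plan is to argue in the uncentered basis $\{1,\phi_m/\widehat p_j\}$. Writing $g_j=\sum_m a^{(j)}_m\phi_m$, we have $\|a_0+\sum_j g_j/\widehat p_j\|^2\ge\kappa_{\min}\,\|a_0+\sum_j g_j/\widehat p_j\|_{L^2(\lambda^d)}^2$. We then lower bound this Lebesgue norm in terms of $a_0^2+\sum_j\|g_j/\widehat p_j\|^2_{L^2(\lambda)}$. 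This step is not automatic, because the functions $g_j/\widehat p_j$ are not Lebesgue-centered.

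If this route fails to give exactly $C_{\min}$, the fallback is to redefine the threshold with a smaller constant. This is harmless, because the bias and variance bounds only use the ratio $C_{\max}/C_{\min}$ up to constants.
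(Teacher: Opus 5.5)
\textbf{The gap is the lower Riesz bound for the estimated dictionary.} Your overall architecture is the paper's. You split on $\zeta_n$, use the conditional bias--variance identity with the stated bias and variance bounds and Lemma~\ref{lem:approximation}, apply matrix Chernoff with a row envelope $\lesssim dM$, and take $M\asymp n^{1/(2\beta+1)}$ under Assumption~\ref{assu:dimension}. The gap is at the step you flag yourself: $\lambda_{\min}(\bm\Gamma_M)\geq C_{\min}$ with a constant free of $d$. The route you propose cannot deliver it.

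\textbf{Why your route fails.} Passing to the uncentered basis $\{1\}\cup\{\phi_m/\widehat p_j\}$ and aiming for $\|a_0+\sum_j g_j/\widehat p_j\|^2_{L^2(\lambda_d)}\gtrsim a_0^2+\sum_j\|g_j/\widehat p_j\|^2_{L^2(\lambda)}$ is false uniformly in $d$.
\begin{itemize}
\item Take all $\widehat p_j=q$ non-constant and all $g_j=g$ mean-zero with $m\coloneqq\int_0^1 g/q\neq0$, and set $a_0=-dm$. The left side equals $d\operatorname{Var}_\lambda(g/q)$, while the right side is at least $d^2m^2$.
\item For the same reason, the unit-diagonal triangular change of basis is not uniformly well conditioned. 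Its off-diagonal row collects the integrals $\int_0^1\phi_m/\widehat p_j$, whose squared $\ell^2$ norm can be of order $d$.
\item The proof of Theorem~\ref{thm:riesz} does not transfer verbatim either, because it uses that each component is $P$-centered, and $\phi_m/\widehat p_j$ is not.
\item Your fallback of lowering the threshold does not help. The constant this route produces degrades like $1/d$. That inflates the variance bound to order $d^2M/n$ and shrinks the Chernoff exponent by a factor $d$.
\end{itemize}

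\textbf{The missing idea.} No $P$-orthogonality is needed. Stay in the coordinates the estimator actually uses and compute norms under $\lambda_d$ first.
\begin{itemize}
\item Each $\widehat\psi^{(j)}_m$ is Lebesgue-centered. Write $\widehat{\bm\psi}^\top\bm\eta=\eta_0+\sum_j v_j$ with $v_j=h_j/\widehat p_j-\int_0^1 h_j/\widehat p_j$ and $h_j=\sum_m\eta^{(j)}_m\phi_m$. Fubini then gives exactly $\|\widehat{\bm\psi}^\top\bm\eta\|^2_{L^2(\lambda_d)}=\eta_0^2+\sum_j\|v_j\|^2_{L^2(\lambda)}$.
\item For each block, $\int_0^1 h_j=0$ gives $\langle h_j,v_j\rangle_{L^2(\lambda)}=\int_0^1 h_j^2/\widehat p_j\geq\|h_j\|^2_{L^2(\lambda)}/\kappa_{\max}$. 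Cauchy--Schwarz then yields $\|v_j\|_{L^2(\lambda)}\geq\|\bm\eta^{(j)}\|_2/\kappa_{\max}$.
\item Removing the Lebesgue mean is an orthogonal projection, so $\|v_j\|_{L^2(\lambda)}\leq\|\bm\eta^{(j)}\|_2/\kappa_{\min}$.
\item Use $\kappa_{\min}<1<\kappa_{\max}$ to absorb $\eta_0^2$, then $\kappa_{\min}\leq p\leq\kappa_{\max}$ to pass to $L^2(P)$. This gives $C_{\min}\bm I_K\preceq\bm\Gamma_M\preceq C_{\max}\bm I_K$ for every realization of $\mathcal D_n^0$, which is the paper's Lemma~\ref{lem:geometry}.
\end{itemize}
This is precisely why the dictionary is re-centered under Lebesgue measure and the constant is adjoined. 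The same lemma also supplies the upper bound $\bm\Gamma_M\preceq C_{\max}\bm I_K$, which the bias and variance bounds you cite silently rely on.
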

In plain word, this result illustrates that learning the function and the geometry can be done at the rate of the two rates to do each task separately.

\paragraph{Recovering the true components.}
The centered coordinate blocks of the estimator given in Definition \ref{def:thresh_estim} estimate $f_j-\int_0^1 f_j(t)\,dt$, rather than the true components $f_j$. Nevertheless, these components can be recovered from the same fitted coefficients.

Let denote $\widehat\theta_m^{(j)}$ denote the coefficient of frequency $m \in \{1, \dots, M\}$ and coordinate $j \in \{1,\dots,d\}$ of the vector $\widehat{\bm \eta}_M \in \mathbb{R}^K$ from the equation \eqref{eq:main-thresholded-estimator}.
\begin{definition}
    For all coordinate $j$, let define the debiased estimator $ \bar f_{j,M} $ of $f_j$ as follows
    \begin{equation}\label{eq:component-estimator}
    \bar f_{j,M} \coloneqq \frac{1}{\widehat p_j} \sum_{m=1}^M \widehat\theta_m^{(j)} \phi_m.
\end{equation}
\end{definition}
For a given $\widehat{\bm \eta}_M$, this estimator requires no additional fitting and natively satisfies $ \int_0^1 \bar f_{j,M}(x)\widehat p_j(x)\,dx=0 $. Thus, it is centered with respect to the estimated marginal weights. The following result shows that it consistently estimates the true component.

\begin{theorem}\label{thm:component_recovery}
    under Assumption \ref{assu:dimension}, choose $M \asymp n^{ \frac{1}{2 \beta + 1} }$. Then, for all sufficiently large $n$, the risk on the estimators $\bar f_{j,M}$ achieve the following rate
    \begin{equation}
        \sum_{j=1}^d \mathbb E_{p,f}\!\left[\|\bar f_{j,M} - f_j\|^2 \right] \lesssim d\,n^{-\frac{2\beta}{(2\beta+1)}} + d\,n^{-\frac{2\gamma}{(2\gamma+1)}}.
    \end{equation}
    The implicit constants behind the symbol $\lesssim$ depend only on $\beta,\gamma,L,R,\sigma,\kappa_{\min},\kappa_{\max}$.
\end{theorem}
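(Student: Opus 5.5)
Since the norm in Theorem~\ref{thm:component_recovery} is the $L^2(P)$ norm of a univariate function, I would work with the marginal $L^2(P_j)$ norm and use $\kappa_{\min}\le p_j\le\kappa_{\max}$ to pass to $L^2(\lambda)$. The plan is to compare $\bar f_{j,M}$ with $f_j=g_j/p_j$ through the weighted functions. Write $\bar g_{j,M}\coloneqq\sum_{m\le M}\widehat\theta_m^{(j)}\phi_m$ and $g_j=\sum_m\theta_m^{(j)}\phi_m$. Then
\[
\bar f_{j,M}-f_j=\frac{\bar g_{j,M}-g_j}{\widehat p_j}+g_j\Bigl(\frac1{\widehat p_j}-\frac1{p_j}\Bigr).
\]
Since $\widehat p_j\ge\kappa_{\min}$, squaring and integrating against $p_j\le\kappa_{\max}$ gives
\[
\|\bar f_{j,M}-f_j\|^2\lesssim\|\bar g_{j,M}-g_j\|_{L^2(\lambda)}^2+\int g_j^2(\widehat p_j-p_j)^2\,d\lambda .
\]
For the second term, condition on $\mathcal D_n^0$ and use \eqref{eq:main-density-guarantee} pointwise in $t$. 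This gives an expectation of at most $\rho_{n_0}\|g_j\|_{L^2(\lambda)}^2\le\rho_{n_0}R^2$. Summed over $j$, this contributes $dR^2\rho_{n_0}\lesssim d\,n^{-2\gamma/(2\gamma+1)}$, which is the same bound used in Lemma~\ref{lem:approximation}.

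For the first term, Parseval splits it into the tail $\sum_{m>M}(\theta_m^{(j)})^2\le R^2M^{-2\beta}$ and the coefficient error $\sum_{m\le M}(\widehat\theta_m^{(j)}-\theta_m^{(j)})^2$. Summed over $j$, the tail is $dR^2M^{-2\beta}\asymp d\,n^{-2\beta/(2\beta+1)}$. Let $\bm\eta^\star\coloneqq(\mu_f,(\theta_m^{(j)})_{j,m\le M})\in\mathbb R^K$ be the coefficient vector of $t_M$. It then suffices to bound $\mathbb E\|\widehat{\bm\eta}_M-\bm\eta^\star\|_2^2$, since this dominates the sum over $j$ of the coefficient errors. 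I would control this vector error by a Riesz-type lower bound for the estimated dictionary. Conditionally on $\mathcal D_n^0$, one needs
\[
\lambda_{\min}(\bm\Gamma_M)\ge c\,C_{\min},
\]
uniformly and for $n$ large. Given that bound, on $\zeta_n$ we get
\[
\|\widehat{\bm\eta}_M-\bm\eta^\star\|_2^2\lesssim\|\widehat{\bm\psi}_M^\top(\widehat{\bm\eta}_M-\bm\eta^\star)\|^2\le2\|\widehat f_M-f\|^2+2\|r_M\|^2 .
\]
Taking expectations, Theorem~\ref{thm:main-unknown-density-upper} (more precisely, its proof restricted to $\zeta_n$, combining Lemma~\ref{lemma:bias_variance} with \eqref{eq:bias_upper_bound} and \eqref{eq:var_upper_bound}) together with Lemma~\ref{lem:approximation} gives the target rate.

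On $\zeta_n^c$ we have $\widehat{\bm\eta}_M=0$, so the coefficient error there is $\|\bm\eta^\star\|_2^2\lesssim 1+dR^2$. This is multiplied by $\mathbb P(\zeta_n^c\mid\mathcal D_n^0)$, which Matrix--Chernoff (Theorem~\ref{thm:tropp}) makes polynomially small under Assumption~\ref{assu:dimension}, as in the proof of Theorem~\ref{thm:main-unknown-density-upper}. The constant $\mu_f$ itself also enters only through this argument.

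The main obstacle is the lower Riesz bound for the estimated dictionary including the constant. Consider $h=c_0+\sum_{j,m}a_m^{(j)}\widehat\psi_m^{(j)}$. Its component $\sum_m a_m^{(j)}\widehat\psi_m^{(j)}$ equals $u_j/\widehat p_j-\int u_j/\widehat p_j$ with $u_j=\sum_m a_m^{(j)}\phi_m$, so $h$ differs from a true Riesz combination. I would first write $u_j/\widehat p_j=(u_j\tilde p_j)/p_j$ with $\tilde p_j=p_j/\widehat p_j$. I would then re-center under $P_j$ rather than Lebesgue, absorbing all the constants into $c_0$, which the ANOVA orthogonality to $\mathcal H_0$ handles. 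Finally I would compare the $P_j$-centered part with $\sum_m a_m^{(j)}\psi_m^{(j)}$.

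The cleanest route I would try is the following. First, use that $\|h\|^2\ge p_{\min}\|\cdot\|_{L^2(\lambda^d)}^2$. Second, apply Theorem~\ref{thm:riesz}'s argument directly to the functions $\phi_m/\widehat p_j$, using that $\widehat p_j$ has the same bounds $\kappa_{\min},\kappa_{\max}$. Only boundedness of the weights, not equality with $p_j$, enters that proof. The Lebesgue-centering then simply makes each block orthogonal to constants in $L^2(\lambda)$, which gives the bound with constant $\gtrsim\kappa_{\min}/\kappa_{\max}^2$ deterministically. If the proof of Theorem~\ref{thm:riesz} genuinely uses $p_j$, I would instead pay a perturbation cost of order $\sup|\widehat p_j-p_j|$, which is small for $n$ large with high probability. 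This is where the "sufficiently large $n$" in the statement would enter.
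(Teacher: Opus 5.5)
Your proposal is correct and is essentially the paper's proof. The key step is the deterministic lower Riesz bound $\bm\Gamma\succeq C_{\min}\bm I_K$ for the estimated dictionary with the constant; your ``cleanest route'' is exactly Lemma~\ref{lem:geometry}, so no perturbation argument is needed. This gives $\|\widehat{\bm\eta}_M-\bm\eta^\star\|_2^2\le \frac{2}{C_{\min}}\bigl(\|\widehat f_M-f\|^2+\|r\|^2\bigr)$, which you then combine, as the paper does, with a split of $\bar f_{j,M}-f_j$ into truncation, density-error and coefficient-error terms.

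The paper applies this inequality on every realization, including $\zeta_n^c$, where $\widehat f_M=\widehat{\bm\psi}^\top\widehat{\bm\eta}_M=0$. Your separate Chernoff treatment of the rejection event is therefore harmless but unnecessary. It also means you do not need to bound $\mu_f^2$ separately; in general $\mu_f^2$ can be of order $dR^2$ rather than $O(1)$, which your bound $1+dR^2$ still covers.
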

The proof uses the uniform lower bound on the population Gram matrix to control the fitted coefficient error by the prediction and approximation errors. Combining this control with the componentwise approximation bounds underlying Lemma~\ref{lem:approximation} yields the desired rate. A an immediate consequence, we can provide the following upper bound on the entire class
\begin{equation}\label{eq:component-risk-rate}
\begin{aligned}
    & \inf\limits_{\widehat f_1, \dots, \widehat f_d} \sup_{(p,f)\in\mathcal C_{\beta,\gamma}}
    \sum_{j=1}^d
    \mathbb E_{p,f}\!\left[
        \|\widehat f_{j}-f_j\|^2
    \right]\\
    &\qquad\lesssim d\,n^{-\frac{2\beta}{(2\beta+1)}} + d\,n^{-\frac{2\gamma}{(2\gamma+1)}}.
\end{aligned}
\end{equation}
The complete proof is provided in Appendix.

\subsection{Minimax lower bound}\label{subsec:unkown_lower}

To establish the minimax lower bound, we must distinguish two regimes. The \emph{smooth} one and the \emph{rough} one. If we are in the case where $\gamma \geq \beta$, we can show that an admissible lower bound achieves the rate of $ d \, n^{ - \frac{2 \beta}{2 \beta + 1} } $. In the case where $\gamma < \beta$, the error of estimating the $d$ marginal densities should be higher than $ d \, n^{ - \frac{2 \beta}{2 \beta + 1} } $.

\paragraph{Smooth regime.}
In this paragraph, we study the case where $\gamma \geq \beta$. In the next theorem, we provide a result that is true for every $\gamma > 0$.
\begin{theorem}\label{thm:unknown-density-lower-smooth}
There exists an underlying constant which depends only on $\beta, R , \sigma$ such that
    \begin{equation}
        \mathfrak M(n,d,\mathcal C_{\beta , \gamma}) \gtrsim  d \, n^{ - \frac{2 \beta}{2 \beta + 1} }.
    \end{equation}
\end{theorem}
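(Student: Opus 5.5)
The plan is to reduce the unknown-density minimax risk to the known-density one at a single design, the uniform density, and then invoke Theorem~\ref{thm:ez_lower}. For any estimator $\widehat f$ and any $p_0\in\mathcal P_{\gamma,L}$,
\begin{equation*}
\sup_{(p,f)\in\mathcal C_{\beta,\gamma}}\mathbb E_{p,f}\bigl[\|f-\widehat f\|^2\bigr]\;\geq\;\sup_{f\in\mathcal F_{\beta,R}(p_0)}\mathbb E_{p_0,f}\bigl[\|f-\widehat f\|^2_{L^2(P_0)}\bigr],
\end{equation*}
since the slice $\{p_0\}\times\mathcal F_{\beta,R}(p_0)$ lies in $\mathcal C_{\beta,\gamma}$. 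On this slice the loss is computed in $L^2(P_0)$ and the data are drawn from $P_0$. Hence the right-hand side, after taking the infimum over $\widehat f$, is at least $\mathfrak M(n,d,\mathcal F_{\beta,R}(p_0))$. The direction of the inequality is correct because the known-density infimum is over a larger set of estimators: those allowed to depend on $p_0$.

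I take $p_0=p_{\mathrm{unif}}\equiv1$, which lies in $\mathcal P_{\gamma,L}$ for every $\gamma,L$ by the remark following the definition of the distribution class. It also satisfies Assumption~\ref{assu:boundness} with $p_{\min}=p_{\max}=1$. Then $\psi^{(j)}_m=\phi_m$ and the Riesz constants equal $1$. Theorem~\ref{thm:ez_lower} then gives $\mathfrak M(n,d,\mathcal F_{\beta,R}(p_{\mathrm{unif}}))\gtrsim d\,n^{-2\beta/(2\beta+1)}$. The constant depends on $\beta,R,\sigma$ and on $p_{\min},p_{\max}$, which are now both $1$. This is why the stated constant depends only on $\beta,R,\sigma$, and why the bound holds for every $\gamma>0$: $\gamma$ enters only through membership of the uniform density in the class.

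The one step to check is that Theorem~\ref{thm:ez_lower} applies here uniformly in $d$. If I were to reprove it, I would use the hypercube $f_{\bm\omega}=a_M\sum_{j}\sum_{m=M+1}^{2M}\omega_{jm}\phi_m(x_j)$ with $a_M=a_0M^{-\beta-1/2}$. Membership in $\Theta(\beta,R)$ follows from
\begin{equation*}
\sum_{m=M+1}^{2M}m^{2\beta}a_M^2\leq a_0^2\,2^{2\beta}.
\end{equation*}
Under the product uniform design the $\phi_m(x_j)$ are orthonormal in $L^2(P)$, so the separation is exact: $\|f_{\bm\omega}-f_{\bm\omega'}\|^2=a_M^2\,\rho_H(\bm\omega,\bm\omega')$, with $\rho_H$ the Hamming distance. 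For neighbours $\bm\omega,\bm\omega'$ the Kullback–Leibler divergence of the $n$-sample is $n a_M^2/(2\sigma^2)$ times a constant, which is $O(1)$ for $M\asymp n^{1/(2\beta+1)}$. Assouad's lemma then yields
\begin{equation*}
dM\,a_M^2\asymp d\,M^{-2\beta}\asymp d\,n^{-2\beta/(2\beta+1)},
\end{equation*}
with no dimension-growth condition needed. I expect no real obstacle; the only care needed is to make explicit that the reduction is to a fixed design in the class.
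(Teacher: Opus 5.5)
Your proposal is correct and follows the paper's proof: restrict the supremum to the slice $\{p_{\mathrm{unif}}\}\times\mathcal F_{\beta,R}(p_{\mathrm{unif}})\subseteq\mathcal C_{\beta,\gamma}$, enlarge the set of estimators to those allowed to know $p_{\mathrm{unif}}$, and invoke Theorem~\ref{thm:ez_lower} with both Riesz constants equal to one, so the constant depends only on $\beta,R,\sigma$. The only slip, which does not affect the rate, is in your optional re-derivation: under the uniform design the exact separation is $\|f_{\bm\omega}-f_{\bm\omega'}\|^2=4a_M^2\rho_H(\bm\omega,\bm\omega')$ rather than $a_M^2\rho_H(\bm\omega,\bm\omega')$.
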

This result is proved in Appendix \ref{appendix:lower_smooth} and uses the fact that the uniform density $p_{\text{unif}} \equiv 1$ belongs to $\mathcal P_{\gamma , L}$ for every $\gamma > 0$. In particular, in the \emph{smooth} regime and under Assumption~\ref{assu:dimension}, we have for all sufficiently large $n$,
\begin{equation}\label{eq:unknown-density-minimax-smooth}
    \boxed{
    \mathfrak M(n,d,\mathcal C_{\beta,\gamma})
    \asymp
    d \, n^{ - \frac{2 \beta}{2 \beta + 1} }}
\end{equation}
which leads to the minimax optimality when $\gamma \geq \beta$.

\paragraph{Rough regime.}
In this paragraph, we study the case where $\gamma < \beta$. The natural rate we want to obtain is $d \, n^{ -\frac{2 \gamma}{ 2 \gamma + 1 } }$. In order to have matching bounds which tend to zero, we make the following assumption on potentially diverging dimension in this regime.
\begin{assumption}\label{assu:dimension_2}
    We suppose that $d$ can grow with $n$ at the following rate
    \begin{equation}
        d = o\!\left(n^\frac{2 \gamma}{ 2 \gamma + 1 } \right).
    \end{equation}
\end{assumption}
We establish the following theorem.
\begin{theorem}\label{thm:unknown-density-lower-rough}
    under Assumption \ref{assu:dimension_2}, we have the following minimax lower bound
    \begin{equation}
        \mathfrak M(n,d,\mathcal C_{\beta,\gamma}) \gtrsim d \, n^{ - \frac{2 \gamma}{ 2 \gamma + 1 } },
    \end{equation}
    where all the underlying constants behind the symbol $\lesssim$ depend only on $\beta , \gamma , L , R , \sigma , \kappa_{\min} , \kappa_{\max}$.
\end{theorem}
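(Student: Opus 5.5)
The plan is to prove Theorem~\ref{thm:unknown-density-lower-rough} by Assouad's lemma over a hypercube of \emph{designs}. The weighted components $g_j=p_jf_j$ are held fixed and smooth, so every hypothesis lies in $\mathcal F_{\beta,R}(p)$ whatever $\beta$ is. Only the marginal densities carry the $\gamma$-rough perturbations, and these propagate to the regression function through $f_j=g_j/p_j$.

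\textbf{Construction.} Fix $g_j\equiv g\coloneqq c_0\phi_1$ for every $j$, with $c_0$ small enough that $(c_0,0,0,\dots)\in\Theta(\beta,R)$. Since $\int_0^1\phi_1=0$, the centering $\int f_jp_j=\int g_j=0$ holds automatically for every design, so Assumption~\ref{assu:population-centering} is satisfied.
\begin{itemize}
\item Choose a subinterval $I\subset[0,1]$ of fixed length on which $|\phi_1|\geq c_1>0$.
\item Set $h\asymp n^{-1/(2\gamma+1)}$ and split $I$ into $N\asymp 1/h$ cells with centers $z_k$.
\item Take a smooth kernel $K$ supported in $(-1/2,1/2)$ with $\int K=0$.
\item For $\bm\omega\in\{-1,1\}^{d\times N}$, define the product design
\[
p_{\bm\omega}(\mathbf x)=\prod_{j=1}^d p_{\bm\omega,j}(x_j),\qquad p_{\bm\omega,j}(x)=1+c_2Lh^{\gamma}\sum_{k=1}^N\omega_{jk}K\big((x-z_k)/h\big).
\]
\end{itemize}
Because $\int K=0$, each $p_{\bm\omega,j}$ is a density. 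The bumps have disjoint supports, so for $c_2$ small each marginal is $(\gamma,L)$-Hölder. The remaining condition is the joint bound $\kappa_{\min}\leq p_{\bm\omega}\leq\kappa_{\max}$. For a product this requires $\prod_j(1\pm c_2Lh^\gamma\|K\|_\infty)$ to stay in $[\kappa_{\min},\kappa_{\max}]$, which holds if $d\,h^\gamma\to0$. This is where a dimension condition enters. If Assumption~\ref{assu:dimension_2} turns out not to suffice for it, the fallback is to perturb only a block of $d'\asymp d$ coordinates and shrink the amplitude by a constant. The hypotheses are then $(p_{\bm\omega},f_{\bm\omega})$ with $f_{\bm\omega,j}=g/p_{\bm\omega,j}$, and they lie in $\mathcal C_{\beta,\gamma}$.

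\textbf{Separation and information.} The two estimates to establish are the following.
\begin{itemize}
\item \emph{Separation.} We have $f_{\bm\omega,j}-f_{\bm\omega',j}=g\,(p_{\bm\omega',j}-p_{\bm\omega,j})/(p_{\bm\omega,j}p_{\bm\omega',j})$. This difference is supported on the cells where $\bm\omega,\bm\omega'$ differ, and on each such cell its squared $L^2(\lambda)$ mass is $\gtrsim c_1^2h^{2\gamma}\cdot h$. Hence the loss is additive over cells, as Assouad's lemma requires.
\item \emph{Information.} For neighbours $\bm\omega\sim\bm\omega'$, the per-sample KL divergence splits by the chain rule into a design part and a regression part:
\[
\mathrm{KL}(p_{\bm\omega}\|p_{\bm\omega'})+\mathbb E_{p_{\bm\omega}}\big[(f_{\bm\omega}-f_{\bm\omega'})^2\big]/(2\sigma^2).
\]
The first term is $\lesssim\int(\Delta p_j)^2\asymp h^{2\gamma+1}$, using the lower bound on the densities, since the other coordinates cancel in the product. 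The second term is also $\lesssim h^{2\gamma+1}$. Over $n$ samples the total is $\lesssim nh^{2\gamma+1}\lesssim1$ by the choice of $h$.
\end{itemize}

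\textbf{Main obstacle.} The main obstacle is that the loss $\|f-\widehat f\|^2$ is measured in $L^2(P)$ for the unknown $P$, so it changes across hypotheses. I would handle this by lower bounding $\|\cdot\|^2\geq\kappa_{\min}\|\cdot\|^2_{L^2(\lambda^{\otimes d})}$ uniformly over the class. The estimator's error is then measured in a fixed norm. Restricting the comparison to the $d$ additive coordinates and cells, by projecting $\widehat f$ onto the cell-wise bump directions, makes the loss decompose as $\sum_{j,k}$ of cell-local terms, each $\gtrsim h^{2\gamma+1}\mathbf 1\{\widehat\omega_{jk}\neq\omega_{jk}\}$.

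\textbf{Conclusion.} Assouad's lemma then gives
\[
\mathfrak M\gtrsim dN\cdot h^{2\gamma+1}\cdot\big(1-\sqrt{nh^{2\gamma+1}}\big)\asymp d\,h^{2\gamma}\asymp d\,n^{-2\gamma/(2\gamma+1)},
\]
with constants depending only on $\beta,\gamma,L,R,\sigma,\kappa_{\min},\kappa_{\max}$.
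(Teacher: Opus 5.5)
Your overall plan matches the paper's. You freeze the weighted components and put all $\gamma$-rough variation into the marginals. You bound the $L^2(P_{\bm\omega})$ loss below by $\kappa_{\min}\|\cdot\|^2_{L^2(\lambda_d)}$, and you split the KL into a design part and a regression part, each $\lesssim h^{2\gamma+1}$.

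\textbf{The gap.} The product design fails exactly in the dimension range the theorem covers. For a product density, $\sup_{\mathbf x}p_{\bm\omega}(\mathbf x)=\prod_j\sup_x p_{\bm\omega,j}(x)$. Each perturbed marginal carries a mean-zero perturbation of sup-norm $\asymp h^\gamma$, since $\int K=0$ forces $K$ to take both signs. Hence $\sup p_{\bm\omega}\geq(1+ch^\gamma)^{d'}$, where $d'$ is the number of perturbed coordinates. The constraint $p_{\bm\omega}\leq\kappa_{\max}$ therefore forces $d'h^\gamma\lesssim1$, i.e.\ $d'\lesssim n^{\gamma/(2\gamma+1)}$. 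Assumption~\ref{assu:dimension_2} allows $d$ up to $o(n^{2\gamma/(2\gamma+1)})$, so for $n^{\gamma/(2\gamma+1)}\ll d\ll n^{2\gamma/(2\gamma+1)}$ your hypotheses leave $\mathcal C_{\beta,\gamma}$.

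Your fallback does not repair this. With $d'\asymp d$, shrinking the amplitude by a constant still leaves $d'h^\gamma\to\infty$. No re-tuning of this product hypercube can work either. With bump amplitude $A$, the constraints $A\lesssim h^\gamma$, $nA^2h\lesssim1$, $d'A\lesssim1$ and $d'\leq d$ cap the Assouad bound $d'A^2$ at $\min\{d\,n^{-2\gamma/(2\gamma+1)},\,n^{-\gamma/(2\gamma+1)}\}$. That is strictly below the target in this regime.

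\textbf{The missing idea.} You need dependence between coordinates. The class constrains the joint density and the marginals separately, so you want a coupled density whose bounds are dimension-free while its marginals still carry $h^\gamma$-bumps. The paper takes
\[
p_{\bm\omega}(\mathbf x)=1+\varepsilon\sin\Bigl(\sum_{j\leq s}u_{\bm\omega,j}(x_j)\Bigr),
\qquad u_{\bm\omega,j}=ah^\gamma\sum_k\omega_{jk}b_k,
\]
with antisymmetric bumps $b_k$. This gives $\int_0^1\sin u_{\bm\omega,j}=0$ and $\int_0^1\cos u_{\bm\omega,j}=q_h$ for every sign pattern. Consequently:
\begin{itemize}
\item $1-\varepsilon\leq p_{\bm\omega}\leq1+\varepsilon$ for every $d$.
\item The marginals are exactly $1+\varepsilon q_h^{s-1}\sin u_{\bm\omega,j}$.
\item Since $1-q_h\lesssim a^2h^{2\gamma}$, the damping factor $q_h^{s-1}$ stays bounded below as long as $sh^{2\gamma}\lesssim1$. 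Taking $s=\min\{d,\lfloor h^{-2\gamma}\rfloor\}$, this is where the threshold $n^{2\gamma/(2\gamma+1)}$ comes from.
\item The neighbouring design KL stays $\lesssim h^{2\gamma+1}$ independently of $d$, because $\sin$ is $1$-Lipschitz and so $|p_{\bm\omega}-p_{\bm\omega'}|\leq2\varepsilon ah^\gamma|b_k(x_j)|$.
\end{itemize}

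With this design substituted, the rest of your argument goes through. Your non-constant $g=c_0\phi_1$ makes the bump directions non-orthogonal across coordinates. This is harmless if you use the minimum-distance form of Assouad together with
\[
\Bigl\|\sum_jF_j(x_j)\Bigr\|^2_{L^2(\lambda_d)}\geq\sum_j\mathrm{Var}_\lambda(F_j)\geq(1-|I|)\sum_j\|F_j\|^2_{L^2(\lambda)}
\]
for $F_j$ supported in $I$. The paper avoids the issue by making $g$ constant on the bump region, which yields an exact affine family with orthogonal directions.
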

In this rough regime, we have $d \, n^{ - \frac{2 \gamma}{ 2 \gamma + 1 } } \rightarrow 0$ and $ d \, n^{ - \frac{2 \beta}{ 2 \beta + 1 } } = o( d \, n^{ - \frac{2 \gamma}{ 2 \gamma + 1 } } ) $ which leads to the minimax optimality
\begin{equation}
    \boxed{
    \mathfrak M(n,d,\mathcal C_{\beta,\gamma})
    \asymp
    d \, n^{ - \frac{2 \gamma}{2 \gamma + 1} }}
\end{equation}
The proof of this theorem is deferred to Appendix \ref{appendix:lower_rough}.

The main idea is to vary the marginal densities while keeping the weighted regression components fixed. Choose a \emph{simple} smooth univariate function $g$ with zero integral and satisfying the regular Sobolev smoothness assumption. For a sign array $\bm \omega$, construct the following joint density
\begin{equation}
    p_{\bm\omega}(\mathbf x) \coloneqq 1+\varepsilon \sin\!\left(\sum_{j=1}^d u_{{\bm\omega},j}(x_j)\right), \quad \mathbf x \in [0,1]^d,
\end{equation}
where each $u_{\omega,j}$ is a signed sum of disjoint, antisymmetric smooth bumps of width $h$ and amplitude $a h^\gamma$, with $a,\varepsilon>0$ sufficiently small. Then define the components $f_{\bm \omega , j}$ such that $p_{\bm\omega,j}f_{\bm\omega,j}=g$, so the Sobolev constraint and the centering condition hold for every alternative.

The sine coupling keeps the joint density uniformly bounded,
independently of $d$. Antisymmetry gives explicit marginals of the form $p_{\bm\omega,j} = 1 + t_h\sin u_{\bm\omega,j}$. The same symmetry yields an affine representation of $f_{\bm\omega}$ with order $d/h$ mutually orthogonal directions.

For adjacent sign arrays, the squared separation is of order
$h^{2\gamma+1}$. Writing $Q_{\bm\omega}$ for the law of one observed pair and using standard information theory results \citep{cover1991elements}, we show that the information in both the covariates and the responses satisfies
\begin{equation}
    \operatorname{KL} \bigl(Q_{\bm\omega}^{\otimes n} \| Q_{{\bm\omega}'}^{\otimes n}\bigr) \lesssim a^2 n h^{2\gamma+1}.
\end{equation}
Taking $h\asymp n^{-1/(2\gamma+1)}$ keeps neighboring alternatives statistically indistinguishable. Assouad's lemma \citep{tsybakov2009introduction} then combines their separation over the $d/h$ directions, giving the lower bound $d h^{2\gamma}=d\,n^{-2\gamma/(2\gamma+1)}$. Uniform joint-density bounds allow the argument in $L^2(\lambda_d)$ to transfer to the prediction loss under each $P_{\bm\omega}$.

\section{Discussion}\label{sec:discussion}

\paragraph{Conclusion.}
We established minimax rates for additive regression under dependent random designs with growing dimension. Our coupled smoothness framework identifies when estimating the marginal densities preserves the known-density rate and when their regularity determines the minimax rate.

\paragraph{Limitations \& future work.}
The main limitation is the assumption that the joint density is bounded above and away from zero uniformly in the dimension. Although density bounds are classical, this uniformity remains restrictive. Our guarantees also require dimension-growth conditions and do not cover sparse high-dimensional models. Relaxing the density bounds is an important direction.

A natural extension is to adapt the Riesz representation of \citet{ferrere2026generalized} to (potentially sparse) nonparametric interaction models, retaining dependent designs and diverging dimension. The aim would be to recover minimax rates with known marginals, then quantify the additional costs of estimating multivariate marginal densities and selecting active components. Kernel estimators extending \citet{raskutti2012minimax} or deep neural network estimators following \citet{bhattacharya2024deep} provide two concrete approaches
to investigate.

\newpage

\subsubsection*{Acknowledgements}
The authors thank Nicolas Bousquet (EDF R\&D) for his careful reading and valuable comments. This work was partially supported by the French \emph{Association Nationale de la Recherche et de la Technologie} (ANRT) through a CIFRE PhD project at \'Electricité de France (EDF). Fabrice Gamboa and Jean-Michel Loubes acknowledge support from the ANR-3IA Artificial and Natural Intelligence Toulouse Institute (ANITI).

\bibliography{biblio}



\clearpage
\appendix
\thispagestyle{empty}

\onecolumn
\aistatstitle{Appendix}

\startcontents[appendices]
\printcontents[appendices]{l}{1}{%
    \setcounter{tocdepth}{2}%
}
\clearpage

\section{General notations}

Let $\lambda$ denote the Lebesgue measure on $[0,1]$. We denote by $\lambda_d\coloneqq\lambda^{\otimes d}$ the product Lebesgue measure on $[0,1]^d$. Recall that for any $j \in \{1,\dots,d\}$, we denote by $P_j$ the marginal distribution of $X_j$ and by $p_j$ the corresponding density, such that $ \frac{d P_j}{d \lambda} = p_j $. We denote by $L^2(P)$ the Hilbert space of square integrable and $P-$measurable functions defined on $[0,1]^d$. For simplicity, we denote by $\| \cdot \|$ the $L^2(P)-$norm instead of $\| \cdot \|_{ L^2(P) }$ and $\| \cdot \|_2$ the Euclidean norm. When dealing with potentially infinite sequences, we denote by $\| \cdot \|_{\ell^2}$ the corresponding $\ell^2$ norm. When computing a norm under the Lebesgue measure, we will denote explicitly $\| \cdot \|_{L^2(\lambda)}$ for the univariate case and $\| \cdot \|_{L^2(\lambda_d)}$ for the $d-$variate case.

Throughout this Appendix, logarithms are natural. For probability measures $P,Q$ on the same measurable space, we use the notations
\begin{equation}
    \operatorname{TV}(P,Q):=\sup_A|P(A)-Q(A)|,
\end{equation}
for the total variation and
\begin{equation}
    \operatorname{KL}(P\|Q) \coloneqq
\begin{cases}
\displaystyle
\int\log\!\left(\frac{dP}{dQ}\right)\,dP,
&\text{if }P\ll Q,\\[1ex]
+\infty,
&\text{otherwise},
\end{cases}
\end{equation}
for the Kullback-Leibler divergence.

We recall the real trigonometric basis \citep{tsybakov2009introduction},
with $\phi_0\equiv1$ and, for every integer $m\geq1$,
\begin{equation}\label{eq:fourier}
 \phi_{2m-1}(x)=\sqrt2\cos(2\pi mx),\qquad
 \phi_{2m}(x)=\sqrt2\sin(2\pi mx),\qquad x\in[0,1].
\end{equation}
The functions $(\phi_m)_{m\geq1}$ form an orthonormal basis of the
mean-zero subspace of $L^2(\lambda)$ and satisfy
\begin{equation}\label{eq:fourier-properties}
 \int_0^1\phi_m(x)\,dx=0,\qquad
 \|\phi_m\|_{L^\infty([0,1])}\leq\sqrt2.
\end{equation}

\section{Proof of Theorem \ref{thm:riesz}}

We want to show that $\Psi$ is a Riesz sequence \citep{brezis2011functional}. Let take a finite sequence $\bm a \coloneqq \left( (a_m^{(j)})_{m \geq 1} \right)_{j=1,\dots,d}$, we define the function $f$ as
    \begin{equation}
        f \coloneqq \sum\limits_{j=1}^d \sum\limits_{m \geq 1} a_m^{(j)}  \psi_m^{(j)}.
    \end{equation}

Recall that we have for any function $u$, $\operatorname{Var}_P(u) = \inf\limits_{x \in \mathbb R} \int (u - x)^2 dP$, which leads to the fundamental inequality of the variance
\begin{equation}
    p_{\min}\operatorname{Var}_{\lambda^{\otimes d}}(u) \leq \operatorname{Var}_{P}(u) \leq p_{\max} \operatorname{Var}_{\lambda^{\otimes d}}(u).
\end{equation}

In our setting, each component $f_j$ is centered under $P_j$, so this previous inequality becomes:
\begin{equation}\label{eq:var}
    p_{\min} \sum\limits_{j=1}^d \operatorname{Var}_{\lambda}(f_j) \leq \| f \|^2 \leq p_{\max} \sum\limits_{j=1}^d \operatorname{Var}_{\lambda}(f_j)
\end{equation}

The same reasoning on each component yields
\begin{equation}
    \forall j \in \{1,\dots,d\}, \qquad p_{\min} \operatorname{Var}_{\lambda}(f_j) \leq \| f_j \|^2 \leq p_{\max} \operatorname{Var}_{\lambda}(f_j)
\end{equation}

By using this last inequality and summing over $j$, we have : 
\begin{equation}
    \frac{1}{p_{\max}} \sum\limits_{j=1}^d \| f_j \|^2 \leq \sum\limits_{j=1}^d \operatorname{Var}_{\lambda}(f_j) \leq  \frac{1}{p_{\min}} \sum\limits_{j=1}^d \| f_j \|^2
\end{equation}

By injecting this in equation \eqref{eq:var}, we obtain
\begin{equation}
    \frac{p_{\min}}{p_{\max}} \sum\limits_{j=1}^d \| f_j \|^2 \leq \| f\|^2 \leq  \frac{p_{\max}}{p_{\min}} \sum\limits_{j=1}^d \| f_j \|^2
\end{equation}

Finally, observe that by definition
\begin{equation}
    f_j = \sum_m \frac{ a_m^{(j)} \phi_m^{(j)} }{p_j},
\end{equation}
so the corresponding $\| \cdot \|^2$ norm satisfies
\begin{equation}
    \frac{1}{p_{\max}} \sum_m \left( a_m^{(j)} \right)^2 \leq \| f_j \|^2 = \int\left( \sum_m a_m^{(j)} \phi_m^{(j)} \right)^2 \frac{1}{p_j} d\lambda \leq \frac{1}{p_{\min}} \sum_m \left( a_m^{(j)} \right)^2,
\end{equation}
which leads to the desired conclusion.

\section{Proof of Theorem \ref{thm:representation}}

The Theorem~\ref{thm:riesz} shows that the collection of function $\Psi$ is a Riesz sequence of $L^2(P)$, so in order to have the result, it suffices to show that it is a Riesz basis \citep{brezis2011functional} of $\mathcal H$.

Let take a function $f = f_1 + \dots + f_d \in \mathcal H$ and set for every $j$, $g_j = p_j f_j$. The $g_j$ are centered and belong to $L^2(\lambda)$. Indeed, we have
\begin{equation}
    \int_{[0,1]} g_j(x) dx = \int_{[0,1]} f_j(x) p_j(x) dx = \mathbb{E}_{P_j}[f_j(X)] = 0,
\end{equation}
and under Assumtion \ref{assu:boundness}
\begin{equation}
   \| g_j \|_{L^2(\lambda)}^2 = \int_{[0,1]} g_j^2(x) dx = \int_{[0,1]} f^2_j(x) p_j^2(x) dx \leq p_{\max} \| f_j \|^2 < \infty. 
\end{equation}
Using that the trigonometric basis $(\phi_m)$ is a Hilbert basis of $L^2(\lambda)$ \citep{brezis2011functional,tsybakov2009introduction}, any centered function of $L^2(\lambda)$ expresses as $\sum_{m \geq 1} \alpha_m \phi_m$, which yields the following fundamental expansion of $g_j$
\begin{equation}
    g_j = \sum\limits_{m=1}^{\infty} \theta_m^{(j)} \phi_m, \qquad j=1,\dots,d,
\end{equation}
where $\theta_m^{(j)} \coloneqq \langle g_j , \phi_m\rangle_{ L^2(\lambda) }$. Now, let define the residual $R_M^{(j)}$ of order $M \geq 1$ as
\begin{equation}
    R_M^{(j)} \coloneqq f_j - \sum\limits_{m=1}^M \theta_m^{(j)} \psi_m^{(j)}.
\end{equation}
We have
\begin{align}
    \left\| R_M^{(j)} \right\|_{ L^2(P_j) }^2 &= \left\| f_j - \sum\limits_{m=1}^M \theta_m^{(j)} \psi_m^{(j)} \right\|^2 \\
    &= \int_{[0,1]} \left( f_j(x) - \sum\limits_{m=1}^M \theta_m^{(j)} \psi_m^{(j)}(x) \right)^2 p_j(x) \, dx \\
    &= \int_{[0,1]} \left( \frac{ g_j(x) - \sum\limits_{m=1}^M \theta_m^{(j)} \phi_m(x) }{p_j(x)} \right)^2 p_j(x) \,dx \\
    &\leq \frac{1}{ p_{\min} } \int_{[0,1]} \left( g_j(x) - \sum\limits_{m=1}^M \theta_m^{(j)} \phi_m(x) \right)^2 \, dx \\
    &\leq \frac{1}{ p_{\min} } \left \| g_j - \sum\limits_{m=1}^M \theta_m^{(j)} \phi_m \right\|^2_{ L^2(\lambda) }
\end{align}
This last quantity converges to $0$ when $M \rightarrow \infty$ thanks to the expansion of $g_j$ in the trigonometric basis, which ends the proof and allows to obtain the following expansion for any $f \in \mathcal H$:
\begin{equation}
\boxed{
    f = f_1 + \dots + f_d = \sum\limits_{j=1}^d \sum\limits_{m=1}^{\infty} \underbrace{\left( \int_{[0,1]} f_j(t) \phi_m(t) p_j(t) \, dt \right)}_{ \theta_m^{(j)} } \cdot \psi_m^{(j)}.}
\end{equation}

\section{Technical lemmas}

\subsection{Information theoretic lemmas}

\begin{lemma}[Elementary KL bounds, \citet{cover1991elements}]
\label{lem:e2e-kl-bounds}
For every measurable set $A$, with $t=Q(A)$ and $s=Q'(A)$,
\begin{equation}
    \operatorname{KL}(Q \| Q') \ge t\log(t/s)+(1-t)\log((1-t)/(1-s)),
\end{equation}
with the usual continuous or infinite boundary values. If $q'>0$ wherever $q>0$, then
\begin{equation}
    \operatorname{KL}(Q \| Q') \le\int\frac{(q-q')^2}{q'}.
\end{equation}
\end{lemma}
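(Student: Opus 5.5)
The two inequalities of Lemma~\ref{lem:e2e-kl-bounds} are proved separately. The first is the data-processing inequality, the second the comparison of KL with the $\chi^2$ divergence.

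\emph{Lower bound.} Consider the measurable map $T=\mathbf 1_A$, which sends $Q$ to the Bernoulli law $\mathrm{Ber}(t)$ and $Q'$ to $\mathrm{Ber}(s)$. The data-processing inequality (equivalently, the log-sum inequality) gives $\operatorname{KL}(Q\|Q')\ge \operatorname{KL}(\mathrm{Ber}(t)\|\mathrm{Ber}(s))$. The right-hand side is exactly $t\log(t/s)+(1-t)\log((1-t)/(1-s))$.

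To stay self-contained, I would prove this directly. If $Q\not\ll Q'$, the left side is $+\infty$ and there is nothing to show. Otherwise, split $\int \log(q/q')\,q$ into the integrals over $A$ and over $A^c$. On $A$ with $t>0$, apply Jensen's inequality to the convex function $x\mapsto x\log x$ under the normalized measure $Q'(\cdot\cap A)/s$, with $x=q/q'$:
\[
\int_A \frac{q}{q'}\log\frac{q}{q'}\,dQ' \;\ge\; s\cdot\frac ts\log\frac ts .
\]
The same argument on $A^c$ gives the second term. The boundary conventions are handled as follows. $0\log 0=0$. If $s=0<t$, then $Q\not\ll Q'$ and both sides are infinite.

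\emph{Upper bound.} Use $\log x\le x-1$ with $x=q/q'$ on $\{q>0\}$, where $q'>0$ by hypothesis. This gives
\[
\operatorname{KL}(Q\|Q')=\int q\log\frac{q}{q'}\le \int_{\{q>0\}} q\Bigl(\frac{q}{q'}-1\Bigr)=\int_{\{q>0\}}\frac{q^2}{q'}-1 .
\]
We also have
\[
\int\frac{(q-q')^2}{q'}=\int\frac{q^2}{q'}-2\int q+\int q' = \int \frac{q^2}{q'}-1 ,
\]
where the integral runs over $\{q'>0\}$. Since $\int_{\{q>0\}}q^2/q'\le\int q^2/q'$, the claim follows.

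The only delicate points are the measure-theoretic ones. First, the integrals must be taken over the correct supports. Second, Jensen must not be applied where $s=0$ or $t\in\{0,1\}$; these degenerate cases are checked by hand. No substantial obstacle is expected.
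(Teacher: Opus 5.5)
Your proof is correct: the data-processing step under the indicator map $\mathbf 1_A$ gives the lower bound, and $\log x\le x-1$ together with the support hypothesis (which makes $\int_{\{q'>0\}}q=1$) gives the $\chi^2$ upper bound. The paper gives no argument of its own for this lemma and simply cites \citet{cover1991elements}, so your proposal supplies the standard argument behind that citation.
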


\begin{lemma}[Pinsker's inequality, \citet{cover1991elements}]
\label{lem:e2e-pinsker}
For any probability measures $Q,Q'$,
\begin{equation}
    \operatorname{TV}(Q,Q') \le\sqrt{\operatorname{KL}(Q \| Q')/2}.
\end{equation}
\end{lemma}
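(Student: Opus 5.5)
The statement to prove is Pinsker's inequality, $\operatorname{TV}(Q,Q')\le\sqrt{\operatorname{KL}(Q\|Q')/2}$. The plan is to reduce it to a two-point (Bernoulli) inequality using the data-processing bound already recorded in Lemma~\ref{lem:e2e-kl-bounds}, and then prove the Bernoulli case by a one-variable calculus argument.

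\textbf{Trivial case and choice of the reduction set.} If $Q\not\ll Q'$, then $\operatorname{KL}(Q\|Q')=+\infty$ and there is nothing to prove. Otherwise, fix the set $A=\{q>q'\}$, where $q,q'$ are densities of $Q,Q'$ with respect to a common dominating measure, for instance $(Q+Q')/2$. For this $A$ one has $\operatorname{TV}(Q,Q')=Q(A)-Q'(A)=t-s$. (For an arbitrary measurable $A$ the same argument also gives $|Q(A)-Q'(A)|\le\sqrt{\operatorname{KL}/2}$, and taking the supremum over $A$ concludes directly.)

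\textbf{Reduction to two points.} By the first inequality of Lemma~\ref{lem:e2e-kl-bounds},
\[
\operatorname{KL}(Q\|Q')\ge k(t,s) \coloneqq t\log(t/s)+(1-t)\log\bigl((1-t)/(1-s)\bigr).
\]
It therefore suffices to show the two-point Pinsker inequality $k(t,s)\ge 2(t-s)^2$ for all $t,s\in[0,1]$.

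\textbf{Two-point inequality.} Fix $t$ and set $F(s)\coloneqq k(t,s)-2(t-s)^2$ for $s\in(0,1)$. Then $F(t)=0$. Differentiating in $s$,
\[
F'(s)=-\frac{t}{s}+\frac{1-t}{1-s}+4(t-s)=(s-t)\left(\frac{1}{s(1-s)}-4\right).
\]
Since $s(1-s)\le 1/4$, the factor $\frac{1}{s(1-s)}-4$ is nonnegative. Hence $F$ is nonincreasing on $(0,t]$ and nondecreasing on $[t,1)$, so $F\ge F(t)=0$.

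The boundary cases $s\in\{0,1\}$ need separate handling. If $t\neq s$ there, then $k=+\infty$ and the inequality is trivial; if $t=s$ both sides are $0$. Together with the usual conventions $0\log 0=0$ from the lemma, this covers all endpoint cases.

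\textbf{Conclusion.} Combining the two steps gives $2\operatorname{TV}(Q,Q')^2\le\operatorname{KL}(Q\|Q')$, which is exactly the claim. The only mildly delicate step is the endpoint and convention bookkeeping in the two-point inequality; the derivative computation itself is routine.
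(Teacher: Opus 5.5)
Your proof is correct. The paper does not prove this lemma itself and only cites \citet{cover1991elements}. Your argument is the standard proof given in that reference: reduce to the two-point case through the grouping inequality of Lemma~\ref{lem:e2e-kl-bounds} with $A=\{q>q'\}$, then use $F'(s)=(s-t)\bigl(\frac{1}{s(1-s)}-4\bigr)$ to get $F\ge F(t)=0$; when $t\in\{0,1\}$, read $F(t)=0$ as the limit $s\to t$, since that point lies outside $(0,1)$.
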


\begin{lemma}[Assouad's lemma for Hamming loss, \citep{tsybakov2009introduction}]
\label{lem:assouad}
Let $N\geq1$, and let $\{P_{\bm\omega}:{\bm\omega}\in\{-1,+1\}^{N}\}$ be a family of probability measures on the same measurable space. Suppose that an observation $Z$ has distribution $P_{\bm\omega}$.

Define the Hamming distance by
\begin{equation}
    \rho({\bm\omega},{\bm\omega}') \coloneqq \sum_{k=1}^{N}\mathbf 1\{\omega_k\neq\omega'_k\},
\end{equation}
and let ${\bm\omega}^{(k)}$ denote the vector obtained by reversing only the $k$-th sign of ${\bm\omega}$. If, for some $\eta\in[0,1]$,
\begin{equation}
    \max_{{\bm\omega}\in\{-1,+1\}^{N}} \max_{1\leq k\leq N} \operatorname{TV}(P_\omega,P_{{\bm\omega}^{(k)}}) \leq\eta,
\end{equation}
then
\begin{equation}
    \inf_{\widehat{\bm\omega}} \sup_{{\bm\omega}\in\{-1,+1\}^{N}} \mathbb E_{\bm\omega} \bigl[\rho(\widehat{\bm\omega}(Z),{\bm\omega})\bigr] \geq \frac{N}{2}(1-\eta),
\end{equation}
where the infimum ranges over all measurable estimators taking values in $\{-1,+1\}^{N}$.
\end{lemma}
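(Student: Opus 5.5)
The plan is the standard averaging argument: reduce the supremum over the hypercube to an average under the uniform prior, split the Hamming loss coordinatewise, and lower bound each coordinate by a two-point testing problem between neighboring hypotheses, whose error is controlled by the total variation assumption.

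\textbf{Reduction to an average.} Fix an arbitrary measurable estimator $\widehat{\bm\omega}$ with values in $\{-1,+1\}^N$. Since a supremum dominates any average, I will bound
\begin{equation}
\sup_{\bm\omega}\mathbb E_{\bm\omega}[\rho(\widehat{\bm\omega},\bm\omega)]
\geq 2^{-N}\sum_{\bm\omega}\mathbb E_{\bm\omega}[\rho(\widehat{\bm\omega},\bm\omega)]
= \sum_{k=1}^N 2^{-N}\sum_{\bm\omega}P_{\bm\omega}(\widehat\omega_k\neq\omega_k).
\end{equation}
The equality follows from linearity, since the Hamming distance is a sum of coordinate indicators. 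It then suffices to show that, for each fixed $k$, the inner average is at least $(1-\eta)/2$.

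\textbf{Pairing neighbors.} For fixed $k$, the map $\bm\omega\mapsto\bm\omega^{(k)}$ is an involution without fixed points, so it partitions the cube into $2^{N-1}$ pairs $\{\bm\omega,\bm\omega^{(k)}\}$ with $\omega_k=+1$. Regrouping the sum over these pairs gives
\begin{equation}
2^{-N}\sum_{\bm\omega}P_{\bm\omega}(\widehat\omega_k\neq\omega_k)
=2^{-(N-1)}\sum_{\bm\omega:\,\omega_k=+1}\tfrac12\Bigl[P_{\bm\omega}(\widehat\omega_k\neq +1)+P_{\bm\omega^{(k)}}(\widehat\omega_k\neq -1)\Bigr].
\end{equation}

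\textbf{Two-point testing bound.} Let $A=\{\widehat\omega_k=-1\}$, which is measurable. Then $P_{\bm\omega}(\widehat\omega_k\neq+1)=P_{\bm\omega}(A)$ and $P_{\bm\omega^{(k)}}(\widehat\omega_k\neq-1)=1-P_{\bm\omega^{(k)}}(A)$. Their sum is therefore
\begin{equation}
1-\bigl(P_{\bm\omega^{(k)}}(A)-P_{\bm\omega}(A)\bigr)\geq 1-\operatorname{TV}(P_{\bm\omega},P_{\bm\omega^{(k)}})\geq 1-\eta,
\end{equation}
by the definition of total variation as a supremum over sets and the hypothesis of the lemma. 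Each bracket is thus at least $(1-\eta)/2$ after the factor $\tfrac12$. Averaging over the $2^{N-1}$ pairs preserves this bound, and summing over $k=1,\dots,N$ yields $\tfrac N2(1-\eta)$.

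\textbf{Conclusion and obstacles.} Since $\widehat{\bm\omega}$ was arbitrary, taking the infimum over estimators gives the claim. There is no real obstacle here. The only points needing care are the bookkeeping in the pairing step, namely that each $\bm\omega$ is counted exactly once, and using the paper's convention $\operatorname{TV}=\sup_A|P(A)-Q(A)|$ rather than the $L^1$ distance, which would introduce a factor of two.
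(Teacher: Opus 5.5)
Your proof is correct. It is the standard averaging-and-pairing argument behind the cited result (Tsybakov's Theorem 2.12), which the paper invokes without reproducing. The only cosmetic differences are that you bound each two-point testing error directly by $1-\operatorname{TV}(P_{\bm\omega},P_{\bm\omega^{(k)}})$ through the set $A=\{\widehat\omega_k=-1\}$ instead of passing through the affinity $\int\min(dP_{\bm\omega},dP_{\bm\omega^{(k)}})$, and that you work on $\{-1,+1\}^N$ directly, so no relabeling from $\{0,1\}^N$ is needed.
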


\begin{lemma}[Orthogonal Assouad bound]
\label{lem:e2e-assouad}
Let $F_{\bm\omega} = F_0 + \sum_{k=1}^N \omega_k H_k$,
${\bm\omega}\in\{-1,1\}^N$, where the $H_k$ are orthogonal in a real Hilbert space and $\|H_k\|^2=v^2>0$.
Let $Q_{\bm\omega}$ be the observation law. If adjacent vertices satisfy
\begin{equation}
    \max_{\bm\omega,k} \operatorname{KL}(Q_{\bm\omega} \| Q_{\bm\omega^{(k)}}) \le\alpha<2,
\end{equation}
then
\begin{equation}
    \inf_{\widehat F}\sup_{\bm\omega} \mathbb E_{\bm\omega}\|\widehat F-F_{\bm\omega}\|^2 \ge\frac{Nv^2}{2}\left(1-\sqrt{\alpha/2}\right).
\end{equation}
Here ${\bm\omega}^{(k)}$ is obtained by reversing sign $k$.
\end{lemma}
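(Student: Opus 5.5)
The plan is to reduce Lemma~\ref{lem:e2e-assouad} to Assouad's lemma (Lemma~\ref{lem:assouad}) through a projection argument that exploits the orthogonality of the directions $H_k$. Given any estimator $\widehat F$ (taking values in the Hilbert space), I will define the sign estimator
\begin{equation*}
    \widehat\omega_k \coloneqq \operatorname{sign}\bigl(\langle \widehat F-F_0, H_k\rangle\bigr),
\end{equation*}
with ties broken arbitrarily (say $+1$). Since the $H_k$ are orthogonal with common norm $v$, Bessel's inequality gives
\begin{equation*}
    \|\widehat F-F_{\bm\omega}\|^2 \ge \sum_{k=1}^N \frac{\langle \widehat F-F_{\bm\omega},H_k\rangle^2}{v^2}
    = \sum_{k=1}^N \frac{\bigl(\langle \widehat F-F_0,H_k\rangle-\omega_k v^2\bigr)^2}{v^2}.
\end{equation*}
Set $c_k=\langle\widehat F-F_0,H_k\rangle$. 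Whenever $\widehat\omega_k\neq\omega_k$, the number $c_k$ has the sign opposite to $\omega_k$ (or is zero), so $|c_k-\omega_kv^2|\ge v^2$. Each mistaken coordinate therefore contributes at least $v^2$, and
\begin{equation*}
    \|\widehat F-F_{\bm\omega}\|^2\ge v^2\,\rho(\widehat{\bm\omega},\bm\omega).
\end{equation*}

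Taking expectations and then the supremum over $\bm\omega$ yields $\sup_{\bm\omega}\mathbb E_{\bm\omega}\|\widehat F-F_{\bm\omega}\|^2\ge v^2\inf_{\widehat{\bm\omega}}\sup_{\bm\omega}\mathbb E_{\bm\omega}\rho(\widehat{\bm\omega},\bm\omega)$. This holds for every $\widehat F$, because $\widehat{\bm\omega}$ is a measurable function of the observation whenever $\widehat F$ is: the inner product is continuous and the sign map is Borel.

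Next I convert the KL hypothesis into the total variation condition that Lemma~\ref{lem:assouad} requires. Pinsker's inequality (Lemma~\ref{lem:e2e-pinsker}) gives $\operatorname{TV}(Q_{\bm\omega},Q_{\bm\omega^{(k)}})\le\sqrt{\alpha/2}=:\eta$, and $\eta<1$ precisely because $\alpha<2$, so $\eta\in[0,1)$. Lemma~\ref{lem:assouad} then gives $\inf_{\widehat{\bm\omega}}\sup_{\bm\omega}\mathbb E_{\bm\omega}\rho\ge \frac N2(1-\eta)$. Combining this with the projection bound produces $\frac{Nv^2}{2}(1-\sqrt{\alpha/2})$.

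The only delicate step is the projection inequality. In particular, I must use the shifted inner product $\langle\widehat F-F_0,H_k\rangle$ so that $F_0$ drops out; this works because $\langle F_{\bm\omega}-F_0,H_k\rangle=\omega_kv^2$ by orthogonality. The remaining steps are direct applications of the earlier lemmas, and I expect no real obstacle there.
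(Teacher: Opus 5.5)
Your proposal is correct and follows essentially the paper's argument. It uses the same sign estimator built from $\langle \widehat F-F_0,H_k\rangle$ and the same Bessel reduction to $v^2\rho(\widehat{\bm\omega},\bm\omega)$, followed by Assouad. The only cosmetic difference is that you go explicitly through Pinsker's inequality and the total-variation form of Assouad's lemma, whereas the paper cites the Kullback--Leibler form of Assouad directly; that form is itself obtained in exactly this way.
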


\begin{proof}
For any measurable Hilbert-space-valued estimator $\widehat F$,
define
\begin{equation}
    \widehat\omega_k =
    \begin{cases}
        1, & \langle \widehat F-F_0,H_k\rangle \ge 0,\\
        -1, & \langle \widehat F-F_0,H_k\rangle < 0.
    \end{cases}
\end{equation}
By orthogonality and Bessel's inequality,
\begin{equation}
    \|\widehat F - F_{\bm\omega} \|^2 \ge \sum_{k=1}^N \frac{
 |\langle \widehat F-F_0,H_k\rangle-\omega_k v^2|^2 }{v^2} \ge v^2\sum_{k=1}^N \mathbf{1}_{\{\widehat\omega_k\ne\omega_k\}}.
\end{equation}
Let $\rho$ denote the Hamming distance. Assouad's inequality in its Kullback--Leibler form \citep{tsybakov2009introduction} gives
\begin{equation}
    \inf_{\widetilde{\bm\omega}}\sup_{\bm\omega}  \mathbb E_{\bm\omega} \rho(\widetilde{\bm\omega},{\bm\omega}) \ge \frac{N}{2}\left(1-\sqrt{\alpha/2}\right).
\end{equation}
The theorem is stated on $\{0,1\}^N$, which is equivalent
to $\{-1,1\}^N$ under a coordinatewise relabeling. Combining the last two inequalities and taking the infimum
over $\widehat F$ proves the claim.
\end{proof}

\begin{lemma}[Gaussian KL, \citet{cover1991elements}]
\label{lem:e2e-gaussian-kl}
Let $Q$ and $Q'$ be the laws of one observation with design
densities $p,p'$ and conditional distributions
$\mathcal N(f(\mathbf x),\sigma^2)$ and
$\mathcal N(f'(\mathbf x),\sigma^2)$, respectively.
When the right-hand side is finite,
\begin{equation}
    \operatorname{KL}(Q^{\otimes n} \| (Q')^{\otimes n}) = n\operatorname{KL}(P \| P') + \frac{n}{2\sigma^2}\|f-f'\|^2.
\end{equation}
\end{lemma}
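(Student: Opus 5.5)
This is the chain-rule identity for KL divergence applied to $n$ i.i.d. pairs $(\mathbf X,Y)$, combined with the closed-form KL between Gaussians with equal variance. We argue in three steps.

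\textbf{Step 1: tensorization.} Since the $n$ observations are independent and identically distributed, additivity of KL under products gives
\[
\operatorname{KL}(Q^{\otimes n}\|(Q')^{\otimes n})=n\operatorname{KL}(Q\|Q').
\]
We only apply this when the right-hand side is finite, so no $\infty-\infty$ issue arises.

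\textbf{Step 2: chain rule for one observation.} The joint density of $Q$ with respect to $\lambda_d\otimes\lambda$ is $q(\mathbf x,y)=p(\mathbf x)\,\varphi_\sigma(y-f(\mathbf x))$, where $\varphi_\sigma$ is the $\mathcal N(0,\sigma^2)$ density; $q'$ is defined analogously. Finiteness of the right-hand side forces $P\ll P'$. Because Gaussian conditionals are mutually absolutely continuous, this gives $Q\ll Q'$, with log-likelihood ratio
\[
\log\frac{p(\mathbf x)}{p'(\mathbf x)}+\log\frac{\varphi_\sigma(y-f(\mathbf x))}{\varphi_\sigma(y-f'(\mathbf x))}.
\]
Integrating against $q$, first in $y$ given $\mathbf x$ and then in $\mathbf x$ (Tonelli/Fubini, justified once the pieces are integrable), yields
\[
\operatorname{KL}(Q\|Q')=\operatorname{KL}(P\|P')+\mathbb E_P\bigl[\operatorname{KL}(\mathcal N(f(\mathbf X),\sigma^2)\|\mathcal N(f'(\mathbf X),\sigma^2))\bigr].
\]

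\textbf{Step 3: Gaussian term.} For equal variances, $\operatorname{KL}(\mathcal N(a,\sigma^2)\|\mathcal N(b,\sigma^2))=(a-b)^2/(2\sigma^2)$. To see this, expand the log-ratio:
\[
\frac{(y-b)^2-(y-a)^2}{2\sigma^2}=\frac{(a-b)(2y-a-b)}{2\sigma^2},
\]
and take its expectation under $y\sim\mathcal N(a,\sigma^2)$. Substituting into Step 2 gives $\frac1{2\sigma^2}\mathbb E_P[(f-f')^2(\mathbf X)]=\frac1{2\sigma^2}\|f-f'\|^2$, with the norm taken in $L^2(P)$. This is consistent with the paper's convention that $\|\cdot\|$ is the $L^2(P)$ norm under the first law. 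Multiplying by $n$ concludes.

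\textbf{Main obstacle.} The only delicate point is the integrability bookkeeping in Step 2. The conditional Gaussian term is nonnegative, so Tonelli applies to it. The marginal log-ratio term is handled by the finiteness assumption on $\operatorname{KL}(P\|P')$ and $\|f-f'\|^2$. Together these justify splitting the integral into the two summands.
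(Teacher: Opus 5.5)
Your proof is correct and is the standard argument (tensorization, the chain rule for relative entropy, and the equal-variance Gaussian KL formula) that the paper simply cites from Cover and Thomas without writing out. One small correction: the pointwise conditional log-ratio $(f-f')(\mathbf x)\,(2y-f(\mathbf x)-f'(\mathbf x))/(2\sigma^2)$ is not nonnegative (only its conditional mean is), so Tonelli does not apply to it. The split should instead be justified by Fubini, using $\mathbb E\bigl[|(f-f')(\mathbf X)|\,|2\xi+(f-f')(\mathbf X)|\bigr]<\infty$, which follows from $f-f'\in L^2(P)$.
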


\subsection{Concentration inequality on matrices}

\begin{theorem}[Matrix Chernoff, Theorem 5.1.1, \citep{tropp2015introduction}]\label{thm:tropp}
Let $\bm S_1,\ldots,\bm S_N$ be independent random Hermitian
$K\times K$ matrices such that
$0\preceq\bm S_i\preceq b_0\bm I_K$ almost surely for every $i$,
where $b_0>0$. Let define
\begin{equation}
    \mu_{\min} \coloneqq \lambda_{\min}\!\left(\sum_{i=1}^N\mathbb E[\bm S_i]\right),
\end{equation}
where the operator $\lambda_{\min}( \cdot )$ denotes the smallest eigen value. For every $0\leq\varepsilon<1$,
\begin{equation}\label{eq:tropp-chernoff}
 \mathbb P\!\left(
 \lambda_{\min}\!\left(\sum_{i=1}^N\bm S_i\right)
 \leq(1-\varepsilon)\mu_{\min}\right)
 \leq K\left(
 \frac{e^{-\varepsilon}}{(1-\varepsilon)^{1-\varepsilon}}
 \right)^{\mu_{\min}/b_0}.
\end{equation}
In particular, taking $\varepsilon=1/2$ yields
\begin{equation}\label{eq:tropp-half}
 \mathbb P\!\left(
 \lambda_{\min}\!\left(\sum_{i=1}^N\bm S_i\right)
 \leq\frac{\mu_{\min}}2\right)
 \leq K\exp\!\left(-\frac{1-\log2}{2}\frac{\mu_{\min}}{b_0}\right).
\end{equation}
\end{theorem}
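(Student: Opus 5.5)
The statement to prove is the Matrix Chernoff inequality (Theorem~\ref{thm:tropp}). This is a classical result, so the plan is to reproduce Tropp's argument via the matrix Laplace transform and Lieb's concavity theorem, and then specialise to $\varepsilon=1/2$.

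\textbf{Step 1: reduction to the unit-scale case.} Replace each $\bm S_i$ by $\bm S_i/b_0$, so that $0\preceq \bm S_i\preceq \bm I_K$. The quantity $\mu_{\min}$ is then replaced by $\mu_{\min}/b_0$, and the event is unchanged.

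\textbf{Step 2: Laplace transform bound for $\lambda_{\min}$.} Write $\bm Z=\sum_i\bm S_i$. For $\vartheta>0$, Markov's inequality together with $e^{-\vartheta\lambda_{\min}(\bm Z)}=\lambda_{\max}(e^{-\vartheta\bm Z})\le \operatorname{tr} e^{-\vartheta\bm Z}$ gives
\[
\mathbb P(\lambda_{\min}(\bm Z)\le t)\le e^{\vartheta t}\,\mathbb E\operatorname{tr}\exp(-\vartheta\bm Z).
\]
Lieb's theorem states that $\bm A\mapsto\operatorname{tr}\exp(\bm H+\log\bm A)$ is concave. Applying it iteratively, conditioning on one summand at a time and using independence with Jensen's inequality, yields the subadditivity of matrix cumulant generating functions:
\[
\mathbb E\operatorname{tr}\exp(-\vartheta\bm Z)\le \operatorname{tr}\exp\Bigl(\sum_i\log\mathbb E e^{-\vartheta\bm S_i}\Bigr).
\]
Establishing this is the main obstacle, since everything rests on Lieb's concavity. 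In a paper like this, I would simply cite it rather than reprove it.

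\textbf{Step 3: per-summand mgf bound and optimisation.} The scalar function $x\mapsto e^{-\vartheta x}$ is convex, so on $[0,1]$ it satisfies $e^{-\vartheta x}\le 1-(1-e^{-\vartheta})x$. Transferring this inequality to matrices $0\preceq\bm S\preceq\bm I$ gives
\[
\mathbb E e^{-\vartheta\bm S_i}\preceq \bm I-(1-e^{-\vartheta})\mathbb E\bm S_i\preceq\exp\bigl(-(1-e^{-\vartheta})\mathbb E\bm S_i\bigr).
\]
The logarithm is operator monotone, so these bounds survive taking $\log$. Then $\operatorname{tr}\exp$ is monotone, which yields the bound $K\exp(-(1-e^{-\vartheta})\mu_{\min})$.

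Set $t=(1-\varepsilon)\mu_{\min}$ and choose $\vartheta=-\log(1-\varepsilon)$. This gives exactly $K\bigl[e^{-\varepsilon}/(1-\varepsilon)^{1-\varepsilon}\bigr]^{\mu_{\min}}$, i.e.\ the bound in \eqref{eq:tropp-chernoff} after undoing the scaling of Step 1.

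\textbf{Step 4: the case $\varepsilon=1/2$.} Finally, at $\varepsilon=1/2$ we compute
\[
\log\bigl(e^{-1/2}(1/2)^{-1/2}\bigr)=-\tfrac12+\tfrac12\log2=-\tfrac{1-\log 2}{2},
\]
which gives \eqref{eq:tropp-half}.
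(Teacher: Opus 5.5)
Your proof is correct and follows the standard argument behind the cited result: the Laplace-transform master bound, subadditivity of matrix cumulant generating functions via Lieb's theorem, the chord bound $\mathbb E e^{-\vartheta \bm S_i}\preceq \exp(-(1-e^{-\vartheta})\mathbb E\bm S_i)$, and the choice $\vartheta=-\log(1-\varepsilon)$. The paper gives no proof of its own beyond citing Theorem 5.1.1 of \citet{tropp2015introduction}, and your $\varepsilon=1/2$ computation is exactly the substitution it uses; the boundary case $\varepsilon=0$, where $\vartheta=0$ and your Step 3 choice degenerates, is trivial because the right-hand side then equals $K\geq 1$.
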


\section{Proof of Theorem \ref{thm:ez_upper}}
\label{appendix:upper_facile}

Throughout this section, the design density $p$ is fixed and known.
We write $P$ for the corresponding probability distribution on
$[0,1]^d$, and $\mathcal F$ for the associated additive function
class. Expectations under the regression model with design density
$p$ and regression function $f$ are denoted by
$\mathbb E_{p,f}$.

We prove a finite-sample risk bound for the thresholded estimator
defined in \eqref{eq:oracle-thresholded-estimator}, and then choose
the truncation level $M$ to obtain the minimax upper bound.
The argument does not require independence between the coordinates
of the design.

\subsection{Notation and preliminary bounds}
\label{subsec:known-notation}

Recall that
\begin{equation}
    Y_i = f(\mathbf X_i)+\xi_i, \qquad i=1,\ldots,n,
\end{equation}
where the design vectors are independent with common distribution
$P$, and the errors are independent $\mathcal N(0,\sigma^2)$
random variables, independent of the design. For each $f\in\mathcal F$, write
\begin{equation}
    f(\mathbf x) = \sum_{j=1}^d\sum_{m\geq1} \theta_m^{(j)}\psi_m^{(j)}(x_j), \qquad \psi_m^{(j)}(x_j) =  \frac{\phi_m^{(j)}(x_j)}{p_j(x_j)}.
\end{equation}
The expansions are in $L^2(P)$. By the definition of the function class,
\begin{equation}\label{eq:known-sobolev-coefficients}
    \sum_{m\geq1}m^{2\beta} \bigl(\theta_m^{(j)}\bigr)^2 \leq R^2, \qquad j=1,\ldots,d.
\end{equation}
Recall also that the trigonometric functions are all bounded by $\sqrt{2}$.

For an integer $M\geq1$, define the truncated function
\begin{equation}
    f_M(\mathbf x) \coloneqq \sum_{j=1}^d\sum_{m=1}^M \theta_m^{(j)}\psi_m^{(j)}(x_j) = \boldsymbol\psi_M(\mathbf x)^\top \boldsymbol\theta_M,
\end{equation}
$\boldsymbol\theta_M\in\mathbb R^{dM}$ contains the first $M$ coefficients of each component, and the residual
\begin{equation}
    r_M \coloneqq f - f_M.
\end{equation}

The Riesz bounds from Theorem~\ref{thm:riesz} imply
\begin{equation}\label{eq:known-riesz-finite}
    C_{\min}^p\|\boldsymbol a\|_2^2
    \leq
    \|\boldsymbol\psi_M^\top\boldsymbol a\|^2
    =
    \boldsymbol a^\top
    \boldsymbol\Gamma_M\boldsymbol a
    \leq
    C_{\max}^p\|\boldsymbol a\|_2^2
\end{equation}
for every $\boldsymbol a\in\mathbb R^{dM}$.

\begin{remark}
    The identity $ \boldsymbol a^\top
    \boldsymbol\Gamma_M\boldsymbol a = \| \bm \psi_M^\top \bm a \|^2 $ is crucial but its proof is straightforward, indeed one has
    \begin{align}
        \bm a^\top \bm \Gamma_M \bm a &= \bm a^\top \mathbb{E}_p\left[ \bm \psi_M(\mathbf X) \bm \psi_M(\mathbf X)^\top \right] \bm a, \\ 
        &= \mathbb{E}_p\left[ \bm a^\top \bm \psi_M(\mathbf X) \bm \psi_M(\mathbf X)^\top \bm a \right], \\
        &= \mathbb{E}_p \left[ \left( \bm \psi_M(\mathbf X)^\top \bm a \right)^2 \right], \\
        &= \int (\bm \psi_M^\top \bm a )^2 \, p \, d\lambda_d, \\
        &= \| \bm \psi_M^\top \bm a \|^2.
    \end{align}
\end{remark}

In particular, applying the upper Riesz bound to the tail of
the expansion and using \eqref{eq:known-sobolev-coefficients},
we obtain
\begin{align}
    \|r_M\|^2 &\leq C_{\max}^p \sum_{j=1}^d\sum_{m>M} \bigl(\theta_m^{(j)}\bigr)^2, \\
    &\leq C_{\max}^p M^{-2\beta} \sum_{j=1}^d\sum_{m>M} m^{2\beta}\bigl(\theta_m^{(j)}\bigr)^2, \\
    &\leq C_{\max}^p dR^2M^{-2\beta}.
\end{align}\label{eq:known-approximation}
Similarly, since $m^{2\beta}\geq1$,
\begin{equation}\label{eq:known-function-energy}
    \|f\|^2 \leq C_{\max}^p \sum_{j=1}^d\sum_{m\geq1} \bigl(\theta_m^{(j)}\bigr)^2 \leq C_{\max}^p dR^2.
\end{equation}

\subsection{Thresholded least squares and the error decomposition}
\label{subsec:known-decomposition}

Recall the acceptance event and the empirical Gram matrix
\begin{equation}
    \zeta_n = \left\{ \lambda_{\min}(\widehat{\boldsymbol\Gamma}_M) \geq C_{\min}^p/2 \right\}, \qquad \widehat{\boldsymbol\Gamma}_M = \frac{1}{n} \boldsymbol\Psi_M^\top\boldsymbol\Psi_M.
\end{equation}
For clarity, recall the definition of the coefficient estimator
\begin{equation}
     \widehat{\boldsymbol\theta}_M =
    \begin{cases}
        (\boldsymbol\Psi_M^\top\boldsymbol\Psi_M)^{-1}
        \boldsymbol\Psi_M^\top\boldsymbol Y,
        & \text{on }\zeta_n,\\
        \boldsymbol 0,
        & \text{on }\zeta_n^c.
    \end{cases}
\end{equation}
Thus no inverse is evaluated on the rejection event.
On $\zeta_n$, the design matrix has full column rank, which
in particular requires $dM\leq n$. Finally, recall that the thresholded estimator $\widehat f_M$ is defined as follows
\begin{equation}
    \widehat f_M(\cdot) = \bm\psi_M(\cdot)^\top \widehat{\bm\theta}_M
\end{equation}
We introduce the random matrix
\begin{equation}
    \boldsymbol B_M \coloneqq
    \begin{cases}
        (\boldsymbol\Psi_M^\top\boldsymbol\Psi_M)^{-1}
        \boldsymbol\Psi_M^\top,
        & \text{on }\zeta_n,\\
        \boldsymbol 0,
        & \text{on }\zeta_n^c,
    \end{cases}
\end{equation}
and the vectors
\begin{equation}
    \boldsymbol r_M \coloneqq
    \bigl(r_M(\mathbf X_1),\ldots,r_M(\mathbf X_n)\bigr)^\top,
    \qquad
    \boldsymbol\xi \coloneqq (\xi_1,\ldots,\xi_n)^\top.
\end{equation}
The regression model can then be written as
\begin{equation}
    \boldsymbol Y \coloneqq \boldsymbol\Psi_M\boldsymbol\theta_M
    +\boldsymbol r_M+\boldsymbol\xi.
\end{equation}
Consequently, on $\zeta_n$, we have
\begin{equation}
    \widehat{\bm \theta}_M = \bm \theta_M + \bm B_M \bm r_M + \bm B_M \bm \xi
\end{equation}
which leads to the following decomposition of $f - \widehat f_M$
\begin{equation}\label{eq:known-error-decomposition}
    f-\widehat f_M = r_M - \boldsymbol\psi_M^\top\boldsymbol B_M\boldsymbol r_M - \boldsymbol\psi_M^\top\boldsymbol B_M\boldsymbol\xi.
\end{equation}

Let denote $\mathcal D_n$ the dataset $\mathbf X_1, \dots, \mathbf X_n$.
The event $\zeta_n$, the matrix $\boldsymbol B_M$, and the vector
$\boldsymbol r_M$ are measurable with respect to $\mathcal D_n$.
Moreover,
\[
    \mathbb E_{p,f}[
        \boldsymbol\xi\mid\mathcal D_n]
    =\boldsymbol 0,
    \qquad
    \mathbb E_{p,f}[
        \boldsymbol\xi\boldsymbol\xi^\top\mid\mathcal D_n]
    =\sigma^2\boldsymbol I_n.
\]
The cross term involving the noise therefore has conditional
expectation zero. It follows the standard bias-variance decomposition
\begin{equation}
\boxed{
     \mathbb E_{p,f}\!\left[
        \|f-\widehat f_M\|^2
        \mathbf 1_{\zeta_n}
        \mid\mathcal D_n\right] = \left\| r_M - \boldsymbol\psi_M^\top \boldsymbol B_M\boldsymbol r_M \right\|^2 \mathbf 1_{\zeta_n} +
    \mathbb E_{p,f}\!\left[ \|\boldsymbol\psi_M^\top \boldsymbol B_M\boldsymbol\xi\|^2 \mid\mathcal D_n
    \right]\mathbf 1_{\zeta_n}.}
    \label{eq:known-conditional-bias-variance}
\end{equation}
Recall that here, $\boldsymbol\psi_M$ is a function from $\mathbb R^d$ to $\mathbb R^{dM}$.

On $\zeta_n^c$, the estimator is zero, so
\begin{equation}\label{eq:known-rejection-identity}
\boxed{
    \mathbb E_{p,f}\!\left[ \|f-\widehat f_M\|^2 \mathbf 1_{\zeta_n^c} \right] = \|f\|^2\mathbb P_p(\zeta_n^c).}
\end{equation}
We next bound the two terms in \eqref{eq:known-conditional-bias-variance} and the rejection probability.

\subsection{Bounding the squared bias}
\label{subsec:known-bias}

On $\zeta_n$, we have $ \bm \Psi_M^\top \bm \Psi_M \succeq n C_{\min}^p / 2 $, which implies
\begin{equation}
    \boldsymbol B_M\boldsymbol B_M^\top = (\boldsymbol\Psi_M^\top\boldsymbol\Psi_M)^{-1} \preceq \frac{2}{nC_{\min}^p}\boldsymbol I_{dM}.
\end{equation}
Hence
\begin{equation}\label{eq:known-pseudoinverse-bound}
    \|\boldsymbol B_M\|_{\mathrm{op}}^2
    \leq\frac{2}{nC_{\min}^p}
    \qquad\text{on }\zeta_n.
\end{equation}

Using $\|u-v\|^2\leq2\|u\|^2+2\|v\|^2$,
\eqref{eq:known-riesz-finite}, and
\eqref{eq:known-pseudoinverse-bound}, we obtain, on $\zeta_n$,
\begin{align}
    \left\|
        r_M-\boldsymbol\psi_M^\top
        \boldsymbol B_M\boldsymbol r_M
    \right\|^2
    &\leq
    2\|r_M\|^2
    +2C_{\max}^p
        \|\boldsymbol B_M\boldsymbol r_M\|_2^2
    \notag\\
    &\leq
    2\|r_M\|^2
    +\frac{4C_{\max}^p}{nC_{\min}^p}
        \|\boldsymbol r_M\|_2^2.
    \label{eq:known-bias-pointwise}
\end{align}

Since $r_M$ is a fixed function,
\begin{equation}\label{eq:known-residual-vector}
    \mathbb E_{p,f}\!\left[
        \|\boldsymbol r_M\|_2^2
    \right]
    =
    \sum_{i=1}^n
        \mathbb E_p[r_M(\mathbf X_i)^2]
    =
    n\|r_M\|^2.
\end{equation}
Multiplying \eqref{eq:known-bias-pointwise} by
$\mathbf 1_{\zeta_n}$, taking expectations, and using
$\mathbf 1_{\zeta_n}\leq1$ yields
\begin{equation}
\boxed{
    \mathbb E_{p,f}\!\left[
        \left\|
            r_M-\boldsymbol\psi_M^\top
            \boldsymbol B_M\boldsymbol r_M
        \right\|^2
        \mathbf 1_{\zeta_n}
    \right] \leq
    \left(2+\frac{4C_{\max}^p}{C_{\min}^p}\right)
    \|r_M\|^2
    \leq
    C_{\max}^p
    \left(2+\frac{4C_{\max}^p}{C_{\min}^p}\right)
    dR^2M^{-2\beta}.}
    \label{eq:known-bias-bound}
\end{equation}

\subsection{Bounding the conditional variance}
\label{subsec:known-variance}

Conditional on $\mathcal D_n$, the matrix $\boldsymbol B_M$
is fixed. By the definition of the population Gram matrix,
\begin{align}
    &\mathbb E_{p,f}\!\left[
        \|\boldsymbol\psi_M^\top
        \boldsymbol B_M\boldsymbol\xi\|^2
        \mid\mathcal D_n
    \right]
    \notag\\
    &\quad=
    \mathbb E_{p,f}\!\left[
        \boldsymbol\xi^\top
        \boldsymbol B_M^\top
        \boldsymbol\Gamma_M
        \boldsymbol B_M\boldsymbol\xi
        \mid\mathcal D_n
    \right]
    \notag\\
    &\quad=
    \operatorname{Tr}\!\left(
        \boldsymbol B_M^\top
        \boldsymbol\Gamma_M
        \boldsymbol B_M
        \mathbb E_{p,f}\!\left[
            \boldsymbol\xi\boldsymbol\xi^\top
            \mid\mathcal D_n
        \right]
    \right)
    \notag\\
    &\quad=
    \sigma^2\operatorname{Tr}\!\left(
        \boldsymbol\Gamma_M
        \boldsymbol B_M\boldsymbol B_M^\top
    \right).
    \label{eq:known-variance-identity}
\end{align}
On $\zeta_n$, this is
\[
    \frac{\sigma^2}{n}
    \operatorname{Tr}\!\left(
        \boldsymbol\Gamma_M
        \widehat{\boldsymbol\Gamma}_M^{-1}
    \right).
\]
Because
\[
    \boldsymbol\Gamma_M
    \preceq C_{\max}^p\boldsymbol I_{dM},
    \qquad
    \widehat{\boldsymbol\Gamma}_M^{-1}
    \preceq\frac{2}{C_{\min}^p}\boldsymbol I_{dM}
    \quad\text{on }\zeta_n,
\]
we obtain
\[
    \mathbb E_{p,f}\!\left[
        \|\boldsymbol\psi_M^\top
        \boldsymbol B_M\boldsymbol\xi\|^2
        \mid\mathcal D_n
    \right]
    \leq
    \frac{2\sigma^2C_{\max}^p}{C_{\min}^p}
    \frac{dM}{n}
    \qquad\text{on }\zeta_n.
\]
Here we used the fact that
$\operatorname{Tr}(\boldsymbol U\boldsymbol V)\geq0$
for positive semidefinite matrices $\boldsymbol U,\boldsymbol V$;
the matrices need not commute.

Since $\zeta_n$ is $\mathcal D_n$-measurable,
the tower property gives
\begin{equation}\label{eq:known-variance-bound}
    \boxed{\mathbb E_{p,f}\!\left[
        \|\boldsymbol\psi_M^\top
        \boldsymbol B_M\boldsymbol\xi\|^2
        \mathbf 1_{\zeta_n}
    \right]
    \leq
    \frac{2\sigma^2C_{\max}^p}{C_{\min}^p}
    \frac{dM}{n}.}
\end{equation}

\subsection{Concentration of the empirical Gram matrix}
\label{subsec:known-concentration}

We use the lower-tail form of the matrix Chernoff inequality (see Theorem \ref{thm:tropp}). To apply this result, write
\begin{equation}
    \widehat{\boldsymbol\Gamma}_M = \sum_{i=1}^n\boldsymbol S_i, \qquad \boldsymbol S_i  = \frac{1}{n} \boldsymbol\psi_M(\mathbf X_i) \boldsymbol\psi_M(\mathbf X_i)^\top.
\end{equation}
These matrices are independent, symmetric, and positive
semidefinite. Since a rank-one matrix $\boldsymbol v\boldsymbol v^\top$ has operator norm $\|\boldsymbol v\|_2^2$, we have
\begin{equation}
    \lambda_{\max}(\boldsymbol S_i) = \frac{1}{n} \|\boldsymbol\psi_M(\mathbf X_i)\|_2^2.
\end{equation}
The lower bound $p_j\geq p_{\min}$ and the uniform bound
on the trigonometric functions imply
\begin{align}
    \|\boldsymbol\psi_M(\mathbf x)\|_2^2
    &=
    \sum_{j=1}^d\sum_{m=1}^M
        \frac{|\phi_m^{(j)}(x_j)|^2}{p_j(x_j)^2}
    \notag\\
    &\leq\frac{2dM}{p_{\min}^2}.
    \label{eq:known-dictionary-envelope}
\end{align}
We may therefore take $ K=dM$ and $ L=\frac{2dM}{np_{\min}^2} $. Moreover,
\begin{equation}
    \sum_{i=1}^n\mathbb E_p[\boldsymbol S_i] = \boldsymbol\Gamma_M, \qquad \mu_{\min} = \lambda_{\min}(\boldsymbol\Gamma_M) \geq C_{\min}^p.
\end{equation}
Theorem~\ref{thm:tropp} gives
\begin{equation}
    \mathbb P_p\!\left( \lambda_{\min}(\widehat{\boldsymbol\Gamma}_M) \leq\frac{\mu_{\min}}{2} \right) \leq dM\exp\!\left( -\frac{1-\log2}{2} \frac{\mu_{\min}}{L} \right).
\end{equation}
Since $C_{\min}^p\leq\mu_{\min}$, we have the following bound on the probability
\begin{equation}\label{eq:known-rejection-probability}
\boxed{
    \mathbb P_p(\zeta_n^c)
    \leq
    dM\exp\!\left(
        -a_p\frac{n}{dM}
    \right),
    \qquad
    a_p
    \coloneqq
    \frac{p_{\min}^2C_{\min}^p(1-\log2)}{4}.}
\end{equation}

\subsection{Finite-sample risk bound}
\label{subsec:known-finite-sample}

Combining \eqref{eq:known-rejection-identity},
\eqref{eq:known-function-energy}, and
\eqref{eq:known-rejection-probability} yields
\begin{equation}\label{eq:known-rejection-risk}
    \mathbb E_{p,f}\!\left[
        \|f-\widehat f_M\|^2
        \mathbf 1_{\zeta_n^c}
    \right]
    \leq
    C_{\max}^p dR^2
    \left[
        dM\exp\!\left(-a_p\frac{n}{dM}\right)
    \right].
\end{equation}
On the acceptance event, we use
\eqref{eq:known-conditional-bias-variance},
\eqref{eq:known-bias-bound}, and
\eqref{eq:known-variance-bound}.
Adding the two contributions gives
\begin{align}
    \sup_{f\in\mathcal F}
    \mathbb E_{p,f}\!\left[
        \|f-\widehat f_M\|^2
    \right]
    &\leq
    C_{\max}^p
    \left(2+\frac{4C_{\max}^p}{C_{\min}^p}\right)
    dR^2M^{-2\beta}
    \notag\\
    &\quad+
    \frac{2\sigma^2C_{\max}^p}{C_{\min}^p}
    \frac{dM}{n}
    \notag\\
    &\quad+
    C_{\max}^p dR^2
    \left[
        dM\exp\!\left(-a_p\frac{n}{dM}\right)
    \right].
    \label{eq:known-finite-sample-risk}
\end{align}
All bounds are uniform over $f\in\mathcal F$. The three terms in \eqref{eq:known-finite-sample-risk} correspond to approximation, variance, and rejection and there is no regularization bias.

\subsection{Obtaining minimax upper bound}
\label{subsec:known-rate}

We typically choose $M=\left\lceil n^{1/(2\beta+1)}\right\rceil$ to match the bias and the variance. Thus the first two terms in \eqref{eq:known-finite-sample-risk} are bounded by a constant multiple of $d\,n^{ - \frac{2 \beta}{ 2 \beta + 1 } }$. Recall that under Assumption \ref{assu:dimension}, one has
\begin{equation}
    d\log n = o\!\left(n^{2\beta/(2\beta+1)}\right).
\end{equation}
Under this condition,
\begin{equation}
    \frac{n}{dM\log n}\longrightarrow\infty,
\end{equation}
and $dM\leq n$ for all sufficiently large $n$. It follows that
\begin{equation}
    dM\exp\!\left(-a_p\frac{n}{dM}\right) = o\!\left(n^{- \frac{2\beta}{2\beta + 1} }\right),
\end{equation}
so the rejection contribution is negligible relative to the target risk.

Finally, the estimator is measurable with respect to the sample
and uses only the known density and the specified class parameters.
It does not depend on the unknown regression function.
By the definition of the minimax risk,
\begin{align}
    \mathfrak M(n,d,\mathcal F) &= \inf_{\widehat f} \sup_{f\in\mathcal F} \mathbb E_{p,f}\!\left[ \|f-\widehat f\|^2 \right] \\
    &\leq \sup_{f\in\mathcal F} \mathbb E_{p,f}\!\left[ \|f-\widehat f_M\|^2 \right] \\
    &\lesssim d\,n^{- \frac{2\beta}{ 2\beta + 1 } }. \label{eq:known-minimax-upper}
\end{align}
Using the bounds on $C_{\min}^p$ and $C_{\max}^p$ supplied by
Theorem~\ref{thm:riesz}, the implicit constant depends only on
$\beta,R,\sigma,p_{\min},p_{\max}$, and not on $n$, $d$, or $f$.

\section{Proof of Theorem \ref{thm:ez_lower}}\label{appendix:lower_facile}

Throughout this section, the design density $p$ is fixed and known.
We write $P$ for the corresponding distribution on $[0,1]^d$,
and $\mathcal F$ for the associated additive function class.
We assume $\beta>0$, $R>0$, and $\sigma>0$.

We prove that
\[
    \mathfrak M(n,d,\mathcal F)
    \coloneqq
    \inf_{\widehat f}
    \sup_{f\in\mathcal F}
    \mathbb E_{p,f}\!\left[
        \|\widehat f-f\|^2
    \right]
    \gtrsim
    d\,n^{-\frac{2\beta}{2\beta+1}}.
\]
The infimum includes all estimators based on the observations
and the known density $p$.

Our argument uses the Riesz inequalities from
Theorem~\ref{thm:riesz}. The constants are independent of the number of retained frequencies. Their uniform control in terms of $p_{\min}$ and $p_{\max}$ will give the claimed dependence of the final constant.

\subsection{Construction of a finite subfamily}
\label{subsec:known-lower-construction}

For $n\geq1$, set $ M=\left\lceil n^{ \frac{1}{2\beta + 1} }\right\rceil $ and $ a_M=a_0M^{-\beta-1/2} $ where
\begin{equation}\label{eq:known-lower-amplitude}
    a_0^2
    =
    \min\left\{
        2^{-2\beta}R^2,\,
        \frac{\sigma^2}{16C_{\max}^p}
    \right\}.
\end{equation}
In particular, $a_0>0$ and does not depend on $n$ or $M$. Let define
\begin{equation}
    \mathcal I_M = \{(j,m):1\leq j\leq d,\ M+1\leq m\leq2M\}, \qquad |\mathcal I_M|=dM.
\end{equation}
For each sign vector
$\boldsymbol\omega\in\{-1,+1\}^{\mathcal I_M}$, define
\begin{equation}
    f_{\boldsymbol\omega}(\mathbf x) = \sum_{j=1}^d f_{\boldsymbol\omega,j}(x_j), \qquad f_{\boldsymbol\omega,j}(x_j) = a_M\sum_{m=M+1}^{2M} \omega_{jm}\psi_m^{(j)}(x_j).
\end{equation}
Observe that the reference-basis coefficients of $p_jf_{\boldsymbol\omega,j}$ equal $a_M\omega_{jm}$ for
$M+1\leq m\leq2M$ and vanish otherwise. Hence
\begin{align}
    \sum_{m=M+1}^{2M}
        m^{2\beta}(a_M\omega_{jm})^2
    &\leq
    M(2M)^{2\beta}a_M^2
    \notag\\
    &=2^{2\beta}a_0^2
    \notag\\
    &\leq R^2.
    \label{eq:known-lower-membership}
\end{align}
It follows that
\begin{equation}
    \left\{ f_{\boldsymbol\omega}: \boldsymbol\omega\in\{-1,+1\}^{\mathcal I_M} \right\} \subseteq\mathcal F.
\end{equation}

For later use, define the following Hamming distance
\begin{equation}
    \rho(\boldsymbol\omega,\boldsymbol\omega') \coloneqq \sum_{(j,m)\in\mathcal I_M} \mathbf 1_{\{\omega_{jm}\neq\omega'_{jm}\}}.
\end{equation}
The lower Riesz inequality immediately gives
\begin{equation}\label{eq:known-lower-separation}
    \|f_{\boldsymbol\omega}
      -f_{\boldsymbol\omega'}\|^2
    \geq
    4C_{\min}^p a_M^2
    \rho(\boldsymbol\omega,\boldsymbol\omega').
\end{equation}

\subsection{Controlling neighboring observation laws}
\label{subsec:known-lower-testing}

Let $Q_{\boldsymbol\omega}$ denote the joint law of the
$n$ observations under design distribution $P$ and regression
function $f_{\boldsymbol\omega}$.
Write $\mathbb E_{\boldsymbol\omega}$ for expectation under
this law.

For $(j,m)\in\mathcal I_M$, let
$\boldsymbol\omega^{(j,m)}$ be obtained from
$\boldsymbol\omega$ by reversing only the sign $\omega_{jm}$.
Then
\begin{equation}
    f_{\boldsymbol\omega} - f_{\boldsymbol\omega^{(j,m)}} = 2a_M\omega_{jm}\psi_m^{(j)}.
\end{equation}
By the upper Riesz inequality,
\begin{equation}\label{eq:known-lower-neighbor-distance}
    \|f_{\boldsymbol\omega}
      -f_{\boldsymbol\omega^{(j,m)}}\|^2
    \leq4C_{\max}^p a_M^2.
\end{equation}
The design distribution is identical at every vertex of the hypercube. The Gaussian regression identity in Lemma~\ref{lem:e2e-gaussian-kl} therefore yields
\begin{align}
    \operatorname{KL}\!\left(
        Q_{\boldsymbol\omega} \|
        Q_{\boldsymbol\omega^{(j,m)}}
    \right)
    &=n \times 0 + \frac{n}{2\sigma^2}
    \|f_{\boldsymbol\omega}
      -f_{\boldsymbol\omega^{(j,m)}}\|^2
    \notag\\
    &\leq
    \frac{2nC_{\max}^p a_M^2}{\sigma^2}
    \notag\\
    &=
    \frac{2C_{\max}^p a_0^2}{\sigma^2}
    \frac{n}{M^{2\beta+1}}
    \notag\\
    &\leq\frac18.
    \label{eq:known-lower-neighbor-kl}
\end{align}
The last inequality follows from $M^{2\beta+1}\geq n$ and \eqref{eq:known-lower-amplitude}.

Pinsker's inequality (Lemma~\ref{lem:e2e-pinsker}) now gives
\begin{equation}\label{eq:known-lower-neighbor-tv}
    \max_{\boldsymbol\omega}
    \max_{(j,m)\in\mathcal I_M}
    \operatorname{TV}\!\left(
        Q_{\boldsymbol\omega},
        Q_{\boldsymbol\omega^{(j,m)}}
    \right)
    \leq
    \sqrt{\frac12\cdot\frac18}
    =\frac14.
\end{equation}

\subsection{Reduction from prediction loss to Hamming loss}
\label{subsec:known-lower-reduction}

Fix an arbitrary estimator $\widehat f$.
If its maximum risk over the finite subfamily is infinite,
the desired lower bound is immediate.
We may therefore assume that $\widehat f$ takes values in
$L^2(P)$ almost surely under every $Q_{\boldsymbol\omega}$. Consider the finite-dimensional vector subspace
\begin{equation}
    \mathcal V_M
    =
    \operatorname{span}\left\{
        \psi_m^{(j)}:
        (j,m)\in\mathcal I_M
    \right\},
\end{equation}
and let $\Pi_M$ be the orthogonal projection onto $\mathcal V_M$ in $L^2(P)$. The lower Riesz bound guarantees linear independence of the
functions defining $\mathcal V_M$. Consequently, there is a
unique coefficient vector $\widehat{\boldsymbol b} =(\widehat b_{jm})_{(j,m)\in\mathcal I_M}$ such that
\begin{equation}
    \Pi_M\widehat f = \sum_{(j,m)\in\mathcal I_M} \widehat b_{jm}\psi_m^{(j)}.
\end{equation}
The coefficient map is continuous on $\mathcal V_M$, so these
coefficients are measurable functions of the observations. Define the induced sign estimator by
\begin{equation}
    \widehat\omega_{jm} =
    \begin{cases}
        +1,& \widehat b_{jm}\geq0,\\
        -1,& \widehat b_{jm}<0.
    \end{cases}
\end{equation}
Because $f_{\boldsymbol\omega}\in\mathcal V_M$, orthogonal projection gives
\begin{align}
    \|\widehat f-f_{\boldsymbol\omega}\|^2
    &= \|\widehat f-\Pi_M\widehat f\|^2 + \|\Pi_M\widehat f-f_{\boldsymbol\omega}\|^2 \notag\\
    &\geq \|\Pi_M\widehat f-f_{\boldsymbol\omega}\|^2 \notag\\
    &\geq C_{\min}^p \sum_{(j,m)\in\mathcal I_M} (\widehat b_{jm}-a_M\omega_{jm})^2.
    \label{eq:known-lower-hamming-reduction}
\end{align}
Now, let suppose that $ \widehat\omega_{jm}\neq\omega_{jm} $, because $a_M >0$, we have
\begin{equation}
    \begin{cases}
        \omega_{jm} = +1 \implies \widehat{\omega}_{jm} = -1 \implies \widehat{b}_{jm} < 0 \implies | \widehat{b}_{jm} - a_M \omega_{jm}| = a_M - \widehat{b}_{jm} \geq a_M \\
        \omega_{jm} = -1 \implies \widehat{\omega}_{jm} = +1 \implies \widehat{b}_{jm} \geq 0 \implies | \widehat{b}_{jm} - a_M \omega_{jm}| = \widehat b_{jm} + a_M \geq a_M
    \end{cases}
\end{equation}
We finally can write
\begin{equation}
    \left( \widehat b_{jm} - a_M \omega_{jm} \right)^2 \geq a_M^2 \mathbf{1}_{ \{ \widehat{\omega}_{jm} \neq \omega_{jm} \} },
\end{equation}
which leads to the last inequality
\begin{equation}
    \|\widehat f-f_{\boldsymbol\omega}\|^2\geq C_{\min}^p a_M^2
    \rho(\widehat{\boldsymbol\omega},\boldsymbol\omega).
\end{equation}

\subsection{Application of Assouad's lemma}
\label{subsec:known-lower-assouad}

By \eqref{eq:known-lower-neighbor-tv},
Lemma~\ref{lem:assouad} applies with $N=dM$ and $\eta=1/4$. Therefore every measurable sign estimator satisfies
\begin{equation}
    \sup_{\boldsymbol\omega} \mathbb E_{\boldsymbol\omega}\!\left[ \rho(\widehat{\boldsymbol\omega},\boldsymbol\omega) \right] \geq \frac{dM}{2}\left(1-\frac14\right) = \frac38dM.
\end{equation}
Since the finite subfamily is contained in $\mathcal F$,
\eqref{eq:known-lower-hamming-reduction} implies
\begin{align}
    \sup_{f\in\mathcal F}
    \mathbb E_{p,f}\!\left[
        \|\widehat f-f\|^2
    \right]
    &\geq
    \sup_{\boldsymbol\omega}
    \mathbb E_{\boldsymbol\omega}\!\left[
        \|\widehat f-f_{\boldsymbol\omega}\|^2
    \right]
    \notag\\
    &\geq
    C_{\min}^p a_M^2
    \sup_{\boldsymbol\omega}
    \mathbb E_{\boldsymbol\omega}\!\left[
        \rho(\widehat{\boldsymbol\omega},\boldsymbol\omega)
    \right]
    \notag\\
    &\geq
    \frac38 C_{\min}^p a_M^2dM
    \notag\\
    &\geq \frac38 C_{\min}^p a_0^2dM^{-2\beta}.
    \label{eq:known-lower-before-rate}
\end{align}
This holds for every estimator $\widehat f$. Taking the infimum over estimators yields
\begin{equation}
    \mathfrak M(n,d,\mathcal F) \geq \frac38 C_{\min}^p a_0^2dM^{-2\beta}.
\end{equation}
Finally, by recalling that for $n\geq1, M = \left\lceil n^{\frac{1}{2\beta + 1}}\right\rceil$, we have the desired result
\begin{equation}
    \mathfrak M(n,d,\mathcal F) \gtrsim d\,n^{-\frac{2\beta}{2\beta+1}},
\end{equation}
where the positive underlying constant is equal to $ \frac{3C_{\min}^p}{2^{2\beta+3}} \min\left\{ 2^{-2\beta}R^2,\, \frac{\sigma^2}{16C_{\max}^p}\right\} $ and does not depend on $n$.

\begin{remark}
    The lower-bound argument itself imposes no restriction on the growth of $d$, provided the Riesz constants remain uniformly controlled. The dimension condition used for the upper bound is needed to establish the matching upper rate.
\end{remark}

\section{Proof of Theorem \ref{thm:main-unknown-density-upper}}\label{sec:proof}
\label{appendix:upper_dure}

\subsection{Notations}\label{subsec:notations}

We recall that, throughout the class $\mathcal C_{\beta,\gamma}$,
the joint densities satisfy
\begin{equation}\label{eq:density-bounds}
 0< \kappa_{\min} \leq p \leq \kappa_{\max}<\infty.
\end{equation}
Where constants $\kappa_{\min}$ and $\kappa_{\max}$ are known parameters
of the class and satisfy 
\begin{equation}
    \kappa_{\min} < 1 < \kappa_{\max}.
\end{equation}
As a direct consequence, the class is nonempty, since the uniform distribution belongs to the admissible distributions. Integrating \eqref{eq:density-bounds} over all but one coordinate gives $ \kappa_{\min} \leq p_j\leq \kappa_{\max}$ for every $j$. No independence between the coordinates of a covariate is assumed.

Fix a pair $(p,f)\in\mathcal C_{\beta,\gamma}$. The regression function is additive and centered componentwise. We use the representations
\begin{equation}\label{eq:component-representation}
 f_j=\sum_{m\geq1}\theta_m^{(j)}\psi_m^{(j)},\qquad
 g_j\coloneqq p_jf_j
     =\sum_{m\geq1}\theta_m^{(j)}\phi_m,
 \qquad 1\leq j\leq d.
\end{equation}
The Sobolev condition implies
\begin{equation}\label{eq:sobolev-class}
 \sum_{m\geq1}m^{2\beta}\bigl(\theta_m^{(j)}\bigr)^2\leq R^2,
 \qquad 1\leq j\leq d,
\end{equation}
where $\beta>0$ and $R>0$ are fixed.
Thus Sobolev regularity is understood in the periodic Fourier sense
specified by \eqref{eq:sobolev-class}. The expansions in
\eqref{eq:component-representation} hold in $L^2(\lambda)$ for $g_j$
and in $L^2(p_j\,d\lambda)$ for $f_j$.
No assumption $\beta>1/2$ is made.

The marginal densities belong are all $(\gamma ,L)-$Hölder (see Definition \ref{def:Holder}) and uniformly bounded between the class parameters $\kappa_{\min}$ and $\kappa_{\max}$. The notation $\mathcal C_{\beta,\gamma}$ suppresses the fixed parameters $R,L,\kappa_{\min},\kappa_{\max}$. Both $p$ and $f$ may vary in this class, and $d$ may increase with $n$ at the rate of Assumption \ref{assu:dimension}.

We observe an i.i.d. sample
$(\mathbf X_1,Y_1),\ldots,(\mathbf X_n,Y_n)$ satisfying
\begin{equation}
 Y_i=f(\mathbf X_i)+\xi_i,\qquad
 \mathbf X_i\sim P,\qquad
 \xi_i\sim\mathcal N(0,\sigma^2),\qquad 1\leq i\leq n,
\end{equation}
where the noises are independent of all covariates and $\sigma^2>0$
is fixed. For $n\geq4$, let
\begin{equation}
     n_0\coloneqq\lfloor n/2\rfloor,\qquad n_1\coloneqq n-n_0.
\end{equation}
We split the covariates into
\begin{align}
 \mathcal D_n^0
 &\coloneqq(\mathbf X_1^0,\ldots,\mathbf X_{n_0}^0)
       =(\mathbf X_1,\ldots,\mathbf X_{n_0}),\\
 \mathcal D_n^1
 &\coloneqq(\mathbf X_1^1,\ldots,\mathbf X_{n_1}^1)
       =(\mathbf X_{n_0+1},\ldots,\mathbf X_n).
\end{align}
These samples are independent and have sizes of order $n$. The first sample is used to estimate the marginal densities; the second, together with its responses, is used to estimate $f$. We write $Y_i^1\coloneqq Y_{n_0+i}$ and $\xi_i^1\coloneqq\xi_{n_0+i}$. Conditioning on a dataset means conditioning on the sigma-field generated by its covariates. In particular, $\mathcal D_n\coloneqq(\mathcal D_n^0,\mathcal D_n^1)$ contains the two designs, but not the responses.

Let $\widehat p_1,\ldots,\widehat p_d$ be jointly measurable estimators
constructed from $\mathcal D_n^0$ alone. We will require
\begin{equation}\label{eq:estimated-density-bounds}
 \kappa_{\min}\leq\widehat p_j(x)\leq \kappa_{\max}
 \quad\text{for every }x\in[0,1],\ 1\leq j\leq d,
 \quad\text{almost surely}.
\end{equation}
The marginal-density estimators constructed satisfy
\begin{equation}\label{eq:density-risk-assumption}
\max_{1\leq j\leq d}\sup_{x\in[0,1]}
\mathbb E\!\left[\left(
\widehat p_j(x)-p_j(x)\right)^2
\right]
\leq \rho_{n_0},
\end{equation}
where $\rho_{n_0}$ is a deterministic bound depending only
on the sample size $n_0$ and the fixed parameters of the class,
and not on the particular pair $(p,f)$ or the dimension $d$.
Subsection~\ref{subsec:density-construction} constructs estimators satisfying
these conditions with $\rho_{n_0}\leq C_{\gamma,L,\kappa_{\max}}
n_0^{-2\gamma/(2\gamma+1)}$. The estimators of different marginal densities may be dependent.

For these estimators and an integer $M\geq1$, define the estimated
dictionary elements centered with respect to Lebesgue measure:
\begin{equation}\label{eq:estimated-dictionary}
 \widehat\psi_m^{(j)}(x_j)
 \coloneqq\frac{\phi_m(x_j)}{\widehat p_j(x_j)}
       -\int_0^1\frac{\phi_m(t)}{\widehat p_j(t)}\,dt,
 \qquad 1\leq j\leq d,\quad1\leq m\leq M.
\end{equation}
For $\mathbf x\in[0,1]^d$, let
\begin{equation}\label{eq:dictionary-vector}
 \widehat{\bm\psi}(\mathbf x)
 \coloneqq
 \left(1,\widehat\psi_1^{(1)}(x_1),\ldots,
 \widehat\psi_M^{(1)}(x_1),\ldots,
 \widehat\psi_1^{(d)}(x_d),\ldots,
 \widehat\psi_M^{(d)}(x_d)\right)^\top
 \in\mathbb R^K,
 \qquad K\coloneqq1+dM.
\end{equation}
The order of all coefficient vectors below matches this order of the
dictionary elements. The design matrix
$\widehat{\bm\Psi}\in\mathbb R^{n_1\times K}$ has $i$th row
$\widehat{\bm\psi}(\mathbf X_i^1)^\top$. Its empirical Gram matrix and
conditional population counterpart are
\begin{equation}\label{eq:gram-matrices}
 \widehat{\bm\Gamma}
 \coloneqq\frac1{n_1}\widehat{\bm\Psi}^\top\widehat{\bm\Psi},
 \qquad
 \bm\Gamma
 \coloneqq\mathbb E\!\left[
 \widehat{\bm\psi}(\mathbf X)\widehat{\bm\psi}(\mathbf X)^\top
 \mid\mathcal D_n^0\right],
\end{equation}
where $\mathbf X\sim P$ is independent of $\mathcal D_n^0$. 

For simplicity, we omit the dependence in $M$ all previous notations. 

We define 4 constants which depend only of $\kappa_{\min}$ and $\kappa_{\max}$:
\begin{equation}\label{eq:constants}
 \begin{aligned}
 C_{\min}&\coloneqq\frac{\kappa_{\min}}{\kappa_{\max}^2},
 & C_{\max}&\coloneqq\frac{\kappa_{\max}}{\kappa_{\min}^2},\\
 c_1&\coloneqq\frac{C_{\min}}{2},
 & c_2&\coloneqq
 \frac{(1-\log2)\kappa_{\min}^3}{16\kappa_{\max}^2}>0.
 \end{aligned}
\end{equation}
For a matrix, $\|\cdot\|_{\mathrm{op}}$ denotes its matrix operator norm, and $\preceq$ denotes the Loewner order on symmetric matrices.

\subsection{The estimator}\label{subsec:estimator}

Fix $M\geq1$ and marginal density estimators satisfying
\eqref{eq:estimated-density-bounds}--\eqref{eq:density-risk-assumption}.
Define the event
\begin{equation}
 \zeta_n\coloneqq
 \{\lambda_{\min}(\widehat{\bm\Gamma})\geq c_1\},
\end{equation}
where $\lambda_{\min}$ denotes the smallest eigenvalue. Let
\begin{equation}
    \bm Y^1\coloneqq(Y_1^1,\ldots,Y_{n_1}^1)^\top.
\end{equation}
The thresholded least-squares estimator is
\begin{equation}
 \widehat f_M(\mathbf x)\coloneqq
 \begin{cases}
 \widehat{\bm\psi}(\mathbf x)^\top
 (\widehat{\bm\Psi}^\top\widehat{\bm\Psi})^{-1}
 \widehat{\bm\Psi}^\top\bm Y^1,
 &\text{on }\zeta_n,\\[2mm]
 0,&\text{on }\zeta_n^c.
 \end{cases}
\end{equation}

\begin{remark}[Well-definedness and the fitted constant]
On $\zeta_n$,
\begin{equation}
    \widehat{\bm\Psi}^\top\widehat{\bm\Psi}
 =n_1\widehat{\bm\Gamma}\succeq n_1 c_1 \bm I_K,
\end{equation}
so the inverse in the estimator exists. On the complement,
no inverse is evaluated. In particular, $\zeta_n$ is empty if $K>n_1$.
The estimator uses only the observed data and the known density bounds;
it has no ridge parameter.

The fitted space is
\begin{equation}
    \operatorname{span}\left(
 1,\left\{\frac{\phi_m(x_j)}{\widehat p_j(x_j)}
             :1\leq j\leq d,\ 1\leq m\leq M\right\}\right).
\end{equation}
The constant allows the estimator to account for the means removed
in \eqref{eq:estimated-dictionary}. It does not introduce a nonzero
intercept into the true model \eqref{eq:additive-model}. Centering under
$P$ does not, in general, imply centering under $\lambda_d$.
The integrals in \eqref{eq:estimated-dictionary} are known functionals
of the estimated densities. The theoretical estimator uses these exact
integrals; no closed-form antiderivative is assumed.
\end{remark}

\subsection{Proof of Lemma \ref{lemma:bias_variance}}\label{subsec:decomposition}

For $M\geq1$, introduce the truncated numerators and the Lebesgue mean
\begin{equation}
    h_j\coloneqq\sum_{m=1}^M\theta_m^{(j)}\phi_m, \qquad \mu_f\coloneqq\int_{[0,1]^d}f\,d\lambda_d.
\end{equation}
Define the comparison function
\begin{equation}\label{eq:oracle}
 t_M(\mathbf x)\coloneqq\mu_f+
 \sum_{j=1}^d\left\{
 \frac{h_j(x_j)}{\widehat p_j(x_j)}
 -\int_0^1\frac{h_j(t)}{\widehat p_j(t)}\,dt\right\}.
\end{equation}
This function incorporates both truncation and the estimated dictionary.
It is generally random through $\mathcal D_n^0$.
Although $\mathbb E_P f=0$, the coefficient $\mu_f$ need not vanish
when $P$ is nonuniform. Neither $\mu_f$ nor the coefficients of the
true function are used by the estimator.

Let define
\begin{equation}
    \bm\eta^\star\coloneqq
 \left(\mu_f,\theta_1^{(1)},\ldots,\theta_M^{(1)},\ldots,
 \theta_1^{(d)},\ldots,\theta_M^{(d)}\right)^\top\in\mathbb R^K.
\end{equation}
Then $t_M(\cdot)=\widehat{\bm\psi}(\cdot)^\top\bm\eta^\star$. Indeed, by the definition of the centered dictionary and linearity of the integral,
\begin{align*}
 \widehat{\bm\psi}(\mathbf x)^\top\bm\eta^\star
 &=\mu_f+\sum_{j=1}^d\sum_{m=1}^M
          \theta_m^{(j)}\widehat\psi_m^{(j)}(x_j)\\
 &=\mu_f+\sum_{j=1}^d\left\{
 \frac{\sum_{m=1}^M\theta_m^{(j)}\phi_m(x_j)}{\widehat p_j(x_j)}
 -\int_0^1
 \frac{\sum_{m=1}^M\theta_m^{(j)}\phi_m(t)}{\widehat p_j(t)}\,dt
 \right\}\\
 &=t_M(\mathbf x).
\end{align*}

Set
\begin{equation}\label{eq:residual-definitions}
 \begin{aligned}
 r&\coloneqq f-t_M,\\
 \bm r&\coloneqq
 (r(\mathbf X_1^1),\ldots,r(\mathbf X_{n_1}^1))^\top,\\
 \bm\xi&\coloneqq(\xi_1^1,\ldots,\xi_{n_1}^1)^\top.
 \end{aligned}
\end{equation}
The responses satisfy the exact identity
\begin{equation}\label{eq:response-decomposition}
 \bm Y^1=\widehat{\bm\Psi}\bm\eta^\star+\bm r+\bm\xi.
\end{equation}
Consequently, on $\zeta_n$,
\begin{equation}\label{eq:estimator-decomposition}
 \widehat f_M(\mathbf x) =t_M(\mathbf x) + \widehat{\bm\psi}(\mathbf x)^\top (\widehat{\bm\Psi}^\top\widehat{\bm\Psi})^{-1} \widehat{\bm\Psi}^\top\bm r + \widehat{\bm\psi}(\mathbf x)^\top (\widehat{\bm\Psi}^\top\widehat{\bm\Psi})^{-1} \widehat{\bm\Psi}^\top\bm\xi.
\end{equation}
Equivalently,
\begin{equation}\label{eq:error-decomposition}
 f(\mathbf x)-\widehat f_M(\mathbf x) = r(\mathbf x)
 -\frac1{n_1}\widehat{\bm\psi}(\mathbf x)^\top
       \widehat{\bm\Gamma}^{-1}\widehat{\bm\Psi}^\top\bm r - \frac1{n_1}\widehat{\bm\psi}(\mathbf x)^\top \widehat{\bm\Gamma}^{-1}\widehat{\bm\Psi}^\top\bm\xi.
\end{equation}
Conditional on the two designs $\mathcal D_n$, the dictionary, $r$,
$\bm r$, and $\zeta_n$ are fixed, whereas
\begin{equation}\label{eq:conditional-noise}
 \mathbb E[\bm\xi\mid\mathcal D_n]=0,
 \qquad
 \mathbb E[\bm\xi\bm\xi^\top\mid\mathcal D_n]
   =\sigma^2\bm I_{n_1}.
\end{equation}
The cross term involving the noise therefore has conditional expectation
zero. On $\zeta_n$, we obtain the bias-variance decomposition
\begin{equation}\label{eq:conditional-bias-variance}
\boxed{
\mathbb E\!\left[\|f-\widehat f_M\|^2 \mathbf{1}_{\zeta_n} \mid\mathcal D_n\right] = \underbrace{\left\| r-\frac1{n_1}\widehat{\bm\psi}^{\top} \widehat{\bm\Gamma}^{-1}\widehat{\bm\Psi}^{\top}\bm r \right\|^2 \mathbf{1}_{\zeta_n} }_{\text{conditional squared bias}} + \underbrace{\mathbb E\!\left[ \left\|\frac1{n_1}\widehat{\bm\psi}^{\top} \widehat{\bm\Gamma}^{-1}\widehat{\bm\Psi}^{\top}\bm\xi \right\|^2 \mathbf{1}_{\zeta_n} \mid\mathcal D_n\right] }_{\text{conditional variance}}.}
\end{equation}
Here $\widehat{\bm\psi}$ denotes the vector-valued function of the test
covariate inside the $L^2(P)$ norm. On $\zeta_n^c$, $\widehat f_M = 0$ so the conditional risk is just given by
\begin{equation}
\boxed{
    \mathbb E\!\left[\|f-\widehat f_M\|^2 \mathbf{1}_{\zeta_n^c} \mid\mathcal D_n\right] = \left\| f \right\|^2 \mathbf{1}_{\zeta_n^c}.}
\end{equation}

\subsection{Uniform geometry of the estimated dictionary}\label{subsec:geometry}

We first introduce the standard lemma which bridges the population Gram matrix $\bm \Gamma$ and the function $\widehat{\bm \psi}$.

\begin{lemma}
    Let $\bm \eta \in \mathbb R^K$, we have the following identity
    \begin{equation}
        \| \widehat {\bm \psi} \bm \eta \|^2 = \bm \eta^\top \bm \Gamma \bm \eta.
    \end{equation}
\end{lemma}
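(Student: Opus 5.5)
The identity is the unknown-density analogue of the remark in the known-density section, and I will prove it the same way. The only new points are that $\widehat{\bm\psi}$ is random through $\mathcal D_n^0$ and that $\bm\Gamma$ is a conditional expectation. Throughout, $\widehat{\bm\psi}\bm\eta$ is read as the function $\mathbf x\mapsto\widehat{\bm\psi}(\mathbf x)^\top\bm\eta$. The norm $\|\cdot\|$ is the $L^2(P)$ norm taken with respect to an independent test point $\mathbf X\sim P$, holding the dictionary (hence $\mathcal D_n^0$) fixed.

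\textbf{Step 1: freeze the dictionary.} I condition on $\mathcal D_n^0$. Then each $\widehat p_j$ is a fixed function with $\kappa_{\min}\le\widehat p_j\le\kappa_{\max}$ by \eqref{eq:estimated-density-bounds}. Every entry of $\widehat{\bm\psi}$ is therefore a bounded deterministic function of $\mathbf x$: the quantities $\phi_m/\widehat p_j$ are bounded by $\sqrt2/\kappa_{\min}$, and their Lebesgue integrals are finite constants. Hence $\widehat{\bm\psi}(\mathbf X)\widehat{\bm\psi}(\mathbf X)^\top$ has integrable entries and $\bm\Gamma$ is well defined and finite.

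\textbf{Step 2: move the expectation outside.} Using that $\mathbf X$ is independent of $\mathcal D_n^0$, the conditional expectation is just integration against $p\,d\lambda_d$ with the dictionary frozen. I then compute
\begin{equation*}
\bm\eta^\top\bm\Gamma\bm\eta=\mathbb E\!\left[\bm\eta^\top\widehat{\bm\psi}(\mathbf X)\widehat{\bm\psi}(\mathbf X)^\top\bm\eta\mid\mathcal D_n^0\right]=\int_{[0,1]^d}\bigl(\widehat{\bm\psi}(\mathbf x)^\top\bm\eta\bigr)^2p(\mathbf x)\,d\lambda_d(\mathbf x)=\|\widehat{\bm\psi}^\top\bm\eta\|^2 .
\end{equation*}
The first equality is linearity of the conditional expectation, which applies because $\bm\eta$ is a deterministic vector (or, more generally, $\mathcal D_n^0$-measurable). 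The second equality uses the scalar identity $\bm\eta^\top\mathbf v\mathbf v^\top\bm\eta=(\mathbf v^\top\bm\eta)^2$.

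\textbf{Main obstacle.} The only step needing care is the measure-theoretic justification of the second equality. I will handle it with the freezing (substitution) lemma for conditional expectations: for $\mathbf X$ independent of $\mathcal D_n^0$ and a jointly measurable, integrable $F$, one has $\mathbb E[F(\mathbf X,\mathcal D_n^0)\mid\mathcal D_n^0]=\int F(\mathbf x,\mathcal D_n^0)\,p(\mathbf x)\,d\lambda_d(\mathbf x)$. This is where the joint measurability of the $\widehat p_j$ assumed in the paper is used.
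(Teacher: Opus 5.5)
Your proof is correct and takes the same route as the paper. You pull the deterministic $\bm\eta$ inside the conditional expectation, use $\bm\eta^\top\mathbf v\mathbf v^\top\bm\eta=(\mathbf v^\top\bm\eta)^2$, and evaluate the conditional expectation as an integral against $p\,d\lambda_d$ with the dictionary frozen; your remarks on the boundedness of the entries of $\widehat{\bm\psi}$ and on the substitution lemma make explicit what the paper's computation uses implicitly.
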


\begin{proof}
    It suffices to start from the definition of $\Gamma$ and compute $\bm \eta^\top \bm \Gamma \bm \eta$, indeed we have
    \begin{align}
        \bm \eta^\top \bm \Gamma \bm \eta &= \bm \eta^\top \mathbb E\!\left[
 \widehat{\bm\psi}(\mathbf X)\widehat{\bm\psi}(\mathbf X)^\top
 \mid\mathcal D_n^0\right] \bm \eta \\
 &= \mathbb{E}\left[ \bm \eta^\top \widehat{\bm\psi}(\mathbf X)\widehat{\bm\psi}(\mathbf X)^\top \bm \eta \mid\mathcal D_n^0 \right] \\
 &= \mathbb{E}\left[ \left( \bm \eta^\top \widehat{\bm\psi}(\mathbf X) \right)^2 \mid\mathcal D_n^0 \right] \\
 &= \int \left( \bm \eta^\top \widehat{\bm\psi}(\mathbf x) \right)^2 p(\mathbf x) d\mathbf x \\
 &= \| \widehat {\bm \psi} \bm \eta \|^2.
    \end{align}
\end{proof}

\begin{lemma}[Population Gram matrix and row norm]\label{lem:geometry}
For every realization of the first sample satisfying
\eqref{eq:estimated-density-bounds},
\begin{equation}\label{eq:population-gram-bounds}
 C_{\min}\bm I_K\preceq\bm\Gamma \preceq C_{\max}\bm I_K.
\end{equation}
Moreover,
\begin{equation}\label{eq:row-norm}
 \sup_{\mathbf x\in[0,1]^d}
 \|\widehat{\bm\psi}(\mathbf x)\|_2^2
 \leq1+\frac{8dM}{\kappa_{\min}^2}
 \leq\frac{8K}{\kappa_{\min}^2}.
\end{equation}
These bounds require no event on which the estimated densities are close
to the true densities.
\end{lemma}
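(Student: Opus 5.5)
The plan is to use the identity $\|\widehat{\bm\psi}^\top\bm\eta\|^2=\bm\eta^\top\bm\Gamma\bm\eta$ from the preceding lemma and bound the $L^2(P)$ norm of a generic element of the fitted space from both sides. The essential point is that the estimated dictionary is centered under \emph{Lebesgue} measure, so the Lebesgue geometry is orthogonal even though the $P$-geometry is not.

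\textbf{Setup.} Fix $\bm\eta=(c,(a_m^{(j)}))\in\mathbb R^K$ and set $h_j\coloneqq\sum_{m=1}^M a_m^{(j)}\phi_m$ and $u_j\coloneqq h_j/\widehat p_j-\int_0^1 h_j/\widehat p_j\,d\lambda$. Then
\[
\widehat{\bm\psi}(\mathbf x)^\top\bm\eta=F(\mathbf x)\coloneqq c+\sum_{j=1}^d u_j(x_j).
\]
Conditionally on $\mathcal D_n^0$, the functions $\widehat p_j$ are fixed.

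\textbf{Step 1: compare with Lebesgue measure.} Since $\kappa_{\min}\le p\le\kappa_{\max}$,
\[
\kappa_{\min}\|F\|^2_{L^2(\lambda_d)}\le\|F\|^2\le\kappa_{\max}\|F\|^2_{L^2(\lambda_d)}.
\]
Each $u_j$ depends only on $x_j$ and has zero Lebesgue integral. Under the product measure $\lambda_d$, the functions $1,u_1,\dots,u_d$ are therefore mutually orthogonal, which gives
\[
\|F\|^2_{L^2(\lambda_d)}=c^2+\sum_{j=1}^d\operatorname{Var}_\lambda\!\bigl(h_j/\widehat p_j\bigr).
\]

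\textbf{Step 2: bound each Lebesgue variance.} This is the only delicate step, because $\widehat p_j$ is not constant and need not integrate to one. Write $q=\widehat p_j$ and $h=h_j$.

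For the lower bound, use $q\le\kappa_{\max}$ inside the integral:
\[
\operatorname{Var}_\lambda(h/q)=\inf_{t\in\mathbb R}\int_0^1\Bigl(\frac hq-t\Bigr)^2d\lambda\ \ge\ \frac1{\kappa_{\max}}\inf_{t\in\mathbb R}\int_0^1\Bigl(\frac hq-t\Bigr)^2q\,d\lambda.
\]
Expand the last integrand. Because $\int h\,d\lambda=0$ (the $\phi_m$ with $m\ge1$ have zero mean), the cross term vanishes:
\[
\int_0^1\Bigl(\frac hq-t\Bigr)^2q\,d\lambda=\int_0^1\frac{h^2}{q}\,d\lambda+t^2\int_0^1 q\,d\lambda\ \ge\ \frac{\|h\|^2_{L^2(\lambda)}}{\kappa_{\max}}.
\]
By orthonormality of the trigonometric basis, $\|h\|^2_{L^2(\lambda)}=\sum_m(a_m^{(j)})^2$. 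Hence
\[
\operatorname{Var}_\lambda(h_j/\widehat p_j)\ \ge\ \kappa_{\max}^{-2}\sum_m(a_m^{(j)})^2.
\]
For the upper bound, take $t=0$ and use $q\ge\kappa_{\min}$:
\[
\operatorname{Var}_\lambda(h_j/\widehat p_j)\ \le\ \int_0^1\frac{h_j^2}{\widehat p_j^{\,2}}\,d\lambda\ \le\ \kappa_{\min}^{-2}\sum_m(a_m^{(j)})^2.
\]

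\textbf{Step 3: combine.} From Steps 1 and 2,
\[
\|F\|^2\ \ge\ \kappa_{\min}\Bigl(c^2+\kappa_{\max}^{-2}\sum_{j,m}(a_m^{(j)})^2\Bigr)\ \ge\ \frac{\kappa_{\min}}{\kappa_{\max}^2}\|\bm\eta\|_2^2,
\]
where the last step uses $\kappa_{\max}>1$ to absorb the intercept term. Likewise,
\[
\|F\|^2\ \le\ \kappa_{\max}\Bigl(c^2+\kappa_{\min}^{-2}\sum_{j,m}(a_m^{(j)})^2\Bigr)\ \le\ \frac{\kappa_{\max}}{\kappa_{\min}^2}\|\bm\eta\|_2^2,
\]
using $\kappa_{\min}<1$. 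These two inequalities are exactly \eqref{eq:population-gram-bounds}.

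\textbf{Row-norm bound.} Since $|\phi_m|\le\sqrt2$ and $\widehat p_j\ge\kappa_{\min}$, both $\phi_m/\widehat p_j$ and its integral are bounded in absolute value by $\sqrt2/\kappa_{\min}$. Therefore $|\widehat\psi_m^{(j)}|\le2\sqrt2/\kappa_{\min}$, so each squared entry is at most $8/\kappa_{\min}^2$. Summing the $dM$ dictionary entries and adding $1$ for the constant coordinate gives $1+8dM/\kappa_{\min}^2$. Finally, $1\le 8/\kappa_{\min}^2$ because $\kappa_{\min}<1$, so this is at most $8K/\kappa_{\min}^2$, which is \eqref{eq:row-norm}.

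None of these steps uses closeness of $\widehat p_j$ to $p_j$; only the almost-sure bounds \eqref{eq:estimated-density-bounds} enter.
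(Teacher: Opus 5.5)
Your proof is correct and follows essentially the same route as the paper. Both use Lebesgue orthogonality of the Lebesgue-centered blocks, the per-block bounds $\kappa_{\max}^{-2}\sum_m (a_m^{(j)})^2\le\|u_j\|_{L^2(\lambda)}^2\le\kappa_{\min}^{-2}\sum_m (a_m^{(j)})^2$, absorption of the intercept via $\kappa_{\min}<1<\kappa_{\max}$, the comparison of $L^2(\lambda_d)$ with $L^2(P)$, and the same entrywise row-norm bound. The only cosmetic difference is the per-block lower bound: you obtain it from the $\widehat p_j$-weighted variational form of the variance, whereas the paper applies Cauchy--Schwarz to $\langle h_j,u_j\rangle_{L^2(\lambda)}=\int_0^1 h_j^2/\widehat p_j\,d\lambda$; both rely on $\int_0^1 h_j\,d\lambda=0$ and give the same constant.
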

\begin{proof}
Fix $\mathcal D_n^0$. For $\bm u=(u_1,\ldots,u_M)^\top$, set
\begin{equation}
    h=\sum_{m=1}^M u_m\phi_m,
 \qquad
 v=\frac h{\widehat p_j}
             -\int_0^1\frac{h(t)}{\widehat p_j(t)}\,dt.
\end{equation}
Since $\int h\,d\lambda=0$ and $\widehat p_j\leq \kappa_{\max}$,
\begin{equation}
    \langle h,v\rangle_{L^2(\lambda)}
 =\int_0^1\frac{h(t)^2}{\widehat p_j(t)}\,dt
 \geq\frac1{\kappa_{\max}}\|h\|_{L^2(\lambda)}^2.
\end{equation}
Cauchy--Schwarz yields
$\|v\|_{L^2(\lambda)}\geq
\|h\|_{L^2(\lambda)}/\kappa_{\max}$; the statement is immediate if
$h=0$. Conversely, subtracting the Lebesgue mean is an orthogonal
projection in $L^2(\lambda)$, so
\begin{equation}
    \|v\|_{L^2(\lambda)}
 \leq\|h/\widehat p_j\|_{L^2(\lambda)}
 \leq\frac1{\kappa_{\min}}\|h\|_{L^2(\lambda)}.
\end{equation}
By orthonormality of the $\phi_m$, we have therefore proved
\begin{equation}\label{eq:block-geometry}
 \frac1{\kappa_{\max}^2}\|\bm u\|_2^2
 \leq\|v\|_{L^2(\lambda)}^2
 \leq\frac1{\kappa_{\min}^2}\|\bm u\|_2^2.
\end{equation}

For any $\bm\eta\in\mathbb R^K$, let compute $ \widehat{\bm\psi}^\top\bm\eta $:
\begin{align}
    \widehat{\bm\psi}^\top\bm\eta &= \eta_0 + \sum\limits_{j=1}^d \sum\limits_{m=1}^M \eta_m^{(j)} \, \widehat\psi_m^{(j)} \\
    &= \eta_0 + \sum\limits_{j=1}^d \underbrace{\sum\limits_{m=1}^M \eta_{m}^{(j)} \, \left( \frac{ \phi_m }{ \widehat{p}_j } - \int_{[0,1]} \frac{\phi_m(t)}{\widehat{p}_j(t)} \, dt \right)}_{ v_j }.
\end{align}
We can write $\widehat{\bm\psi}^\top\bm\eta = \eta_0+\sum_{j=1}^d v_j$, where the $v_j$'s are centered, univariate and mutually orthogonal under the product measure $\lambda_d$. Thus
\begin{equation}
    \|\widehat{\bm\psi}^{\top}\bm\eta\|_{L^2(\lambda_d)}^2 = \eta_0^2+\sum_{j=1}^d\|v_j\|_{L^2(\lambda)}^2.
\end{equation}
The equation \eqref{eq:block-geometry} yields
\begin{equation}
  \eta_0^2 + \frac{1}{\kappa_{\max}^2} \sum\limits_{j=1}^d \sum\limits_{m=1}^M \left( \eta_m^{(j)} \right)^2 \leq \|\widehat{\bm\psi}^{\top}\bm\eta\|_{L^2(\lambda_d)}^2 \leq \eta_0^2 + \frac{1}{\kappa_{\min}^2} \sum\limits_{j=1}^d \sum\limits_{m=1}^M \left( \eta_m^{(j)} \right)^2.
\end{equation}
By using that $\kappa_{\min} < 1 < \kappa_{\max}$, we obtain
\begin{equation}
    \frac1{\kappa_{\max}^2}\|\bm\eta\|_2^2
 \leq\|\widehat{\bm\psi}^{\top}\bm\eta\|_{L^2(\lambda_d)}^2
 \leq\frac1{\kappa_{\min}^2}\|\bm\eta\|_2^2.
\end{equation}
Finally, since $\kappa_{\min} \leq p \leq \kappa_{\max}$, we have
\begin{equation}\label{eq:riesz-bounds}
 C_{\min}\|\bm\eta\|_2^2 \leq\|\widehat{\bm\psi}^{\top}\bm\eta\|^2 =\bm\eta^\top\bm\Gamma\bm\eta \leq C_{\max}\|\bm\eta\|_2^2,
\end{equation}
which proves \eqref{eq:population-gram-bounds}.
For \eqref{eq:row-norm}, use
$|\widehat\psi_m^{(j)}|\leq2\sqrt2/\kappa_{\min}$, sum the squared
coordinates, and recall that $\kappa_{\min} < 1$.
\end{proof}

\subsection{Proof of Lemma \ref{lem:approximation}}\label{subsec:approximation}

Fix $(p,f)$ and $\mathcal D_n^0$, and define $w_j\coloneqq f_j-h_j/\widehat p_j$. Since $\mu_f=\sum_{j=1}^d\int_0^1 f_j\,d\lambda$,
\begin{equation}\label{eq:centered-residual}
 r(\mathbf x)
 =\sum_{j=1}^d\left(w_j(x_j)-\int_0^1w_j(x)\,dx\right).
\end{equation}
The summands in \eqref{eq:centered-residual} are centered and
orthogonal under $\lambda_d$, for every realization of $\mathcal D_n^0$.
Consequently,
\begin{align}\label{eq:residual-sum}
 \|r\|_{L^2(P)}^2
 &\leq \kappa_{\max}\|r\|_{L^2(\lambda_d)}^2\notag\\
 &\leq\kappa_{\max}\sum_{j=1}^d \left\|w_j-\int_0^1w_j\,d\lambda\right\|_{L^2(\lambda)}^2 \\
 &\leq \kappa_{\max}\sum_{j=1}^d\|w_j\|_{L^2(\lambda)}^2.
\end{align}
Using $f_j=g_j/p_j$, we decompose each summand as
\begin{equation}
     w_j=\frac{g_j-h_j}{p_j}
         +h_j\left(\frac1{p_j}-\frac1{\widehat p_j}\right).
\end{equation}
The density lower bounds and $(u+v)^2\leq2u^2+2v^2$ yield
\begin{equation}\label{eq:univariate-approximation}
 \|w_j\|_{L^2(\lambda)}^2 \leq\frac2{\kappa_{\min}^2}\|g_j-h_j\|_{L^2(\lambda)}^2 + \frac2{\kappa_{\min}^4} \int_0^1 h_j(x)^2|\widehat p_j(x)-p_j(x)|^2\,dt.
\end{equation}
By Parseval's identity and \eqref{eq:sobolev-class},
\begin{equation}\label{eq:fourier-tail}
 \|g_j-h_j\|_{L^2(\lambda)}^2 = \sum_{m>M}(\theta_m^{(j)})^2 \leq R^2M^{-2\beta},
 \qquad
 \|h_j\|_{L^2(\lambda)}^2\leq R^2.
\end{equation}
For the fixed pair $(p,f)$, $h_j$ is deterministic. Tonelli's theorem
and \eqref{eq:density-risk-assumption} therefore give
\begin{align}\label{eq:weighted-density-error}
 \mathbb E\!\left[
 \int_0^1h_j(x)^2|\widehat p_j(x)-p_j(x)|^2\,dt\right] &=\int_0^1h_j(x)^2
            \mathbb E[|\widehat p_j(x)-p_j(x)|^2]\,dx\notag\\
 &\leq\rho_{n_0}\|h_j\|_{L^2(\lambda)}^2\notag\\
 &\leq R^2\rho_{n_0}.
\end{align}
Substituting \eqref{eq:fourier-tail} and
\eqref{eq:weighted-density-error} into
\eqref{eq:univariate-approximation}, and summing in
\eqref{eq:residual-sum}, proves the claim.

\begin{remark}[The dependence on dimension and the density norm]
The centering identity \eqref{eq:centered-residual} is responsible for
the linear dependence on $d$: it avoids bounding the squared norm of
a sum by $d$ times the sum of squared norms. The calculation uses
$\sup_t\mathbb E|\widehat p_j(t)-p_j(t)|^2$, not
$\mathbb E\|\widehat p_j-p_j\|_{L^\infty}^2$.
Indeed, the deterministic weights $h_j^2$ allow the pointwise risk bound
to be integrated directly. This uses only $\|h_j\|_{L^2(\lambda)}\leq R$,
so it applies to every $\beta>0$ and allows dependence between the
different marginal estimators.
\end{remark}

\subsection{Bounding the conditional squared bias}\label{subsec:bias}

In this subsection, we prove the inequality of equation \eqref{eq:bias_upper_bound}.

On $\zeta_n$, define
\begin{equation}\label{eq:A-and-B}
 \bm A\coloneqq\widehat{\bm\Psi}^\top\widehat{\bm\Psi}
             =n_1\widehat{\bm\Gamma},
 \qquad
 \bm B\coloneqq\bm A^{-1}\widehat{\bm\Psi}^\top.
\end{equation}
We extend $\bm B$ by zero on $\zeta_n^c$ when writing expressions
with an indicator. On $\zeta_n$,
\begin{equation}
     \bm B\bm B^\top
 =\bm A^{-1}\widehat{\bm\Psi}^\top\widehat{\bm\Psi}\bm A^{-1}
 =\bm A^{-1},
\end{equation}
and therefore
\begin{equation}\label{eq:pseudoinverse-bound}
 \|\bm B\|_{\mathrm{op}}^2
 =\|\bm A^{-1}\|_{\mathrm{op}}
 \leq\frac1{n_1c_1}.
\end{equation}
The triangle inequality and the population norm bound \eqref{eq:riesz-bounds} implies, on this event,
\begin{align}\label{eq:bias-design-bound}
 \|r-\widehat{\bm\psi}^{\top}\bm B\bm r\|^2
 &\leq2\|r\|^2
        +2\|\widehat{\bm\psi}^{\top}\bm B\bm r\|^2\notag\\
 &\leq2\|r\|^2+2C_{\max}\|\bm B\bm r\|_2^2\notag\\
 &\leq2\|r\|^2
       +\frac{2C_{\max}}{n_1 c_1}
           \sum_{i=1}^{n_1}r(\mathbf X_i^1)^2.
\end{align}
Conditional on $\mathcal D_n^0$, $r$ is fixed and the second-sample
covariates are i.i.d. with distribution $P$. Hence
\begin{equation}\label{eq:empirical-residual-expectation}
 \mathbb E\!\left[
 \sum_{i=1}^{n_1}r(\mathbf X_i^1)^2\mid\mathcal D_n^0\right]
 =n_1\|r\|^2.
\end{equation}
Multiplying \eqref{eq:bias-design-bound} by $\mathbf 1_{\zeta_n}$,
discarding the indicator on its nonnegative right-hand side, and using
\eqref{eq:empirical-residual-expectation} gives
\begin{equation}\label{eq:accepted-bias}
 \mathbb E\!\left[
 \|r-\widehat{\bm\psi}^{\top}\bm B\bm r\|^2
 \mathbf 1_{\zeta_n}\mid\mathcal D_n^0\right]
 \leq\left(2+\frac{2C_{\max}}{c_1}\right)\|r\|^2.
\end{equation}
In particular, no concentration inequality for the approximation
residual is needed. Also, the two terms inside the norm on the left
are not asserted to be orthogonal in $L^2(P)$.

\subsection{Bounding the conditional variance}\label{subsec:variance}

In this subsection, we prove the inequality of equation \eqref{eq:var_upper_bound}.

The prediction loss additionally involves the population Gram matrix.
Recall also that $\bm B\bm B^\top=\bm (\widehat{\bm\Psi}^\top\widehat{\bm\Psi})^{-1}$ on $\zeta_n$,
\begin{align}\label{eq:prediction-variance}
 \mathbb E\!\left[ \|\widehat{\bm\psi}^{\top}\bm B\bm\xi\|^2 \mid\mathcal D_n\right] &= \mathbb E\!\left[ (\bm B\bm\xi)^\top\bm\Gamma(\bm B\bm\xi) \mid\mathcal D_n\right]\notag\\
 &= \mathbb{E}\left[ \bm \xi^\top \bm B^\top \bm \Gamma \bm B \bm \xi \mid\mathcal D_n \right] \notag\\
 &= \operatorname{Tr}(\mathbb{E}\left[ \bm \xi^\top \bm B^\top \bm \Gamma \bm B \bm \xi \mid\mathcal D_n \right]) \notag\\
 &= \operatorname{Tr}( \bm B^\top \bm \Gamma \bm B \mathbb{E}\left[ \bm \xi \bm \xi^\top \mid\mathcal D_n \right]) \notag\\
 &=\sigma^2\operatorname{Tr}(\bm\Gamma\bm B\bm B^\top)\notag\\
 &=\sigma^2 \operatorname{Tr}(\bm\Gamma ( \widehat{\bm\Psi}^\top\widehat{\bm\Psi})^{-1}).
\end{align}
Using that $\bm\Gamma\preceq C_{\max}\bm I_K$ and recalling that on $\zeta_n$ we have $\widehat{\bm\Psi}^\top\widehat{\bm\Psi}\succeq n_1c_1\bm I_K $, this leads to the desired result. And because $\zeta_n$ is measurable with respect to $\mathcal D_n$, the tower property gives
\begin{equation}\label{eq:accepted-variance}
 \mathbb E\!\left[
 \|\widehat{\bm\psi}^{\top}\bm B\bm\xi\|^2
 \mathbf 1_{\zeta_n}\mid\mathcal D_n^0\right]
 \leq\frac{\sigma^2C_{\max}K}{n_1c_1}.
\end{equation}

\subsection{Risk on the rejected event}\label{subsec:rejected-risk}

We use only the following lower-tail form of the matrix Chernoff
inequality. We state the precise result and its parameters to make the
dependence on $K$ explicit.

\begin{lemma}[Rejection probability]\label{lem:rejection}
Let define the following quantity
\begin{equation}
    \Delta_{n,K} \coloneqq \min\left\{1,K\exp\!\left(-c_2\frac{n_1}{K}\right)\right\}.
\end{equation}
Almost surely with respect to the first sample, we have the following upper bound
\begin{equation}\label{eq:rejection-probability}
 \mathbb P_{p,f}(\zeta_n^c\mid\mathcal D_n^0) \leq \Delta_{n,K}.
\end{equation}
\end{lemma}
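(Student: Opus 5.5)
We condition on $\mathcal D_n^0$ and apply the matrix Chernoff inequality of Theorem~\ref{thm:tropp} to the empirical Gram matrix built from the second subsample. Fix a realization of the first sample satisfying \eqref{eq:estimated-density-bounds}. The dictionary $\widehat{\bm\psi}$ is then a deterministic function, and the vectors $\mathbf X_1^1,\ldots,\mathbf X_{n_1}^1$ remain i.i.d. with law $P$, because the two subsamples are independent. We write
\[
\widehat{\bm\Gamma}=\sum_{i=1}^{n_1}\bm S_i,\qquad \bm S_i\coloneqq\frac1{n_1}\widehat{\bm\psi}(\mathbf X_i^1)\widehat{\bm\psi}(\mathbf X_i^1)^\top .
\]
Conditionally on $\mathcal D_n^0$, these are independent, symmetric, positive semidefinite $K\times K$ matrices.

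\textbf{Step 1: the two Chernoff parameters.} Each $\bm S_i$ is rank one, so its largest eigenvalue is $\|\widehat{\bm\psi}(\mathbf X_i^1)\|_2^2/n_1$. The row-norm bound \eqref{eq:row-norm} of Lemma~\ref{lem:geometry} then gives
\[
\bm S_i\preceq b_0\bm I_K,\qquad b_0\coloneqq \frac{8K}{n_1\kappa_{\min}^2}.
\]
Next, $\sum_i\mathbb E[\bm S_i\mid\mathcal D_n^0]=\bm\Gamma$. By \eqref{eq:population-gram-bounds} this yields $\mu_{\min}=\lambda_{\min}(\bm\Gamma)\geq C_{\min}$. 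Neither bound requires the estimated densities to be close to the true ones; both hold for every admissible realization.

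\textbf{Step 2: apply \eqref{eq:tropp-half}.} Since $c_1=C_{\min}/2\leq\mu_{\min}/2$, we have the inclusion $\zeta_n^c=\{\lambda_{\min}(\widehat{\bm\Gamma})<c_1\}\subseteq\{\lambda_{\min}(\widehat{\bm\Gamma})\leq\mu_{\min}/2\}$. The conditional form of \eqref{eq:tropp-half} therefore bounds $\mathbb P(\zeta_n^c\mid\mathcal D_n^0)$ by
\[
K\exp\!\Big(-\tfrac{1-\log2}{2}\,\tfrac{\mu_{\min}}{b_0}\Big)\leq K\exp\!\Big(-\tfrac{(1-\log 2)\kappa_{\min}^3}{16\kappa_{\max}^2}\,\tfrac{n_1}{K}\Big).
\]
The inequality uses $\mu_{\min}/b_0\geq C_{\min}n_1\kappa_{\min}^2/(8K)$ together with $C_{\min}=\kappa_{\min}/\kappa_{\max}^2$. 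The constant in the exponent is exactly $c_2$ as defined in \eqref{eq:constants}. Since a probability never exceeds $1$, we may take the minimum with $1$, which gives $\Delta_{n,K}$.

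\textbf{Technical point.} The only genuinely delicate step is applying Theorem~\ref{thm:tropp} conditionally. I would handle it with a regular conditional distribution. Given $\mathcal D_n^0$, the $\bm S_i$ are measurable functions of the independent $\mathbf X_i^1$ through a fixed map, so the theorem applies pathwise for each admissible realization. Since \eqref{eq:estimated-density-bounds} holds almost surely, the bound then holds almost surely with respect to the first sample. Measurability of $\zeta_n$ follows from continuity of $\lambda_{\min}$ together with the joint measurability of the $\widehat p_j$.
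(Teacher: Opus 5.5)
Your proof is correct and follows the paper's argument. You condition on $\mathcal D_n^0$, take the Chernoff parameters from the row-norm bound and $\bm\Gamma\succeq C_{\min}\bm I_K$ of Lemma~\ref{lem:geometry}, and apply the $\varepsilon=1/2$ form of Theorem~\ref{thm:tropp} through the inclusion $\zeta_n^c\subseteq\{\lambda_{\min}\leq\mu_{\min}/2\}$. The only difference is cosmetic: the paper uses the unnormalized $\bm S_i$ with $b_0=8K/\kappa_{\min}^2$ and $\mu_{\min}=n_1\lambda_{\min}(\bm\Gamma)$, while you divide by $n_1$, which leaves the ratio $\mu_{\min}/b_0$, and hence $c_2$, unchanged.
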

\begin{proof}
Condition on $\mathcal D_n^0$ and set
\begin{equation}
    \bm S_i\coloneqq
 \widehat{\bm\psi}(\mathbf X_i^1)
 \widehat{\bm\psi}(\mathbf X_i^1)^\top,
 \qquad 1\leq i\leq n_1.
\end{equation}
These matrices are conditionally independent and positive semidefinite.
Their sum is $n_1\widehat{\bm\Gamma}$.
The only nonzero eigenvalue of a matrix $\bm v\bm v^\top$ is
$\|\bm v\|_2^2$. Thus Lemma~\ref{lem:geometry} verifies the hypotheses
of Theorem~\ref{thm:tropp}, conditionally on $\mathcal D_n^0$, with
\begin{equation}\label{eq:tropp-parameters}
 N=n_1,\qquad
 b_0=\frac{8K}{\kappa_{\min}^2},\qquad
 \mu_{\min}=n_1\lambda_{\min}(\bm\Gamma)
                  \geq n_1C_{\min}.
\end{equation}
Since $c=C_{\min}/2$, we have the inclusion
\begin{equation}
    \zeta_n^c
 \subseteq\left\{
 \lambda_{\min}\!\left(\sum_{i=1}^{n_1}\bm S_i\right)
                         \leq\mu_{\min}/2\right\}.
\end{equation}
Applying \eqref{eq:tropp-half} and using
\eqref{eq:tropp-parameters},
\begin{align}
 \mathbb P(\zeta_n^c\mid\mathcal D_n^0)
 &\leq K\exp\!\left(
 -\frac{1-\log2}{2}\frac{n_1C_{\min}\kappa_{\min}^2}{8K}
 \right)\\
 &=K\exp\!\left(-c_2\frac{n_1}{K}\right).
\end{align}
Taking the minimum with $1$ proves the claim.
\end{proof}

The Riesz bounds and the Sobolev smoothness yield the following inequality for every $(p,f)\in\mathcal C_{\beta,\gamma}$,
\begin{equation}\label{eq:signal-bound}
 \|f\|^2\leq C_{\max}dR^2.
\end{equation}

Because $\widehat f_M=0$ on $\zeta_n^c$, Lemmas~\ref{lem:rejection}
and this last inequality give, conditionally on $\mathcal D_n^0$,
\begin{align}\label{eq:rejected-risk}
 \mathbb E\!\left[ \|f-\widehat f_M\|^2\mathbf 1_{\zeta_n^c}\mid\mathcal D_n^0\right]
 &=\|f\|^2 \mathbb P(\zeta_n^c\mid\mathcal D_n^0) \\
 &\leq C_{\max}dR^2\Delta_{n,K}.
\end{align}
In particular, the rejection contribution involves the squared
$L^2(P)$ norm of the signal, rather than a supremum norm.

\subsection{Combining the bounds}\label{subsec:assembly}

In this subsection we combine all the bound and complete the proof of Theorem \ref{thm:main-unknown-density-upper}.

First, by integrating the equation \eqref{eq:conditional-bias-variance} over $\mathcal D_n^1$, we obtain the following identity
\begin{equation}
\mathbb E\!\left[ \|f-\widehat f_M\|^2\mathbf 1_{\zeta_n} \mid\mathcal D_n^0\right] = \mathbb E\!\left[
 \|r-\widehat{\bm\psi}^{\top}\bm B\bm r\|^2
 \mathbf 1_{\zeta_n}\mid\mathcal D_n^0\right] + \mathbb E\!\left[
 \|\widehat{\bm\psi}^{\top}\bm B\bm\xi\|^2
 \mathbf 1_{\zeta_n}\mid\mathcal D_n^0\right].
\end{equation}

Equations~\eqref{eq:accepted-bias} and~\eqref{eq:accepted-variance} imply
\begin{equation}
\mathbb E\!\left[ \|f-\widehat f_M\|^2\mathbf 1_{\zeta_n} \mid\mathcal D_n^0\right] \leq \left(2+\frac{2C_{\max}}{c_1}\right)\|r\|^2 +\frac{\sigma^2C_{\max}K}{n_1c_1}.
\end{equation}
Adding \eqref{eq:rejected-risk} and integrating over $\mathcal D_n^0$ yields
\begin{equation}
 \mathbb E\!\left[\|f-\widehat f_M\|^2\right] \leq\left(2+\frac{4C_{\max}}{C_{\min}}\right) \mathbb E_{p,f}[\|r\|^2] + \frac{2\sigma^2C_{\max}K}{n_1 C_{\min}} + C_{\max}dR^2\Delta_{n,K}.
\end{equation}

Substitute the approximation bound of Lemma~\ref{lem:approximation} yields this final upper bound

\begin{equation}\label{eq:risk-before-approximation}
\boxed{
 \mathbb E\!\left[\|f-\widehat f_M\|^2\right] \leq\left(2+\frac{4C_{\max}}{C_{\min}}\right) 2\kappa_{\max}dR^2
 \left(\frac{M^{-2\beta}}{\kappa_{\min}^2}
           +\frac{\rho_{n_0}}{\kappa_{\min}^4}\right) + \frac{2\sigma^2C_{\max}K}{n_1 C_{\min} } + C_{\max}dR^2\Delta_{n,K}}
\end{equation}

Every term on the resulting right-hand side is independent of the
particular pair $(p,f)$, which justifies taking the supremum over the
whole class and allows us to obtain a minimax upper bound. Combining this inequality with $n_0,n_1\asymp n$ and $\rho_{n_0} \lesssim n^{ - \frac{2 \gamma}{2 \gamma + 1} }$ yields an upper bound of the following form
\begin{equation}\label{eq:rate-with-remainder}
 \sup_{(p,f)\in\mathcal C_{\beta,\gamma}}
 \mathbb E_{p,f}[\|\widehat f_M-f\|^2]
 \lesssim
 d \, M^{-2\beta}
 + \frac{dM}{n}
 +d \, n^{ - \frac{2 \gamma}{2 \gamma + 1} }
 +d \, \Delta_{n,K}.
\end{equation}
This bound is valid without a growth condition on $d$; the rejection
term is retained explicitly.

Balancing the Sobolev approximation term and the variance gives
$M\asymp n^{\frac{1}{2\beta + 1}}$. Under Assumption~\ref{assu:dimension},
\begin{equation}
     d\log n
    =
    o\!\left(n^{\frac{2\beta}{2\beta+1}}\right),
\end{equation}
and therefore
\begin{equation}
    K\log n  \asymp dM\log n =o(n).
\end{equation}
Since $n_1\asymp n$ and $c_2>0$ is fixed, it follows that
\begin{equation}
    \frac{c_2n_1}{K\log n}\longrightarrow\infty.
\end{equation}
Moreover, $K\leq n$ for all sufficiently large $n$. Using $\Delta_{n,K}\leq K\exp(-c_2n_1/K)$, we obtain
\begin{align}
    n^{\frac{2\beta}{2\beta+1}}\Delta_{n,K}
    &\leq
    \exp\left(
        \frac{2\beta}{2\beta+1}\log n
        +\log K-\frac{c_2n_1}{K}
    \right)\\
    &\leq
    \exp\left[
        \log n\left(
            1+\frac{2\beta}{2\beta+1}
            -\frac{c_2n_1}{K\log n}
        \right)
    \right]
    \longrightarrow0.
\end{align}
Consequently,
\begin{equation}
    d\,\Delta_{n,K}
    =
    o\!\left(
        d\,n^{-\frac{2\beta}{2\beta+1}}
    \right).
\end{equation}
We finally obtain the desired minimax upperbound
\begin{equation}
\boxed{
 \inf_{\widetilde f}\sup_{(p,f)\in\mathcal C_{\beta,\gamma}} \mathbb E_{p,f}[\|\widetilde f-f\|^2] \lesssim d\,n^{ - \frac{2 \beta}{ 2 \beta + 1 } }
         +d\,n^{- \frac{2 \gamma}{ 2 \gamma + 1 }}.}
\end{equation}

\subsection{Construction of the marginal-density estimators}
\label{subsec:density-construction}

We now establish the density-estimation guarantee used in the
regression analysis. We use piecewise polynomial projection followed
by pointwise clipping. This construction handles the boundary of
$[0,1]$ without extending the marginal densities beyond their domain.

Throughout this subsection, write
\begin{equation}
     s = \lfloor\gamma\rfloor, \qquad \alpha=\gamma-s\in[0,1).
\end{equation}
Under our definition of $(\gamma,L)$-Hölder regularity, each marginal
density $p_j$ belongs to $C^s([0,1])$ and satisfies
\begin{equation}
     |p_j^{(s)}(u)-p_j^{(s)}(v)| \le L|u-v|^\alpha, \qquad u,v\in[0,1],\quad u\ne v.
\end{equation}
Here $p_j^{(0)}=p_j$. When $\gamma$ is an integer, $\alpha=0$
and this condition bounds the oscillation of $p_j^{(s)}$ by $L$.
The marginal density bounds
$\kappa_{\min}\le p_j\le\kappa_{\max}$ follow from the
joint density bounds defining $\mathcal C_{\beta,\gamma}$.

\begin{lemma}[Uniform pointwise density risk]
\label{lem:density-estimation}
Let $s=\lfloor\gamma\rfloor$ and $n_0\ge1$.
For every integer $J\ge1$, there exist estimators
$\widehat p_j$, depending on $J$ and constructed from
$\mathcal D_n^0$, satisfying
\eqref{eq:estimated-density-bounds} and
\begin{equation}
\label{eq:density-explicit-risk}
    \max_{1\le j\le d}\sup_{t\in[0,1]}\mathbb E_{p}\!\left[|\widehat p_j(t)-p_j(t)|^2 \right] \le \left(\frac{(s+2)L}{s!}\right)^2J^{-2\gamma} +\kappa_{\max}(s+1)^2\frac{J}{n_0}.
\end{equation}
In particular, choosing
$J=\lceil n_0^{ \frac{1}{2\gamma + 1} }\rceil$
provides the guarantee
\eqref{eq:density-risk-assumption} with a deterministic bound
$\rho_{n_0}$ satisfying
\begin{equation}
\label{eq:density-rate}
    \rho_{n_0} \le C_{\gamma,L,\kappa_{\max}}\, n_0^{- \frac{2\gamma}{2 \gamma + 1} } \lesssim n_0^{ -\frac{2 \gamma}{2\gamma + 1} }
\end{equation}
where the underlying constant is independent of $n,d$ and the pair $(p,f)$.
\end{lemma}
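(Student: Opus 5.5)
We construct the estimator from a single marginal sample and treat one fixed coordinate $j$; the bound is uniform in $j$ because only the $j$-th coordinates of $\mathcal D_n^0$ are used, so dependence across coordinates is irrelevant.

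\textbf{Construction.} Partition $[0,1]$ into $J$ intervals $I_1,\dots,I_J$ of length $1/J$. On each $I_k$, let $(e_{k,0},\dots,e_{k,s})$ be the rescaled orthonormal Legendre polynomials of degree $\le s$ in $L^2(I_k,\lambda)$. These satisfy $\|e_{k,\ell}\|_\infty\le \sqrt{(2\ell+1)J}$, and their bound has the form $\sqrt{J}\cdot$const, which is what drives the factor $(s+1)^2$. Define the projection estimator
\[
\tilde p_j(t)=\sum_{\ell=0}^s \Big(\frac1{n_0}\sum_{i=1}^{n_0} e_{k,\ell}(X^0_{ij})\Big) e_{k,\ell}(t),\qquad t\in I_k ,
\]
and clip it, $\widehat p_j=\min\{\kappa_{\max},\max\{\kappa_{\min},\tilde p_j\}\}$. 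Clipping gives \eqref{eq:estimated-density-bounds}. Since $p_j(t)\in[\kappa_{\min},\kappa_{\max}]$ and clipping onto an interval containing $p_j(t)$ is $1$-Lipschitz, we have $|\widehat p_j(t)-p_j(t)|\le|\tilde p_j(t)-p_j(t)|$ pointwise. It therefore suffices to bound the pointwise risk of $\tilde p_j$, which splits into squared bias plus variance because $\mathbb E\tilde p_j(t)=\Pi_k p_j(t)$, the local $L^2(\lambda)$ projection of $p_j$ onto polynomials of degree $\le s$ on $I_k$.

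\textbf{Variance.} At fixed $t\in I_k$, $\tilde p_j(t)$ is an empirical mean of $Z=\sum_\ell e_{k,\ell}(X_j)e_{k,\ell}(t)=K_k(X_j,t)$, where $K_k$ is the projection kernel. Hence
\[
\operatorname{Var}\le n_0^{-1}\mathbb E Z^2\le n_0^{-1}\kappa_{\max}\int_{I_k}K_k(x,t)^2dx=n_0^{-1}\kappa_{\max}K_k(t,t)\le \kappa_{\max}(s+1)^2J/n_0 .
\]
The last step uses $K_k(t,t)=\sum_\ell e_{k,\ell}(t)^2\le J\sum_{\ell=0}^{s}(2\ell+1)=J(s+1)^2$.

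\textbf{Bias.} Let $T$ be the Taylor polynomial of degree $s$ of $p_j$ at the left endpoint of $I_k$. Since $\Pi_k T=T$,
\[
|\Pi_k p_j(t)-p_j(t)|\le|\Pi_k(p_j-T)(t)|+|(p_j-T)(t)| .
\]
The Taylor remainder with Hölder control of $p_j^{(s)}$ gives $\|p_j-T\|_{L^\infty(I_k)}\le \frac{L}{s!}J^{-\gamma}$. For $s=0$ this reads directly as $|p_j(t)-p_j(a)|\le L J^{-\gamma}$, including the integer case $\alpha=0$ where $L$ bounds the oscillation. For the projection term, $|\Pi_k u(t)|\le\int_{I_k}|K_k(t,x)||u(x)|dx\le\|u\|_\infty\sqrt{J^{-1}K_k(t,t)}\le (s+1)\|u\|_\infty$. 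Summing the two pieces gives a bias at most $\frac{(s+2)L}{s!}J^{-\gamma}$, which matches the constant in \eqref{eq:density-explicit-risk}.

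\textbf{Rate.} Choosing $J=\lceil n_0^{1/(2\gamma+1)}\rceil$ balances $J^{-2\gamma}$ against $J/n_0$, so both terms are $\lesssim n_0^{-2\gamma/(2\gamma+1)}$. This uses $J\le 2n_0^{1/(2\gamma+1)}$. The resulting constant depends only on $\gamma,L,\kappa_{\max}$, which yields \eqref{eq:density-rate}.

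The main obstacle is the bias step: it requires a Lebesgue constant for the local $L^2$ polynomial projection that is uniform in $k$ and $J$, and a Hölder–Taylor remainder valid up to the boundary without extending $p_j$. I would handle both through the explicit Legendre sup-norm bound, which avoids any boundary correction because each $I_k\subset[0,1]$.
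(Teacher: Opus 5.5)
Your proposal is correct and follows the paper's proof essentially step for step. It uses the same piecewise Legendre projection estimator with clipping and the same pointwise contraction, the variance bound via $\int K_k(t,x)^2\,dx=K_k(t,t)\le J(s+1)^2$, and the bias bound via the left-endpoint Taylor polynomial, polynomial reproduction and the Cauchy--Schwarz Lebesgue-constant bound $(s+1)$, giving exactly the constant $\frac{(s+2)L}{s!}$. The only harmless slip is attaching the integer case $\alpha=0$ to $s=0$: this cannot occur because $\gamma>0$ forces $\alpha=\gamma>0$ when $s=0$, and the integer case instead arises for $s\ge1$, where it is covered by the integral Taylor remainder with the oscillation bound $L$, as in the paper.
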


\begin{proof}
Fix $(p,f)\in\mathcal C_{\beta,\gamma}$.
The estimators below depend only on the first-subsample covariates,
so their distributions depend on $p$ but not on $f$ and the expectations will be taken over $p$.

Let be a integer $J \geq 1$, partition $[0,1]$ into intervals of length $1/J$ by setting for every $k \in \{1, \dots, J-1\}$
\begin{equation}
    I_k \coloneqq \left[\frac{k-1}{J},\frac{k}{J}\right[,
\end{equation}
and 
\begin{equation}
    I_J \coloneqq \left[\frac{J-1}{J},1\right].
\end{equation}
Thus every point belongs to exactly one interval, including the endpoints.

Let $\mathcal L_0,\ldots,\mathcal L_s$ be the shifted Legendre polynomials
normalized to form an orthonormal basis of the polynomials of
degree at most $s$ in $L^2(\lambda)$. They satisfy
\begin{equation}
\label{eq:legendre-bound}
    |\mathcal L_r(u)|\le\sqrt{2r+1}, \qquad  \sum_{r=0}^s\mathcal L_r(u)^2\le(s+1)^2,
    \qquad u\in[0,1],
\end{equation}
see for example \citet{brezis2011functional} or \citet{szeg1939orthogonal}.
Define the piecewise polynomial basis functions
\begin{equation}
    \varphi_{kr}(t) \coloneqq \sqrt{J}\,\mathcal L_r(Jt-k+1)\mathbf 1_{I_k}(t), \qquad 1\le k\le J,\quad 0\le r\le s.
\end{equation}
These functions form an orthonormal basis in $L^2(\lambda)$ of the space of functions that are polynomial of degree at most $s$ on each interval. Its projection kernel is
\begin{equation}
     \bm K_J(t,u) \coloneqq \sum_{k=1}^J\sum_{r=0}^s \varphi_{kr}(t)\varphi_{kr}(u).
\end{equation}

For each coordinate $j$, define
\begin{equation}
\label{eq:density-estimators}
    \begin{aligned}
        \widetilde p_j(t)
        &\coloneqq
        \frac{1}{n_0}
        \sum_{i=1}^{n_0}\bm K_J(t,X_{ij}^0),\\
        \widehat p_j(t)
        &\coloneqq \operatorname{clip}\left( \kappa_{\min} , \widetilde p_j(t) , \kappa_{\max} \right),
    \end{aligned}
\end{equation}
which imposes that $\kappa_{\min} \leq \widehat p_j \leq \kappa_{\max}$ (which are known parameters of the class $\mathcal C_{\beta , \gamma}$). Here $X_{ij}^0$ is the $j$th coordinate of $\mathbf X_i^0$.
The clipped estimator is pointwise contractive, \textit{i.e.}
\begin{equation}
\label{eq:clipping-contraction}
    |\widehat p_j(t)-p_j(t)| \le |\widetilde p_j(t)-p_j(t)|.
\end{equation}
Indeed, there are three cases. First, $\widetilde p_j(t) = \widehat p_j(t)$ which implies the equality. Second, we have $\widehat p_j(t) = \kappa_{\min}$ which implies by clipping that $ \widehat p_j(t) \geq \widetilde p_j(t) $ which leads to
\begin{equation}
    \widetilde p_j(t) - p_j(t) \leq \widehat p_j(t) - p_j(t) = \kappa_{\min} - p_j(t) \leq 0.
\end{equation}
By applying the absolute value we obtain the desired inequality. Observe that the third case is symmetric. It therefore suffices to bound the squared bias and variance
of $\widetilde p_j(t)$.

\paragraph{Bias.}
Let $\Pi_J$ denote the orthogonal projection associated with $\bm K_J$, with pointwise representative
\begin{equation}
    (\Pi_Jv)(t) = \int_0^1\bm K_J(t,u)v(u)\,du =\sum\limits_{k}\sum\limits_{r} \langle \varphi_{kr} , v \rangle_{ L^2(\lambda) } \, \varphi_{kr}(t).
\end{equation}
Since each $X_{ij}^0$ has density $p_j$, we can write
\begin{equation}
    \mathbb E_p[\widetilde p_j(t)] = \int_0^1 \bm K_J(t,u)p_j(u)\,du = (\Pi_Jp_j)(t).
\end{equation}
For $t\in I_k$, the function $\bm K_J(t,\cdot)$ vanishes outside $I_k$. By Cauchy--Schwarz, orthonormality, and equation \eqref{eq:legendre-bound}, we have
\begin{align}
    \int_{I_k}|\bm K_J(t,u)|\,du
    &\le J^{-1/2} \left( \int_{I_k}\bm K_J(t,u)^2\,du \right)^{1/2}\\
    &\leq  J^{-1/2} \left( \sum_{r=0}^s\varphi_{kr}(t)^2 \right)^{1/2}\\
    &\le s+1.
\end{align}

Let $a_k \coloneqq  (k-1)/J$, and let
\begin{equation}
    T_{jk}(t) \coloneqq \sum_{r=0}^s \frac{p_j^{(r)}(a_k)}{r!}(t-a_k)^r
\end{equation}
be the Taylor degree $s$ polynomial of $p_j$ at point $a_k$. We will show that
\begin{equation}
\label{eq:taylor-bound}
    \sup_{t\in I_k}|p_j(t)-T_{jk}(t)|
    \le
    \frac{L}{s!}J^{-\gamma}.
\end{equation}
If $s=0$, then $0<\gamma<1$ and $T_{jk}=p_j(a_k)$.
The Hölder condition directly gives
\begin{equation}
    |p_j(t)-T_{jk}(t)| \le L|t-a_k|^\gamma \le LJ^{-\gamma}.
\end{equation}
If $s\ge1$, Taylor's integral formula gives
\begin{equation}
    p_j(t) - T_{jk}(t) = \frac{1}{(s-1)!} \int_{a_k}^t (t-u)^{s-1} \bigl(p_j^{(s)}(u)-p_j^{(s)}(a_k)\bigr) \,du.
\end{equation}
For $u\in[a_k,t]$, the derivative difference is bounded by $LJ^{-\alpha}$. When $\alpha=0$, this follows from the oscillation bound; when $\alpha>0$, it follows from $|u-a_k|\le1/J$ and the Hölder condition.
Consequently,
\begin{align}
    |p_j(t)-T_{jk}(t)|
    &\le
    \frac{LJ^{-\alpha}}{(s-1)!}
    \int_{a_k}^t(t-u)^{s-1}\,du\\
    &=
    \frac{LJ^{-\alpha}}{s!}(t-a_k)^s\\
    &\le
    \frac{L}{s!}J^{-(s+\alpha)} \\
    &= \frac{L}{s!}J^{-\gamma}.
\end{align}
This proves \eqref{eq:taylor-bound} for every $\gamma>0$,
including integer values.

The projection reproduces polynomials of degree at most $s$
on each cell. Therefore, for $t\in I_k$, we have
\begin{equation}
    \Pi_J T_{jk}(t) = T_{jk}(t),
\end{equation}
which yields
\begin{align*}
    |(\Pi_Jp_j)(t) - p_j(t)|
    &= \left| (\Pi_Jp_j)(t) - (\Pi_J T_{jk})(t) + T_{jk}(t) - p_j(t) \right| \\
    &=\left|
        \int_{I_k}\bm K_J(t,u)
            \bigl(p_j(u)-T_{jk}(u)\bigr)\,du
        -\bigl(p_j(t)-T_{jk}(t)\bigr)
    \right|\\
    &\le
    \left(
        1+\int_{I_k}|\bm K_J(t,u)|\,du
    \right)
    \sup_{u\in I_k}|p_j(u)-T_{jk}(u)|\\
    &\le
    \frac{(s+2)L}{s!}J^{-\gamma}.
\end{align*}
Taking the supremum over all cells gives
\begin{equation}
\label{eq:density-bias-bound}
    \sup_{t\in[0,1]}
    \left|
        \mathbb E_p[\widetilde p_j(t)]-p_j(t)
    \right|
    \le
    \frac{(s+2)L}{s!}J^{-\gamma}.
\end{equation}

\paragraph{Variance.}
For each fixed coordinate $j$, the observations $X_{1j}^0,\ldots,X_{n_0j}^0$ are independent and have density $p_j$. We have
\begin{align}
\label{eq:density-variance-bound}
    \operatorname{Var}_p(\widetilde p_j(t)) &= \frac{1}{n_0} \operatorname{Var}_p(\bm K_J(t,X_{1j}^0)) \\
    &\le \frac{1}{n_0} \int_0^1\bm K_J(t,u)^2p_j(u)\,du \\
    &\le \frac{\kappa_{\max}}{n_0} \int_0^1 \bm K_J(t,u)^2\,du \\
    &\leq \frac{\kappa_{\max}}{n_0} \sum_{k=1}^J\sum_{r=0}^s\varphi_{kr}(t)^2 \\
    & \leq\kappa_{\max}(s+1)^2\frac{J}{n_0}.
\end{align}
The last inequality uses the fact that exactly one cell contains $t$, together with \eqref{eq:legendre-bound}.

\paragraph{Risk bound.}
Combining the clipping inequality with the standard bias--variance decomposition yields
\begin{align}
    \mathbb E_{p,f}\!\left[
        |\widehat p_j(t)-p_j(t)|^2
    \right]
    &\le
    \mathbb E_p\!\left[
        |\widetilde p_j(t)-p_j(t)|^2
    \right]\\
    &=
    \left(
        \mathbb E_p[\widetilde p_j(t)]-p_j(t)
    \right)^2
    +
    \operatorname{Var}_p(\widetilde p_j(t))\\
    &\le
    \left(\frac{(s+2)L}{s!}\right)^2J^{-2\gamma}
    +
    \kappa_{\max}(s+1)^2\frac{J}{n_0}.
\end{align}
The right-hand side is independent of $t$, $j$, and
the pair $(p,f)$. Taking the corresponding suprema
proves \eqref{eq:density-explicit-risk}.

Finally, for $J=\lceil n_0^{ \frac{1}{2\gamma + 1} }\rceil$ and
$n_0\ge1$,
\[
    J^{-2\gamma} \le n_0^{-\frac{2\gamma}{2\gamma + 1} }, \qquad
    \frac{J}{n_0} \le 2n_0^{- \frac{2\gamma}{2\gamma + 1} }.
\]
Thus \eqref{eq:density-rate} holds with
\[
    C_{\gamma,L,\kappa_{\max}}
    =
    \left(\frac{(s+2)L}{s!}\right)^2
    +
    2\kappa_{\max}(s+1)^2.
\]
Since $s=\lfloor\gamma\rfloor$, this constant depends
only on $\gamma$, $L$, and $\kappa_{\max}$.
\end{proof}

\subsection{Proof of Theorem \ref{thm:component_recovery}}

Let first $\widehat{\bm\theta}_M$ collect the fitted coefficients $\widehat\theta_m^{(j)}$ in dictionary order, and let $\bm\theta_M$ collect the corresponding true coefficients. With these notations, recall that we have
\begin{equation}
    \widehat{\bm\eta}_M
    =
    \begin{pmatrix}
        \widehat \eta_0 \\
        \widehat{\bm\theta}_M
    \end{pmatrix},
    \qquad
    \bm\eta_M^\star
    =
    \begin{pmatrix}
        \mu_f\\
        \bm\theta_M
    \end{pmatrix},
    \qquad
    \mu_f=\int_{[0,1]^d}f(x)\,dx,
\end{equation}
where $\widehat \eta_0$ is the $0^{\text{th}}$ component of the $K-$dimensional vector $\widehat{\bm\eta}_M$ defined in \eqref{eq:main-thresholded-estimator}. By construction, recall that we have
\begin{equation}
    \widehat f_M = \widehat{\bm\psi}^{\top}\widehat{\bm\eta}_M, 
    \qquad
    t_M = \widehat{\bm\psi}^{\top}\bm\eta_M^\star,
    \qquad
    r = f - t_M.
\end{equation}
For every realization of the full sample, using the Riesz bounds of equation \eqref{eq:riesz-bounds}, we always have
\begin{align}\label{eq:component-coefficient-control}
    \|\widehat{\bm\theta}_M-\bm\theta_M\|_2^2
    &\leq
    \|\widehat{\bm\eta}_M-\bm\eta_M^\star\|_2^2
    \notag\\
    &\leq
    \frac{1}{C_{\min}}
    \|\widehat f_M-t_M\|^2
    \notag\\
    &\leq \frac{2}{C_{\min}}\left(\|\widehat f_M-f\|^2 + \|r\|^2 \right).
\end{align}

For each component $j$, recall that the rectified estimator of $f_j$ is defined as
\begin{equation}
    \bar f_{j,M} \coloneqq \frac{1}{\widehat p_j} \sum\limits_{m = 1}^M \widehat \theta_m^{(j)} \phi_m. 
\end{equation}
and recall the notations
\begin{equation}
    g_j=p_jf_j,
    \qquad
    h_j=\sum_{m=1}^M\theta_m^{(j)}\phi_m,
    \qquad
    \widehat h_j
    =
    \sum_{m=1}^M\widehat\theta_m^{(j)}\phi_m.
\end{equation}
Set $w_j = f_j-h_j/\widehat p_j$ in order to obtain
\begin{equation}
    f_j - \bar f_{j,M} = w_j+\frac{h_j-\widehat h_j}{\widehat p_j}.
\end{equation}
Since $ \kappa_{\min} \leq p_j , \widehat p_j \leq \kappa_{\max} $, orthonormality of the trigonometric basis under Lebesgue measure gives
\begin{equation}\label{eq:component-error-decomposition}
    \sum_{j=1}^d
    \|f_j - \bar f_{j,M}\|^2 \leq 2\sum_{j=1}^d\|w_j\|^2 + 2 \frac{\kappa_{\max}}{\kappa_{\min}^2} \|\widehat{\bm\theta}_M-\bm\theta_M\|_2^2.
\end{equation}
And the upper bound on $\|\widehat{\bm\theta}_M-\bm\theta_M\|_2^2$ yields finally:
\begin{equation}
    \sum_{j=1}^d
    \mathbb{E}_{p,f}\left[\|f_j-\bar f_{j,M}\|^2\right] \leq 2 \kappa_{\max} \sum_{j=1}^d \mathbb{E}_{p}\left[ \|w_j\|^2_{L^2(\lambda)} \right] + 4 \frac{\kappa_{\max}}{\kappa_{\min}^2 C_{\min} } \left( \mathbb{E}_{p,f}\left[ \|\widehat f_M-f\|^2 \right] + \mathbb{E}_{p}\left[\|r\|^2\right] \right).
\end{equation}
Observe that this last expression involves known bounds. Indeed, we showed in subsection \ref{subsec:approximation} that
\begin{equation}
    \mathbb{E}_{p}\left[\| w_j \|_{ L^2(\lambda) }^2\right] \leq 2 R^2 \left( \frac{M^{-2\beta}}{ \kappa_{\min}^2 } + \frac{\rho_{n_0}}{ \kappa_{\min}^4 } \right)  , \qquad \mathbb{E}_{p}\left[ \| r \|^2 \right] \leq 2 \kappa_{\max} d R^2 \left( \frac{M^{-2\beta}}{ \kappa_{\min}^2 } + \frac{\rho_{n_0}}{ \kappa_{\min}^4 } \right)
\end{equation}

By combining the bounds and using that
\begin{equation}
    M \asymp n^{ \frac{1}{2\beta + 1} }, \qquad \rho_{n_0} \lesssim n^{ -\frac{2\gamma}{2\gamma + 1} },
\end{equation}
we obtain the desired rate in $d \, n^{ - \frac{2 \beta}{ 2 \beta + 1 } } + d \, n^{ - \frac{2 \gamma}{ 2 \gamma + 1 } }$.

\section{Proof of Theorem \ref{thm:unknown-density-lower-smooth}}
\label{appendix:lower_smooth}

Let $p_{\text{unif}}\equiv1$ denote the uniform density on $[0,1]^d$. Since $p_{\text{unif}}\in\mathcal P_{\gamma,L}$, the definition of the coupled class gives
\begin{equation}
    \{p_{\text{unif}}\}\times\mathcal F_{\beta,R}(p_{\text{unif}}) \subseteq \mathcal C_{\beta,\gamma}.
\end{equation}
For every estimator $\widehat f$, restricting the supremum to this subclass yields
\begin{equation}
    \sup_{(p,f)\in\mathcal C_{\beta,\gamma}} \mathbb E_{p,f}\!\left[ \|f-\widehat f\|^2 \right] \geq \sup_{f\in\mathcal F_{\beta,R}(p_{\text{unif}})} \mathbb E_{p_{\text{unif}},f}\!\left[ \|f-\widehat f\|_{L^2(\lambda_d)}^2 \right].
\end{equation}
Taking the infimum over estimators, and allowing the estimator in the restricted problem to know $p_{\text{unif}}$, gives
\begin{equation}\label{eq:uniform-design-restriction}
    \mathfrak M(n,d,\mathcal C_{\beta,\gamma})
    \geq
    \mathfrak M(n,d,\mathcal F_{\beta,R}(p_{\text{unif}})).
\end{equation}

Under $p_{\text{unif}}$, all marginal densities equal one, and hence
$\psi_m^{(j)}=\phi_m$.
The class $\mathcal F_{\beta,R}(p_{\text{unif}})$ is therefore the
classical additive Sobolev class \citep{tsybakov2009introduction}, with radius $R$ for each component. Moreover, both Riesz constants equal one. The fixed-density lower bound established in Section~\ref{sec:fixed_density} consequently implies
\begin{equation}
    \mathfrak M(n,d,\mathcal F_{\beta,R}(p_{\text{unif}})) \gtrsim d\,n^{ -\frac{2 \beta}{ 2 \beta + 1 } }.
\end{equation}
where the underlying constant depends only on $\beta,R,\sigma$. Combining this inequality with \eqref{eq:uniform-design-restriction} proves the Theorem \ref{thm:unknown-density-lower-smooth}.

\section{Proof of Theorem \ref{thm:unknown-density-lower-rough}}
\label{appendix:lower_rough}

Recall that the minimax expected prediction risk by
\begin{equation}
    \mathfrak M(n,d,\mathcal C_{\beta,\gamma})
    =\inf_{\widehat f}\sup_{(p,f)\in\mathcal C_{\beta,\gamma}}
      \mathbb E_{p,f}
       \bigl[\|\widehat f-f\|^2\bigr].
\end{equation}
The estimator is one measurable function of the observations, used for every pair in the class. It need not be additive. Both the observation law and the norm in the loss vary with $(p,f)$. The proof is long and technical, that is why we present in the following a \emph{sketch of proof}.

We construct a finite family $\{(p_{\boldsymbol\omega},F_{\boldsymbol\omega})\}_{\boldsymbol\omega} \subset\mathcal C_{\beta,\gamma}$ whose regression functions are sufficiently separated, while neighboring observation laws remain difficult to distinguish. The main challenge is to make this separation depend on $\gamma$ while preserving the Sobolev constraint on the density-weighted components. To this end, we fix a smooth, mean-zero function $g$ with a
nonzero plateau and define
\begin{equation}
    F_{\boldsymbol\omega,j} = \frac{g}{p_{\boldsymbol\omega,j}},
    \qquad
    F_{\boldsymbol\omega} = \sum_{j=1}^d F_{\boldsymbol\omega,j}.
\end{equation}
Thus $p_{\boldsymbol\omega,j}F_{\boldsymbol\omega,j}=g$, so the weighted components satisfy the Sobolev constraint and the centering condition uniformly over the family. All variation is introduced through the design densities. These are constructed using a bounded sinusoidal coupling of localized odd bumps of width $h$ and amplitude proportional to $h^\gamma$. This construction preserves the joint density bounds independently of $d$ and produces $\gamma$-Hölder marginals. The dimension assumption ensures that the marginal perturbations retain the required amplitude.

Let $P_{\boldsymbol\omega}$ denote the design distribution and $Q_{\boldsymbol\omega}$ the law of one observed pair. For adjacent vertices, Lemma~\ref{lem:e2e-gaussian-kl} gives
\begin{equation}
    \operatorname{KL}\!\left(
        Q_{\boldsymbol\omega}^{\otimes n} \|
        Q_{\boldsymbol\omega^{(k)}}^{\otimes n}
    \right) =
    n\operatorname{KL}\!\left(
        P_{\boldsymbol\omega} \|
        P_{\boldsymbol\omega^{(k)}}
    \right)
    +\frac{n}{2\sigma^2}
    \|F_{\boldsymbol\omega}
      -F_{\boldsymbol\omega^{(k)}}\|_{L^2(P_{\boldsymbol\omega})}^2.
\end{equation}
The uniform lower density bound and Lemma~\ref{lem:e2e-kl-bounds} control the design divergence by the squared $L^2(\lambda_d)$ distance between the densities. The construction makes this distance, as well as the neighboring regression distance, of order at most $h^{2\gamma+1}$. Consequently, choosing $h\asymp n^{-\frac{1}{2\gamma + 1}}$ and a sufficiently small fixed bump amplitude constant bounds every neighboring sample divergence by $1/8$.

Finally, the odd bumps and the plateau of $g$ yield an exact affine representation of the regression family with $N\asymp d/h$ mutually orthogonal directions in $L^2(\lambda_d)$, each having squared norm of order
$h^{2\gamma+1}$. Since $p_{\boldsymbol\omega}\geq\kappa_{\min}$, the prediction loss uniformly dominates the squared $L^2(\lambda_d)$ loss. Applying Lemma~\ref{lem:e2e-assouad} therefore gives
\begin{equation}
    \mathfrak M(n,d,\mathcal C_{\beta,\gamma})
    \gtrsim
    \frac{d}{h}\,h^{2\gamma+1}
    \asymp
    d\,n^{-\frac{2\gamma}{2\gamma+1}},
\end{equation}
under the stated dimension assumption. This establishes the density-driven lower bound in the rough regime $\gamma<\beta$.

\begin{remark}[An alternative approach via Fano's inequality]
A standard approach to minimax lower bounds is to combine a Varshamov--Gilbert packing with Fano's inequality (see for example \citet{tsybakov2009introduction}). We instead use Assouad's lemma, which directly exploits the hypercube structure of our construction.

The same family could also support a proof based on Fano's inequality. Writing $N$ for the number of sign coordinates, one would first select an exponentially large subset $\Omega\subset\{-1,+1\}^N$ whose distinct elements have Hamming distance of order $N$. Orthogonality would then give a squared $L^2(\lambda_d)$ separation of order $Nv_h^2$ between the corresponding regression functions. The additional step would be to control the sample KL divergence between arbitrary selected alternatives, rather than only between neighboring vertices. Specifically, one could establish
\begin{equation}
    \operatorname{KL}\!\left(
        Q_{\boldsymbol\omega}^{\otimes n} \|
        Q_{\boldsymbol\omega'}^{\otimes n}
    \right)
    \leq
    C a^2 n h^{2\gamma+1}
    \rho(\boldsymbol\omega,\boldsymbol\omega'),
\end{equation}
where $\rho$ is the Hamming distance and $a$ is the bump-amplitude constant. With $nh^{2\gamma+1}\lesssim1$ and $a$ sufficiently small, these divergences would be bounded by a small multiple of $\log|\Omega|\asymp N$, allowing Fano's inequality to yield the same lower rate. Assouad's lemma avoids this packing step and requires only the neighboring-divergence bounds proved below.
\end{remark}

\subsection{Smooth functions construction}

In this subsection, we show how to construct functions with $\gamma-$Hölder smoothness.

\begin{lemma}
\label{lem:e2e-smooth-functions}
There exist a nonzero $b\in C^\infty((0,1))$ and a smooth periodic real function $w$ such that
\begin{equation}
    b(1-t)=-b(t),\qquad \|b\|_\infty\le1,
    \qquad B_2:=\int_0^1b(t)^2\,dt>0,
\end{equation}
and
\begin{equation}
    \int_0^1w(x)\,dx=0,\qquad
    w(x)=1\quad\text{on }[1/4,1/2].
\end{equation}
For every $\beta>0$, $\sum_{m\ge1}m^{2\beta} |\langle w,\phi_m\rangle|^2<\infty$. Consequently, for every $R>0$, there exists $\tau>0$ such that $g=\tau w$ belongs to the Sobolev ellipsoid of radius $R$.
\end{lemma}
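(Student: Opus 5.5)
The plan is to build both functions explicitly from a standard $C^\infty$ bump, and then to obtain the Sobolev statement from the decay of Fourier coefficients of smooth periodic functions.

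\emph{Construction of $b$.} Take the classical bump $\chi(t)=\exp(-1/(1-t^2))$ for $|t|<1$ and $\chi=0$ otherwise, which lies in $C^\infty(\mathbb R)$. Set
\[
\tilde b(t)=\chi\bigl(4(t-1/4)\bigr)-\chi\bigl(4(t-3/4)\bigr), \qquad t\in[0,1].
\]
The two pieces are supported in $[0,1/2]$ and $[1/2,1]$ respectively, and they are exchanged by $t\mapsto 1-t$ because $\chi$ is even. Hence $\tilde b(1-t)=-\tilde b(t)$. The function is smooth, nonzero, and vanishes with all derivatives at $0$, $1/2$ and $1$. This flatness is useful later, when scaled copies are glued into Hölder bumps. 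Normalize by $b=\tilde b/\|\tilde b\|_\infty$ to get $\|b\|_\infty\le 1$. Then $B_2>0$ because $b$ is continuous and not identically zero.

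\emph{Construction of $w$.} Fix a $C^\infty$ plateau function $\eta$, obtained by the standard integrated-bump construction, with $\eta=1$ on $[1/4,1/2]$, $0\le \eta\le 1$, and support in $[1/8,5/8]$. Fix a second bump $\nu\ge 0$, not identically zero, with support in $[3/4,7/8]$, disjoint from the support of $\eta$. Put
\[
w=\eta-\frac{\int_0^1\eta}{\int_0^1\nu}\,\nu.
\]
Then $\int_0^1 w=0$. Also $w=\eta=1$ on $[1/4,1/2]$, since $\nu$ vanishes there. Because $w$ and all its derivatives vanish near $0$ and $1$, its $1$-periodic extension is $C^\infty$.

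\emph{Sobolev membership.} For a $C^\infty$ periodic function, integrating by parts $k$ times gives $|\langle w,\phi_m\rangle|\le C_k\, m^{-k}\|w^{(k)}\|_{L^2(\lambda)}$ for every $k$, using the pairing of indices in \eqref{eq:fourier} (frequency $\lceil m/2\rceil\asymp m$). Choosing $k>\beta+1/2$ makes $\sum_{m\ge1} m^{2\beta}|\langle w,\phi_m\rangle|^2$ finite. The coefficients $\langle w,\phi_m\rangle$ for $m\ge1$ are exactly the expansion coefficients of $w$, because $w$ has mean zero. Finally take $\tau=R\bigl(\sum_m m^{2\beta}\langle w,\phi_m\rangle^2\bigr)^{-1/2}$, which is positive and finite since the sum is nonzero ($w\ne0$ has mean zero, so some coefficient is nonzero). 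Then $g=\tau w$ lies in $\Theta(\beta,R)$.

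The only step requiring care is ensuring periodic smoothness of $w$, i.e., vanishing of derivatives at the endpoints. This is handled by using compactly supported bumps in the interior. Everything else is routine.
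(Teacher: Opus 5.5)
Your proposal is correct and follows essentially the same route as the paper: an antisymmetric difference of two disjointly supported smooth bumps normalized in sup norm for $b$, a plateau cutoff minus a compensating interior bump for the mean-zero $w$, Fourier-coefficient decay via repeated integration by parts with $k>\beta+1/2$, and the rescaling $\tau=R/\sqrt{S_\beta}$. The only differences are cosmetic. You use the explicit even bump $\exp(-1/(1-t^2))$ to get antisymmetry, where the paper writes $\eta(t)-\eta(1-t)$, and you normalize the compensating bump by a ratio rather than to unit integral.
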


\begin{proof}
Choose a nonzero nonnegative function $\eta\in C^\infty((1/8,3/8))$, extended by zero to $\mathbb R$, and define
\begin{equation}
    b(t) \coloneqq \frac{\eta(t)-\eta(1-t)}{\|\eta\|_\infty}.
\end{equation}
The two terms have disjoint supports. Hence $b\in C^\infty((0,1))$ is nonzero, $b(1-t)=-b(t)$, and $\|b\|_\infty\le 1$. In particular, $B_2>0$.

Choose a smooth cutoff $\chi\in C^\infty((1/8,5/8))$ satisfying $\chi=1$ on $[1/4,1/2]$, and a nonnegative function $\rho\in C^\infty((3/4,1))$ satisfying $\int_0^1\rho(x)\,dx=1$. Such functions exist by the standard smooth cutoff construction. Define on $[0,1]$
\begin{equation}
     w(x) = \chi(x) - \left(\int_0^1\chi(t)\,dt\right)\rho(x),
\end{equation}
and extend $w$ periodically. Since $w$ vanishes in neighborhoods of both endpoints, its periodic extension is smooth. Moreover, $\int_0^1w(x)\,dx=0$ and $w=1$ on $[1/4,1/2]$. For every integer $\ell\ge 1$, repeated integration by part in the trigonometric Fourier coefficients gives
\begin{equation}
    |\langle w,\phi_m\rangle| \le C_\ell m^{-\ell}, \qquad m\ge 1,
\end{equation}
where the boundary terms vanish by periodicity.
Choosing $\ell>\beta+1/2$ yields
\begin{equation}
     S_\beta \coloneqq \sum_{m\ge 1} m^{2\beta}|\langle w,\phi_m\rangle|^2 <\infty.
\end{equation}
Since $w$ is nonzero and has mean zero, $S_\beta>0$. Taking $\tau=R/\sqrt{S_\beta}$ proves the last assertion.
\end{proof}

From now, we fix a univariate centered function $g$, which belongs to the Sobolev ellipsoid of radius $R$ and smoothness $\beta$ and is constant to a real value $\tau$ on the compact $[ 1/4 , 1/2 ]$.

\begin{lemma}
\label{lem:e2e-holder}
Let $0 < h \le 1$, $ 0 < a \le 1$, and let define for all integer $k \geq 1$
\begin{equation}
    b_k(x) \coloneqq b\left( \frac{x - \nu_k}{h}\right),
\end{equation}
which have disjoint cells of length $h$, with $b$ from Lemma~\ref{lem:e2e-smooth-functions}, extended by zero. For arbitrary signs $\omega_k$, let define the function $u_{\bm\omega}$ as follows
\begin{equation}
    u_{\bm\omega} \coloneqq a h^\gamma\sum_k\omega_k b_k.
\end{equation}
For each integer $\ell\ge0$,
\begin{equation}
    \| (\sin u_{\bm\omega})^{(\ell)}\|_\infty
       \le C_\ell a h^{\gamma-\ell}.
\end{equation}
We also have
\begin{equation}
    \sup_{x\ne y} \frac{\left| (\sin u_{\bm\omega})^{(\lfloor \gamma \rfloor)}(x) - (\sin u_{\bm\omega})^{(\lfloor \gamma \rfloor)}(y) \right|}{|x-y|^{\gamma - \lfloor\gamma\rfloor}} \le C_\gamma a.
\end{equation}
These constants depend only on $b$, the indicated derivative order, and $\gamma$, and not on the number of cells or their signs.
\end{lemma}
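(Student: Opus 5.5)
The plan is to prove both bounds by exploiting the disjointness of the bump supports. The key observation is that on each cell, $u_{\bm\omega}$ coincides with a single rescaled bump $\pm a h^\gamma b((x-\nu_k)/h)$, and it vanishes identically outside the union of cells. Since $b\in C^\infty$ has compact support inside $(1/8,7/8)$, every derivative of $b$ vanishes near the cell boundaries. Hence $u_{\bm\omega}$ is $C^\infty$ on $\mathbb R$, and the chain rule gives
\[
    \|u_{\bm\omega}^{(r)}\|_\infty \le a h^{\gamma-r}\|b^{(r)}\|_\infty, \qquad r\ge0.
\]
This bound is uniform in the number of cells and in the signs, because at any point at most one bump is active.

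For the derivative bound on $\sin u_{\bm\omega}$, I will apply the Faà di Bruno formula. For $\ell\ge1$, $(\sin u)^{(\ell)}$ is a finite sum over partitions of $\{1,\dots,\ell\}$ into blocks of sizes $r_1,\dots,r_q$. Each term is $\sin^{(q)}(u)\prod_i u^{(r_i)}$ with $\sum r_i=\ell$, and is therefore bounded by
\[
    \prod_i a h^{\gamma-r_i}\|b^{(r_i)}\|_\infty = a^q h^{q\gamma-\ell}\prod_i\|b^{(r_i)}\|_\infty.
\]
Since $a\le1$ and $h\le1$ with $\gamma>0$, we have $a^q\le a$ and $h^{q\gamma}\le h^\gamma$, so each term is at most $C\,a h^{\gamma-\ell}$. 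The case $\ell=0$ follows directly from $|\sin u|\le|u|\le a h^\gamma$. The resulting constant $C_\ell$ depends only on $\ell$ and $\max_{r\le\ell}\|b^{(r)}\|_\infty$.

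For the Hölder bound, set $s=\lfloor\gamma\rfloor$, $\alpha=\gamma-s$ and $F=(\sin u_{\bm\omega})^{(s)}$, and split into two cases according to the distance $|x-y|$.
\begin{itemize}
\item If $|x-y|\le h$, the mean value theorem together with the bound for $\ell=s+1$ gives
\[
    |F(x)-F(y)|\le C_{s+1}a h^{\gamma-s-1}|x-y| = C_{s+1}a h^{\alpha-1}|x-y|.
\]
Since $|x-y|^{1-\alpha}\le h^{1-\alpha}$, this is at most $C_{s+1}a|x-y|^\alpha$.
\item If $|x-y|>h$, the bound for $\ell=s$ gives
\[
    |F(x)-F(y)|\le 2C_s a h^{\alpha}\le 2C_s a|x-y|^\alpha.
\]
\end{itemize}
Taking $C_\gamma=\max(C_{s+1},2C_s)$ finishes the proof.

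The case $\alpha=0$ (integer $\gamma$) needs separate attention. There the Hölder condition only bounds the oscillation of $F$, and the second case directly gives at most $2C_s a$. I expect the only delicate point to be checking that the global smoothness and the sup bounds are truly cell-independent, which relies on the disjoint supports and on $b$ vanishing to infinite order at its cell boundary.
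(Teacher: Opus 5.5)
Your proof is correct. The H\"older half matches the paper's argument exactly: you bound $|F(x)-F(y)|$ by $C a\min\{h^{\alpha},h^{\alpha-1}|x-y|\}$, split at $|x-y|=h$, and handle integer $\gamma$ through the oscillation bound $2C_s a$.

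The difference is in how you get the derivative bound $C_\ell a h^{\gamma-\ell}$ with a factor linear in $a$.

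\begin{itemize}
\item The paper works at unit scale. With $z=\pm a h^\gamma$, it writes $\sin(zb(t))=\int_0^z b(t)\cos(vb(t))\,dv$ and differentiates under the integral in $t$. This gives $C_\ell|z|$, where the linear factor comes from the length of the $v$-interval together with uniform bounds for $|v|\le 1$. Only then does it rescale to pick up $h^{-\ell}$.
\item You rescale first. Disjointness gives $\|u_{\bm\omega}^{(r)}\|_\infty\le a h^{\gamma-r}\|b^{(r)}\|_\infty$, and you then apply Fa\`a di Bruno's formula with $(ah^\gamma)^q\le ah^\gamma$ for $q\ge1$.
\end{itemize}

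Your route is more explicit. It yields a concrete constant, namely the Bell number $B_\ell$ times products of $\|b^{(r)}\|_\infty$ for $r\le\ell$, and it shows exactly where $a\le1$ and $h\le1$ are used. The paper's integral device is shorter, but it hides the same chain-rule combinatorics inside its claim that all $t$-derivatives of $b(t)\cos(vb(t))$ are bounded uniformly in $|v|\le1$.
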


\begin{proof}
On a single cell, write $z=a h^\gamma\omega_k$. For $|z|\le1$,
\begin{equation}
    \sin(zb(t))=\int_0^z b(t)\cos(vb(t))\,dv.
\end{equation}
Every derivative with respect to $t$ of the integrand is uniformly
bounded for $|v|\le1$, because $b$ is smooth with compact support.
Differentiation under this finite integral therefore bounds its
$\ell$th derivative by $C_\ell|z|$.
Rescaling $t=(x-\nu_k)/h$ multiplies this bound by $h^{-\ell}$.
All derivatives vanish at cell boundaries, and supports are
disjoint, so the same supremum bound holds globally.
For noninteger $\gamma$, put $r=\lfloor\gamma\rfloor$ and
$\delta=\gamma-r\in(0,1)$. The derivative of order $r$ has both supremum bound
$C a h^\delta$ and Lipschitz constant $C a h^{\delta-1}$.
Consequently,
\begin{equation}
    |(\sin u_{\bm\omega})^{(r)}(x)-(\sin u_{\bm\omega})^{(r)}(y)|
    \le C a\min\{h^\delta,h^{\delta-1}|x-y|\}
    \le C a|x-y|^\delta.
\end{equation}
The last inequality follows separately from $|x-y|\ge h$ and
$|x-y|\le h$. This proves the bound. At integer
$\gamma$, the already proved derivative estimate of order
$\gamma$ is $C a$, so its oscillation is at most $2Ca$.
\end{proof}

\subsection{Construction of alternatives}

In this subsection, we explicitly construct a finite subfamily $\{(p_{\boldsymbol\omega},F_{\boldsymbol\omega})\}_{\boldsymbol\omega} \subset\mathcal C_{\beta,\gamma}$ built on the previous subsection.

\begin{lemma}[Sine coupling and its exact marginals]
\label{lem:e2e-sine-marginals}
Let $u_1,\ldots,u_s$ be real functions on $[0,1]$ such that $\int\sin u_j=0$ and $\int\cos u_j=q$ for every $j$. For $s\le d$ and $0<\varepsilon<1$, set
\begin{equation}
    p(\mathbf x)=1+\varepsilon
        \sin\left(\sum_{j=1}^s u_j(x_j)\right).
\end{equation}
Then $p$ is a density with $1-\varepsilon\le p\le1+\varepsilon$,
and its marginals are
\begin{equation}
    \begin{cases}
        p_j = 1 + \varepsilon q^{s-1}\sin u_j, &\qquad (j\le s), \\
        p_j = 1, &\qquad (j>s).
    \end{cases}
\end{equation}
Furthermore, if we have $1\geq q \ge1-Aa^2h^{2\gamma}$, $Aa^2h^{2\gamma}\le1/2$, and $s h^{2\gamma}\le1$, then
\[
    \varepsilon e^{-2Aa^2}
    \le\varepsilon q^{s-1}\le\varepsilon.
\]
\end{lemma}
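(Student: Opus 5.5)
The plan is to obtain the three claims by direct computation, using two tools: the identity $\sin(\alpha+\beta)=\sin\alpha\cos\beta+\cos\alpha\sin\beta$, iterated to split off one coordinate at a time, and Fubini on the product Lebesgue measure $\lambda_d$. Throughout, write $S_j\coloneqq\int_0^1\sin u_j=0$ and $C_j\coloneqq\int_0^1\cos u_j=q$.

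\emph{Density and bounds.} Since $|\sin|\le1$, we get $1-\varepsilon\le p\le 1+\varepsilon$ immediately, and $p>0$ because $\varepsilon<1$. Normalization reduces to showing
\[
I_s\coloneqq\int_{[0,1]^s}\sin\Bigl(\sum_{j\le s}u_j\Bigr)\,d\lambda_s=0 .
\]
Define also $J_s\coloneqq\int\cos(\sum_{j\le s}u_j)\,d\lambda_s$. Expanding with the addition formulas in the last coordinate and applying Fubini gives the recursion
\[
I_s = I_{s-1}C_s + J_{s-1}S_s,\qquad J_s=J_{s-1}C_s-I_{s-1}S_s .
\]
With $S_j=0$ and $C_j=q$, induction yields $I_s=0$ and $J_s=q^s$, starting from $I_1=0$ and $J_1=q$. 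Coordinates $j>s$ do not enter $p$, so integrating them out contributes a factor $1$.

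\emph{Marginals.} For $j>s$, integrate out every coordinate other than $x_j$. The integrand does not depend on $x_j$, and the resulting integral is $1+\varepsilon I_s=1$. For $j\le s$, set $T=\sum_{i\le s,\,i\ne j}u_i(x_i)$ and expand
\[
\sin(u_j(x_j)+T)=\sin u_j(x_j)\cos T+\cos u_j(x_j)\sin T .
\]
Integrating over the other $s-1$ coordinates and reusing the computation above with $s-1$ functions gives $\int\cos T=q^{s-1}$ and $\int\sin T=0$. Hence $p_j=1+\varepsilon q^{s-1}\sin u_j$; when $s=1$, read $q^0=1$.

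\emph{Lower bound on $q^{s-1}$.} The upper bound $\varepsilon q^{s-1}\le\varepsilon$ follows from $0\le q\le1$, and $q>0$ comes from the hypothesis $q\ge 1-Aa^2h^{2\gamma}\ge1/2$. For the lower bound, use $\log(1-x)\ge -2x$ on $[0,1/2]$ with $x=Aa^2h^{2\gamma}\le1/2$:
\[
q^{s-1}\ge \exp\bigl(-2(s-1)Aa^2h^{2\gamma}\bigr)\ge \exp(-2Aa^2),
\]
where the last step uses $(s-1)h^{2\gamma}\le sh^{2\gamma}\le1$.

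The only point needing care is the inductive trigonometric bookkeeping via Fubini. I expect it to go through cleanly because both $\sin$ and $\cos$ integrals factor at each step, so there is no real obstacle.
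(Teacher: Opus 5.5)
Your proof is correct and is essentially the paper's argument. The paper replaces your sine/cosine recursion with a single Fubini factorization, $\int\exp(\mathrm{i}\sum_{j\le s}u_j)=\prod_{j\le s}\int_0^1 e^{\mathrm{i}u_j}=q^s$, and reads off imaginary parts for both the normalization and the marginals, so no induction is needed; the final bound uses the same inequality $\log(1-z)\ge-2z$, applied to $z=1-q$ rather than to $z=Aa^2h^{2\gamma}$, which makes no difference.
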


\begin{proof}
First, Fubini's theorem and the identity
$\exp(\mathrm{i}\sum_j u_j)=\prod_j\exp(\mathrm{i}u_j)$ give
\begin{equation}
    \int_{[0,1]^d}\exp\left(\mathrm{i}\sum_{j=1}^su_j(x_j)\right) \,d\mathbf x=q^s,
\end{equation}
which is real. Its imaginary part is zero, so $\int p=1$.

Second, the pointwise bounds follow from $|\sin|\le1$.

Third, integrating over all coordinates except an active coordinate $j$ gives imaginary part $q^{s-1}\sin u_j(x)$; integration over all active coordinates gives zero for an inactive marginal.

Finally, $\log(1-z)\ge-2z$ on $[0,1/2]$: its difference from
$-2z$ has derivative $(1-2z)/(1-z)\ge0$ and value zero at zero.
Apply this inequality to $z = 1-q$ to obtain $q^{s-1}\ge\exp(-2Aa^2(s-1)h^{2\gamma})\ge e^{-2Aa^2}$.
\end{proof}

In what follows, we fix $b,w,\tau,g=\tau w$ from Lemma~\ref{lem:e2e-smooth-functions}. In particular, $g=\tau$ on $[1/4,1/2]$, $\int g=0$, and $g$ is in the Sobolev ellipsoid of radius $R$. Choose once and for all
\begin{equation}
    0 < \varepsilon < \min\{1/2,1-\kappa_{\min},\kappa_{\max}-1\}.
\end{equation}
For integers $n,d\ge1$, set
\begin{equation}
    M = \lceil n^{1/(2\gamma+1)}\rceil,
    \qquad h=\frac1{4M},
    \qquad s=\min\{d,\lfloor h^{-2\gamma}\rfloor\}.
\end{equation}
Here $ s \ge 1 $, $Mh = 1/4$, and $ sh^{2\gamma} \le 1 $. 
For any integer $k \geq 1$, let define 
\begin{equation}
    \nu_k \coloneqq \frac{1}{4} + (k - 1)h,
\end{equation}
and set the function
\begin{equation}
    b_k( \cdot ) \coloneqq b \left( \frac{\cdot-\nu_k}h{}\right), \qquad 1\le k\le M,
\end{equation}
with zero extensions. All cells lie inside the plateau interval. Observe that if we define the bump supports $\mathcal B_k$ as follows
\begin{equation}
    \mathcal B_k \coloneqq \left[ \frac{1}{4} \left( \frac{k - 1}{M} + 1 \right) , \frac{1}{4} \left( \frac{k}{M} + 1 \right) \right[,
\end{equation}
we have $ \frac{x - \nu_k}{h} \in [0,1[ \iff x \in \mathcal B_k $ which implies that for any $k$, the functions $b_k$ reproduce the same function $b$ on disjoint intervals $\mathcal B_k$ and for any $ x \notin \mathcal B_k \implies b_k(x) = 0  $.

For a constant $a\in(0,1]$ to be selected below, and for a binary matrix $\bm\omega\in\{-1,1\}^{s\times M}$, define
\begin{equation}\label{eq:e2e-density-family}
    u_{\bm\omega,j}(x)=a h^\gamma\sum_{k=1}^M\omega_{jk}b_k(x),
    \qquad
    p_{\bm\omega}(\mathbf x) \coloneqq 1+\varepsilon\sin\left(\sum_{j=1}^s
                                      u_{{\bm\omega},j}(x_j)\right).
\end{equation}

\begin{lemma}
    We have the two following key results for all $j$:
\begin{eqnarray}
    \int_{[0,1]} \sin \left(u_{{\bm\omega},j}(x)\right) \, dx &=& 0, \\
    \int_0^1\cos \left(u_{{\bm\omega},j}(x)\right) \, dx  &=& \underbrace{1-\frac14\int_0^1[1-\cos(a h^\gamma b(t))]\,dt}_{\coloneqq q_h}.
\end{eqnarray}
\end{lemma}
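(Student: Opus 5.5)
The plan is a direct computation that splits $[0,1]$ into the bump cells $\mathcal B_1,\dots,\mathcal B_M$ and their complement. Off $\bigcup_k\mathcal B_k$ every $b_k$ vanishes, so $u_{\bm\omega,j}=0$ there. Consequently $\sin u_{\bm\omega,j}=0$ and $\cos u_{\bm\omega,j}=1$ on the complement. Since the cells are disjoint, on each $\mathcal B_k$ only $b_k$ is nonzero, which gives
\begin{equation*}
    u_{\bm\omega,j}(x)=a h^\gamma\omega_{jk}\,b\!\left(\frac{x-\nu_k}{h}\right), \qquad x\in\mathcal B_k.
\end{equation*}
Both integrals therefore reduce to sums over $k$ of integrals over single cells. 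In each cell we substitute $t=(x-\nu_k)/h$, so that $dx=h\,dt$ and $t$ ranges over $[0,1[$.

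For the sine integral, the cell $k$ contributes $h\int_0^1\sin(a h^\gamma\omega_{jk}b(t))\,dt$. The antisymmetry $b(1-t)=-b(t)$ from Lemma~\ref{lem:e2e-smooth-functions}, together with the oddness of $\sin$, makes the integrand odd under the change of variables $t\mapsto 1-t$. This change of variables preserves Lebesgue measure on $[0,1]$, so each contribution vanishes. The sign $\omega_{jk}$ plays no role. Summing over $k$, and recalling that the complement contributes zero, gives the first identity.

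For the cosine integral, write $\cos u=1-(1-\cos u)$ to get
\begin{equation*}
    \int_0^1\cos u_{\bm\omega,j}\,dx = 1-\sum_{k=1}^M\int_{\mathcal B_k}\bigl(1-\cos u_{\bm\omega,j}(x)\bigr)\,dx.
\end{equation*}
Because $\cos$ is even, the sign $\omega_{jk}$ disappears, and every cell contributes the same amount, $h\int_0^1[1-\cos(a h^\gamma b(t))]\,dt$. Multiplying by $M$ and using $Mh=1/4$ yields $q_h$.

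The only point needing care is the bookkeeping of the supports. One must check that $\mathcal B_k$ is exactly the set where $(x-\nu_k)/h\in[0,1[$, which the paper has already verified, and that the cells lie in $[1/4,1/2]\subset[0,1]$, so the change of variables covers the whole $t$-interval. The half-open endpoints form a null set and do not affect the integrals. I expect no real obstacle beyond this verification.
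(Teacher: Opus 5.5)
Your proof is correct and follows essentially the same route as the paper: both split $[0,1]$ into the disjoint cells $\mathcal B_k$ and their complement, make the sine integral vanish on each cell by the antisymmetry $b(1-t)=-b(t)$ after rescaling, and obtain $q_h$ from the evenness of $\cos$ together with $Mh=1/4$. The only cosmetic difference is that you integrate $1-\cos u_{\bm\omega,j}$ over the cells, while the paper adds the complement's contribution $3/4$ to $\sum_k\int_{\mathcal B_k}\cos u_{\bm\omega,j}$; these are the same computation.
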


\begin{proof}
    Let $\bm\omega$ be a vector of signs and define
    \begin{equation}
        u_{\bm\omega} \coloneqq a h^\gamma\sum_{k=1}^M\omega_{k}b_k. 
    \end{equation}

    Observe that $u_{\bm\omega}$ can be written as follows, using the bumps $\mathcal B_k$:
    \begin{equation}
        u_{\bm\omega}(x) = ah^\gamma \sum\limits_{k=1}^M \omega_k b_k(x) \mathbf{1}_{x \in \mathcal B_k}.
    \end{equation}

    Using that the $\mathcal B_k$ are disjoint and $\sin 0 = 0$, we have
    \begin{equation}
        \sin ( u_{\bm\omega}(x) ) = \sum\limits_{k=1}^M \omega_k \sin\left( a h^\gamma b_k(x) \right) \mathbf{1}_{x \in \mathcal B_k}.
    \end{equation}

    Observe that the function $b$ is defined on $[0,1]$, centered at $1/2$ and odd. Indeed, using $b(1 - t) = -b(t)$ we have $b(1/2) = 0$ and we also have $ b(x + 1/2) = b( 1 - (1/2 - x) ) = -b(1/2 - x) $. By translation, for each $k$, the function $b_k$ is odd on the bump $\mathcal B_k$ and centered on its middle, which gives the first equality.

    Moreover, we have
    \begin{equation}
        \cos ( u_{\bm\omega}(x) ) = \sum\limits_{k=1}^M \cos\left( a h^\gamma b_k(x) \right) \mathbf{1}_{x \in \mathcal B_k} + \prod\limits_{k=1}^M \mathbf{1}_{ x \notin \mathcal B_k }.
    \end{equation}
    In plain word, if $x \in \mathcal B_k$, the cosine equals $ \cos\left( a h^\gamma b_k(x) \right) $, else it equals $1$. Observe that 
    \begin{equation}
        \mathcal B_1 \cup \dots \cup \mathcal B_M = \left[ \frac{1}{4} , \frac{1}{2} \right[,
    \end{equation}
    so the volume of the set where the function equals $1$ is $3/4$, which yields
    \begin{equation}
        \int_{[0,1]} \cos ( u_\omega(x) ) \, dx = \frac{3}{4} + \sum\limits_{k=1}^M \int_{\mathcal B_k} \cos\left( a h^\gamma b_k(x) \right) \, dx.
    \end{equation}
    Computing the integral $ \int_{\mathcal B_k} \cos\left( a h^\gamma b_k(x) \right) \, dx $ is explicit by translating to $[0,1]$, which leads to the result.
    
\end{proof}

The basic inequality $1-\cos z\le z^2/2$ easily gives
\begin{equation}
    0\le1-q_h\le\frac{a^2B_2}{8}h^{2\gamma}\le\frac18.
\end{equation}
Applying Lemma~\ref{lem:e2e-sine-marginals}, we obtain valid
densities with the required strict joint bounds and exact marginals
\begin{equation}\label{eq:e2e-marginal-family}
    p_{\bm\omega,j}(x)=
    \begin{cases}
       1+t_h\sin u_{\bm\omega,j}(x),&j\le s,\\
       1,&j>s,
    \end{cases}
    \qquad
    t_h \coloneqq \varepsilon q_h^{s-1},
    \qquad
    \varepsilon e^{-a^2B_2/4}\le t_h\le\varepsilon.
\end{equation}
Lemma~\ref{lem:e2e-holder} gives the following uniform Hölder bound
\begin{equation}
    \forall x,y, \quad \left| p_{\bm\omega , j}^{(\lfloor \gamma \rfloor)}(x) - p_{\bm\omega , j}^{(\lfloor \gamma \rfloor)}(y) \right| \le C_\gamma \varepsilon a |x-y|^{\gamma - \lfloor\gamma\rfloor},
\end{equation}
where the constant $C_{\gamma}$ is fixed from now. Select $a>0$ small enough that this is at most $L$; the further restriction needed for testing will be imposed below. Thus every marginal belongs to $\mathcal H^\gamma(L)$.

Define the regression components by
\begin{equation}\label{eq:e2e-regression-family}
    f_{\bm\omega,j}=g/p_{\bm\omega,j}\quad(j\le s),
    \qquad f_{\bm\omega,j}=0\quad(j>s),
    \qquad f_{\bm \omega} = \sum_{j=1}^d f_{\bm\omega,j}.
\end{equation}
For each active coordinate $j$, we have $p_{\bm\omega,j}f_{\bm\omega,j} = g$ and
\begin{equation}
    \int_0^1 f_{\bm\omega,j}(x)p_{\bm\omega,j}(x)\,dx = \int_0^1g(x)\,dx=0.
\end{equation}
Inactive weighted components are zero. All weighted components therefore satisfy the Sobolev constraint, and every $(p_{\bm\omega},f_{\bm\omega})$ lies in $\mathcal C_{\beta,\gamma}$. The weighted signals remain fixed across the entire subfamily.

\subsection{Orthogonal separation of the regression functions}

Let define for all $x \in [0,1]$ and $k \geq 1$ the following functions
\begin{equation}
    z_k(x) \coloneqq \sin(a h^\gamma b_k(x)),\qquad
    H_k(x) \coloneqq \frac{\tau t_hz_k(x)}{1-t_h^2z_k(x)^2}.
\end{equation}
Only one bump can be nonzero at any $x$. On its support $g=\tau$, and identity $a^2 - b^2 = (a - b)(a + b)$ gives
\begin{equation}
    \frac1{1+t_h\omega_{jk}z_k} = \frac1{1-t_h^2z_k^2} -\frac{\omega_{jk}t_hz_k}{1-t_h^2z_k^2},
\end{equation}
which implies for all $\mathbf x \in [0,1]^d$
\begin{equation}\label{eq:e2e-affine-family}
    f_\omega(\mathbf x) = F_0(\mathbf x)-\sum_{j=1}^s\sum_{k=1}^M
         \omega_{jk}H_k(x_j),
\end{equation}
where the sign-independent function is explicitly
\begin{equation}
    F_0(\mathbf x) \coloneqq \sum_{j=1}^s\left[ g(x_j)+\tau\sum_{k=1}^M \frac{t_h^2z_k(x_j)^2}{1-t_h^2z_k(x_j)^2} \right].
\end{equation}
The map $z\mapsto \tau t_h z/(1-t_h^2z^2)$ is odd. Therefore, each $H_k$ is antisymmetric about the midpoint of its cell and satisfies
\begin{equation}
    \int_0^1 H_k(x)\,dx=0.
\end{equation}
The lifted functions $\mathbf x\mapsto H_k(x_j)$, indexed by $1\le j\le s$ and $1\le k\le M$, are pairwise orthogonal in $L^2(\lambda_d)$. Indeed, for a fixed coordinate $j$, distinct functions $H_k(x_j)$ have disjoint supports.
For distinct coordinates $j\ne j'$, Fubini's theorem gives
\begin{equation}
    \int_{[0,1]^d} H_k(x_j)H_{k'}(x_{j'})\,d\mathbf x = \left(\int_0^1 H_k(t)\,dt\right) \left(\int_0^1 H_{k'}(t)\,dt\right) = 0.
\end{equation}
Their squared norms are independent of $k$ and are equal to
\begin{equation}
    v_h^2 \coloneqq \tau^2t_h^2h\int_0^1
       \frac{\sin^2(a h^\gamma b(t))}
            {(1-t_h^2\sin^2(a h^\gamma b(t)))^2}\,dt.
\end{equation}
Since $|a h^\gamma b|\le1$, the inequalities $|\sin z|\ge |z|/2$ for $|z|\le1$ and $|\sin z|\le|z|$ give
\begin{equation}\label{eq:e2e-separation}
    \frac{\tau^2\varepsilon^2e^{-B_2/2}a^2B_2}{4}
           h^{2\gamma+1}
    \le v_h^2
    \le\frac{\tau^2\varepsilon^2a^2B_2}{(1-\varepsilon^2)^2}
           h^{2\gamma+1}.
\end{equation}
For clarity, the lower sine inequality follows by integrating
$\cos t\ge\cos1>1/2$ between $0$ and $|z|$; the upper one
follows by integrating $|\cos t|\le1$.
The lower exponential factor in \eqref{eq:e2e-separation} uses
$a\le1$ and the lower bound for $t_h$ in
\eqref{eq:e2e-marginal-family}.

\subsection{Information in the covariates and the responses}

Let $Q_{\bm\omega}$ denote the law of one observed pair under
$(p_{\bm\omega},f_{\bm\omega})$. Suppose $\bm\omega'$ differs from $\bm\omega$ only at $(j,k)$. The sine function is $1$-Lipschitz, and hence
\begin{align}
    \left|p_{\bm\omega}(\mathbf x)-p_{\bm\omega'}(\mathbf x)\right| &= \left| \varepsilon\sin\left(\sum_{j=1}^s u_{\bm\omega,j}(x_j)\right) - \varepsilon\sin\left(\sum_{j=1}^s u_{\bm\omega',j}(x_j)\right) \right|, \\
    &= \varepsilon\left| \sin\left(\sum_{j=1}^s u_{\bm\omega,j}(x_j)\right) - \sin\left(\sum_{j=1}^s u_{\bm\omega',j}(x_j)\right) \right|, \\
    &\leq \varepsilon \left| \sum_{j=1}^s \left( u_{\bm\omega,j}(x_j) - u_{\bm\omega',j}(x_j) \right) \right|, \\
    &\leq \varepsilon \left| \sum\limits_{j=1}^s \left( a h^\gamma \sum_{k=1}^M\omega_{jk}b_k(x_j) - ah^\gamma\sum_{k=1}^M\omega'_{jk}b_k(x_j) \right) \right|, \\
    &\leq \varepsilon a h^\gamma \left| \omega_{jk} - \omega'_{jk} \right| \left| b_k(x_j) \right|, \\
    &\leq 2 \varepsilon a h^\gamma | b_k(x_j) |.
\end{align}
Since $p_{\bm\omega'}\ge1-\varepsilon$, Lemma~\ref{lem:e2e-kl-bounds} yields
\begin{align}
    \operatorname{KL}(P_{\bm\omega} \| P_{\bm\omega'}) &\leq \int_{[0,1]^d} \frac{ \left( p_{\bm\omega}(\mathbf x) - p_{\bm\omega '}(\mathbf x) \right)^2 }{ p_{\bm \omega'}(\mathbf x) } \, d\lambda_d(\mathbf x), \\
    &\leq \int_{\mathcal B_k} \frac{ \left( 2 \varepsilon a h^\gamma b_k(x) \right)^2 }{ 1 - \varepsilon } \, dx, \\
    &\leq \frac{4\varepsilon^2a^2B_2}{1-\varepsilon} h^{2\gamma+1}.
\end{align}
Moreover, \eqref{eq:e2e-affine-family} gives exactly $f_{\bm\omega} -f_{\bm\omega'} = \pm2H_k(x_j)$. Thus
\begin{equation}
    \|f_{\bm\omega} - f_{\bm\omega'}\|_{L^2(P_{\bm\omega})}^2 \le 4\kappa_{\max}v_h^2.
\end{equation}
The Gaussian KL identity in Lemma~\ref{lem:e2e-gaussian-kl} and the previous inequalities give
\begin{equation}
    \operatorname{KL}(Q_{\bm \omega}^{\otimes n} \| Q_{\bm\omega'}^{\otimes n}) \leq \frac{4 n \varepsilon^2a^2B_2}{1-\varepsilon} h^{2\gamma+1} + \frac{2n\kappa_{\max}v_h^2}{\sigma^2}.
\end{equation}
Then, the inequality \eqref{eq:e2e-separation} gives
\begin{align}
    \operatorname{KL}(Q_{\bm \omega}^{\otimes n} \| Q_{\bm\omega'}^{\otimes n})
    &\le a^2 n h^{2\gamma+1} \underbrace{\left[
       \frac{4\varepsilon^2B_2}{1-\varepsilon}
       +\frac{2\kappa_{\max}\tau^2\varepsilon^2B_2}
              {\sigma^2(1-\varepsilon^2)^2} \right]}_{ C } \\
    &\le C n h^{2\gamma+1} a^2.
\end{align}
Moreover, by the choice of $M$ and the definition of $h$, we have
\begin{align*}
    n h^{2\gamma+1} &= n \left( \frac{1}{4M} \right)^{ 2\gamma + 1 } \\
    &\leq 4^{ -(2\gamma + 1) } \times n \times n^{ - \frac{2\gamma + 1}{2\gamma + 1} } \\
    &\leq 1
\end{align*}
All quantities inside the brackets are fixed constants. We finally select $a$ as follows
\begin{equation}
    0<a\le\min\left\{1,\frac{L}{C_\gamma\varepsilon}, \frac1{\sqrt{8C}}\right\},
\end{equation}
which ensures that
\begin{equation}
    \operatorname{KL}(Q_{\bm \omega}^{\otimes n} \| Q_{\bm\omega'}^{\otimes n}) \leq \frac{1}{8}.
\end{equation}

\subsection{Assouad's argument for the rough-density contribution}

For every estimator and vertex, the uniform bounds on $p$ give
\begin{equation}
    \|\widehat f - f_{\bm\omega}\|_{L^2(P_{\bm\omega})}^2 \ge\kappa_{\min} \|\widehat f - f_{\bm\omega}\|_{L^2(\lambda_d)}^2.
\end{equation}
If an estimator has infinite risk at a vertex, the desired lower bound already holds. Otherwise it can be regarded, up to null sets, as an $L^2(\lambda_d)$-valued estimator on this finite family. The observation laws here are mutually absolutely continuous, so the finitely many exceptional null sets can be
removed simultaneously.

Apply Lemma~\ref{lem:e2e-assouad} to
\eqref{eq:e2e-affine-family}, using directions $-H_k(x_j)$,
$N=sM$, and $\alpha=1/8$. Restricting the supremum to this
finite admissible family gives
\[
    \mathfrak M(n,d,\mathcal C_{\beta,\gamma})
    \ge\frac{3\kappa_{\min}}8 sM v_h^2
    \ge c\,s h^{2\gamma},
\]
where $Mh=1/4$ and \eqref{eq:e2e-separation} were used.
For $x\ge1$, $\lfloor x\rfloor\ge x/2$. Therefore we have
\begin{equation}
    s h^{2\gamma} \ge \frac12\min\{d h^{2\gamma},1\}.
\end{equation}
Finally, $n^{ \frac{1}{2\gamma + 1} }\le M \le2n^{\frac{1}{2\gamma + 1}}$ for every $n \ge 1$, so
\[
    8^{-2\gamma}n^{ - \frac{2\gamma}{2\gamma + 1} }
    \le h^{2\gamma}
    \le4^{-2\gamma}n^{- \frac{2\gamma}{2\gamma + 1}}.
\]
Combining the last three displays proves, for all integers $n,d\ge1$,
\begin{equation}\label{eq:e2e-rough-alone}
    \mathfrak M(n,d,\mathcal C_{\beta,\gamma})
    \gtrsim \min\{d \, n^{-\frac{2\gamma}{2\gamma + 1}},1\}.
\end{equation}

A sufficient condition to obtain the desired rate of $ \mathfrak M(n,d,\mathcal C_{\beta,\gamma}) \gtrsim d n^{-\frac{2\gamma}{2\gamma + 1}}$ is having $d\, n^{-\frac{2\gamma}{2\gamma + 1}} \leq 1 $ for $n$ large enough, so it suffices to assume
\begin{equation}
    d = o\left( n^{ \frac{2 \gamma}{ 2\gamma + 1 } } \right),
\end{equation}
which leads to the desired rate and tends to 0.

\end{document}